    \let\Cref\crtCref
    \let\cref\crtcref
\newcounter{algline}
\crefname{algline}{line}{lines}
\Crefname{algline}{Line}{Lines}
  {\setcounter{algline}{0}\begin{algorithmic}[#1]}%
  {\end{algorithmic}}
\let\ALG@oldState\State
\renewcommand{\State}{\refstepcounter{algline}\ALG@oldState}
\let\ALG@oldIf\If
\renewcommand{\If}{\refstepcounter{algline}\ALG@oldIf}
\let\ALG@oldElsIf\ElsIf
\renewcommand{\ElsIf}{\refstepcounter{algline}\ALG@oldElsIf}
\let\ALG@oldElse\Else
\renewcommand{\Else}{\refstepcounter{algline}\ALG@oldElse}
\let\ALG@oldEndIf\EndIf
\renewcommand{\EndIf}{\refstepcounter{algline}\ALG@oldEndIf}
\let\ALG@oldFor\For
\renewcommand{\For}{\refstepcounter{algline}\ALG@oldFor}
\let\ALG@oldEndFor\EndFor
\renewcommand{\EndFor}{\refstepcounter{algline}\ALG@oldEndFor}
\let\ALG@oldForAll\ForAll
\renewcommand{\ForAll}{\refstepcounter{algline}\ALG@oldForAll}
\definecolor{blue}{HTML}{0077BB}
\definecolor{cyan}{HTML}{33BBEE}
\definecolor{green}{HTML}{009988}
\definecolor{orange}{HTML}{EE7733}
\definecolor{red}{HTML}{CC3311}
\definecolor{magenta}{HTML}{EE3377}
\definecolor{grey}{HTML}{BBBBBB}
\DeclareMathOperator*{\argmin}{arg\,min}
\DeclareMathOperator*{\EE}{\mathbb{E}}
\DeclareMathOperator*{\PP}{\mathbb{P}}
\newcommand{\wrt}{\textit{w.r.t.}\xspace}
\newcommand{\ie}{\textit{i.e.}\xspace}
\newcommand{\iid}{\textit{i.i.d.}\xspace}
\newcommand{\eg}{\textit{e.g.}\xspace}
\newcommand{\KL}{\mathrm{KL}}
\newcommand{\kl}{\mathrm{kl}}
\newcommand{\lnplus}{\ln^{\!+}}
\renewcommand{\L}{L}
\newcommand{\D}{D}
\newcommand{\Lemp}{\hat{\L}}
\renewcommand{\a}{\textnormal{\textsc{a}}}
\newcommand{\La}{\L_{\a}}
\newcommand{\Laemp}{\Lemp_{\S_\a}}
\newcommand{\Laprime}{\L_{\a}}
\newcommand{\Laempprime}{\Lemp_{{\S'}_{\!\!\a}}}
\newcommand{\xbf}{\mathbf{x}}
\newcommand{\Hcal}{\mathcal{H}}
\newcommand{\Acal}{\mathcal{A}}
\newcommand{\Ncal}{\mathcal{N}}
\newcommand{\defeq}{:=}
\renewcommand{\S}{\mathcal{S}}
\newcommand{\Sp}{\S_\P}
\newcommand{\X}{\mathcal{X}}
\newcommand{\x}{\xbf}
\newcommand{\lambdaa}{\lambda_{\a}}
\newcommand{\N}{\mathbb{N}}
\newcommand{\R}{\mathbb{R}}
\newcommand{\Y}{\mathcal{Y}}
\newcommand{\XY}{\X{\times}\Y}
\newcommand{\A}{\Acal}
\newcommand{\cardA}{n}
\newcommand{\T}{\mathcal{T}}
\newcommand{\Z}{\mathcal{Z}}
\newcommand{\Dacond}{D_{|\a}}
\newcommand{\Dac}{\Dacond}
\newcommand{\M}{\mathcal{M}}
\newcommand{\prior}{\pi}
\newcommand{\Q}{Q}
\renewcommand{\P}{P}
\newcommand{\risk}{\mathcal{R}}
\newcommand{\riskempB}{\widehat{\risk}_{\batch}}
\newcommand{\riskemp}{\widehat{\risk}_{\S}}
\newcommand{\riskempprime}{\widehat{\risk}_{\S'}}
\newcommand{\riskempprimej}{\widehat{\risk}_{\S'_j}}
\newcommand{\env}{E}
\newcommand{\ma}{m_{\a}}
\newcommand{\distA}{\prior}
\newcommand{\Eemp}{\widehat{E}}
\newcommand{\empBound}{\mathcal{B}}
\newcommand{\empBoundCorUn}{\mathcal{B}_{\mathrm{cor1}}}
\newcommand{\empBoundCorDeux}{\mathcal{B}_{\mathrm{th3}}}
\newcommand{\empBoundMham}{\mathcal{B}_{\mathrm{th1}}}
\newcommand{\batch}{U}
\newcommand{\hthetatildeP}{h_{\thetatilde_{\P}}}
\newcommand{\configs}{\mathcal{C}}
\newcommand{\config}{c}
\newcommand{\thetatilde}{\tilde{\theta}}
\newcommand{\hthetatilde}{h_{\thetatilde}}
\newcommand{\thetaP}{\theta_{\P}}
\newcommand{\priorset}{\mathcal{P}}
\declaretheorem[name=Assumption]{assumption}
\declaretheorem[name=Lemma]{lemma}
\crefname{theorem}{Th.}{Ths.}
\Crefname{theorem}{Theorem}{Theorems}
\crefname{assumption}{Assum.}{Assums.}
\Crefname{assumption}{Assumption}{Assumptions}
\crefname{equation}{Eq.}{Eqs.}
\Crefname{equation}{Equation}{Equations}
\crefname{definition}{Def.}{Defs.}
\Crefname{definition}{Definition}{Definitions}
\crefname{corollary}{Cor.}{Cors.}
\Crefname{corollary}{Corollary}{Corollaries}
\crefname{algorithm}{Algo.}{Algos.}
\Crefname{algorithm}{Algorithm}{Algorithms}
\newif\ifnotappendix
\begin{document}

\runningauthor{Hind Atbir, Farah Cherfaoui, Guillaume Metzler, Emilie Morvant, Paul Viallard}

\twocolumn[

\aistatstitle{PAC-Bayesian Bounds on Constrained $f$-Entropic Risk Measures}

\aistatsauthor{Hind Atbir$^{1,\diamond}$\\ { hind.atbir@univ-st-etienne.fr} \And Farah Cherfaoui$^{1,\diamond}$\\farah.cherfaoui@univ-st-etienne.fr\And  Guillaume Metzler$^{2}$\\guillaume.metzler@univ-lyon2.fr \AND Emilie Morvant$^{1}$\\emilie.morvant@univ-st-etienne.fr  \And Paul Viallard$^{3}$\\paul.viallard@inria.fr}

\medskip 

\aistatsaddress{
$^1$ Université Jean Monnet, CNRS, Institut d'Optique Graduate School, Laboratoire Hubert Curien UMR 5516,\\ 
Inria$^{\diamond}$,  F-42023, Saint-Etienne, France\\
$^2$ Université Lumière Lyon 2, Universite Claude Bernard Lyon 1, ERIC, 69007, Lyon, France 
\\
$^3$ Univ Rennes, Inria, CNRS IRISA - UMR 6074, F35000 Rennes, France
} ]

\begin{abstract}%
PAC generalization bounds on the risk, when expressed in terms of the expected loss, are often insufficient to capture imbalances between subgroups in the data.
To tackle this limitation, we introduce a new family of risk measures, called \emph{constrained $f$-entropic risk measures}, which enable finer control over distributional shifts and subgroup imbalances via $f$-divergences, and include the Conditional Value at Risk (CVaR), a well-known risk measure.
We derive both classical and disintegrated PAC-Bayesian generalization bounds for this family of risks, providing the first \emph{disintegrated} PAC-Bayesian guarantees beyond standard risks.
Building on this theory, we design a self-bounding algorithm minimizing our bounds directly, yielding models with guarantees at the subgroup level.
We empirically demonstrate the usefulness of our approach.
\end{abstract}%

\section{INTRODUCTION}
A machine learning task is modeled by a fixed but unknown joint probability distribution over $\XY$ denoted by $\D$, 
where~$\X$ is the input space and $\Y$ is the output space.
Given a family of hypotheses~$\Hcal$, consisting of predictors $h\!:\! \X\!\to\!\Y$, the learner aims to find the hypothesis $h\!\in\!\Hcal$ that best captures the relationship between the input space $\X$ and the output space $\Y$.
In other words, the learned hypothesis $h$ must correspond to the one that minimizes the true risk defined by 
\begin{align*}
\L(h) \defeq 
\EE_{(\xbf,y)\sim \D}\ell(y, h(\x)), 
\end{align*}
with $\ell\!:\! \Y {\times} \Y \!\to\! [0,1]$ a (measurable) loss function to assess the quality of $h$.
Since $\D$ is unknown, the true risk cannot be computed, so we need tools to estimate it and to assess the quality of the selected hypothesis $h\!\in\!\Hcal$.
To do so, a learning algorithm relies on a learning set~$\S$ composed of examples drawn \iid from~$\D$, and minimizes the empirical risk defined by 
\begin{align*}
\textstyle \Lemp(h) \defeq \Lemp_{\S}(h) \defeq \frac{1}{|\S|}\sum_{(x,y)\in\S}\ell(y, h(\x)).
\end{align*}
Thus, a central question in statistical learning theory is how well the empirical risk $\Lemp(h)$ approximates the true risk $\L(h)$.
This is commonly captured by the generalization gap defined as a deviation between $\L(h)$ and $\Lemp(h)$, which can be upper-bounded with a Probably Approximately Correct (PAC) generalization bound~\citep{valiant1984theory}. 
Several theoretical frameworks have been developed to provide such bounds, notably uniform-convergence-based bounds~\citep{bartlett2002rademacher,vapnik1971uniform}.
In this paper, we focus on the PAC-Bayesian framework~\citep{shawetaylor1997pac,mcallester1998some}, which is able to provide tight and often easily computable generalization bounds.
As a consequence, a key feature of PAC-Bayesian bounds is that they can be optimized during the learning process, giving rise to self-bounding algorithms~\citep{freund1998self}\footnote{Self-bounding algorithms have recently regained interest in PAC-Bayes (see, \eg, \citet{rivasplata2022pac,viallard2023pac}).}.
Such algorithms not only return a model but also provide its own generalization guarantee: The bound is optimized.

However, when the distribution $\D$ exhibits \mbox{imbalances}, for example, when subgroups of the population may be under (or over) represented, the classical generalization gap generally fails to capture these imbalances.
This issue arises in many practical scenarios, including class imbalance. 
In fact, when the learning set $\S$ is sampled \iid from $\D$, the imbalances are likely to be replicated, resulting in learning a hypothesis with a high error rate for underrepresented subgroups or classes.
A way to address such under-representation is to partition the data into subgroups and compute a re-weighted risk across the subgroups.
We formalize this scenario as follows.
Let $\Acal$ be an arbitrary partition of the data space $\XY$, then $\Dac$ is the conditional distribution on a subset $\a\!\in\!\Acal$, and the associated true risk on $\a$ is
\begin{align*}
\La(h) \defeq \EE_{(\x,y)\sim \Dac}\ell(y, h(\x)).
\end{align*} 
Here, we assume that the learning set is partitioned\footnote{
We assume every subgroup in $\A$ is represented in $\S$.} as $\S\! =\! \{\S_\a\}_{\a\in\A}$.
The empirical risk of a subgroup $\a$ is evaluated on $\S_\a$ of size $\ma$ with
\begin{align*}
\Laemp(h) \defeq \frac{1}{\ma} \sum_{(\x, y) \in \S_\a} \ell(y, h(\x)),
\end{align*}
More precisely, we consider the following risk measure enabling the re-weighting of the subgroups' risks\footnote{Note that \Cref{eq:sup-risk} is a distributionally robust optimization problem \citep{scarf1957min,delage2010distributionally}.}:
\begin{align}
\label{eq:sup-risk}
\risk(h) \defeq  \sup_{\rho \in E} \, \EE_{\a\sim \rho} \, \La(h), \quad \text{with}\quad  E \subseteq \M(\Acal),
\end{align}
where $\M(\Acal)$ is the set of probability measures on~$\Acal$. 
Here, $\rho$ is a probability distribution over the subgroups, controlling the weight of each subgroup loss $\La(h)$, and~$E$ denotes a set of admissible distributions.

In this paper, we go beyond previous PAC-Bayesian generalization bounds by considering a new class of risk measures, which we call \emph{constrained $f$-entropic risk measures}, that go beyond the traditional vanilla true/empirical risks.
The key idea is to constrain the set $E$ in \Cref{eq:sup-risk} to better control the subgroup imbalances while taking into account the distribution shifts thanks to a $f$-divergence.
Our definition extends the Conditional Value at Risk \citep[CVaR, see][]{rockafellar2000optimization} while keeping the flexibility of $f$-entropic risk measures \citep{ahmadi2012entropic}.
Then, we propose disintegrated (and classical) PAC-Bayesian generalization bounds for constrained $f$-entropic risk measures in two regimes:
\textit{(i)} when the set of subgroups can be smaller than the learning set, and \textit{(ii)} when there is only one example per subgroup.
Then, we design a self-bounding algorithm that minimizes our disintegrated PAC-Bayesian bound associated with each regime. 
Finally, we illustrate the effectiveness of our bounds and self-bounding algorithm in both regimes.

\textbf{Organization of the paper.}
\Cref{sec:preliminaries} introduces notations, recalls on PAC-Bayes, $f$-entropic risk measures, and related works. 
\Cref{sec:risk} defines our constrained $f$-entropic risk measures, and \Cref{sec:bounds} derives our new PAC-Bayesian bounds. 
\Cref{sec:self-bounding} presents the associated self-bounding algorithm, evaluated in \Cref{sec:expe}.

\section{PRELIMINARIES}
\label{sec:preliminaries}

\subsection{Additional Notations\protect\footnote{A summary table of notations is given in Appendix~\ref{sec:notations}.}}
 \looseness=-1
We consider learning tasks modeled by an unknown distribution $\D$ on $\XY$.
A learning algorithm is provided with a learning set $\S \!=\!\{(\x_i,y_i)\}_{i=1}^m$ consisting of $m$ examples $(\x_i,y_i)$ drawn \iid from~$\D$; we denote by~$\D^m$ the distribution of such a \mbox{$m$-sample}.
We assume $\cardA$ subgroups, defining a partition $\A\!=\!\{\a_1,\dots,\a_n\}$ of the data in~$D$.
To simplify the reading, $\a$ denotes the index of the subgroup in $\A$.
Then, we assume that the learning set 
can be partitioned into subgroups $\S \!=\! \{\S_\a\}_{\a\in\A}$, such that $\forall \a\in\A$, we have $\S_\a \!=\! \{(\x_j,y_j)\}_{j=1}^{\ma}$, and  the size of $\S_\a$ is $\ma\!\in\!\N^{*}$.
Therefore, the learner's objective is to minimize the true risks $\La(h)$ of each subgroup aggregated with the risk $\risk(h)$ as defined in \Cref{eq:sup-risk}.
The set $E$ will be further specialized in \Cref{sec:risk}.

\subsection{PAC-Bayes in a Nutshell} 

\looseness=-1
We specifically work in the setting of the PAC-Bayesian theory.
We assume a \textit{prior} distribution~$\P$ over the hypothesis space $\Hcal$, which encodes an \textit{a priori} belief about the hypotheses in $\Hcal$ before observing any data.  
Then, given $\P$ and a learning set $\S \!\sim\! \D^m$, the learner constructs a \textit{posterior} distribution $\Q_\S \!\in\! \M(\Hcal)$.  
We assume that $\Q_\S\!\ll\!\P$, \ie, the posterior $\Q_{\S}$ is absolutely continuous \wrt the prior $\P$.
In practice, this condition ensures that the corresponding densities have the same support.
Depending on the interpretation, $\Q_\S$ can be used in the two following ways.

In \textbf{classical PAC-Bayes}, $\Q_\S$ defines a randomized predictor\footnote{The randomized predictor is called the Gibbs classifier.}, which samples $h \!\sim\! \Q_\S$ for each input~$\xbf$, and then outputs $h(\xbf)$.
The generalization gap is then the deviation between the expected true risk $\EE_{h \sim \Q_\S} \L(h)$ and the expected empirical risk $\EE_{h \sim \Q_\S} \Lemp(h)$.

In \textbf{disintegrated (or derandomized) PAC-Bayes}, 
$\Q_\S\!=\!\Phi(\S,\P)$ is learned by a deterministic algorithm\footnote{More formally, $\Q_{\S}$ can be seen as a Markov kernel.}  $\Phi: (\XY)^m \!\times\! \M(\Hcal)\! \rightarrow \! \M(\Hcal)$.
Then, a single deterministic hypothesis $h$ drawn from $\Q_\S$ is considered.
This implies that the generalization gap measures the deviation between $\L(h)$ and $\Lemp(h)$ for this hypothesis $h$.

Historically, PAC-Bayes has focused on the randomized risk~\citep{shawetaylor1997pac,mcallester1998some}.
A seminal result is the bound of \citet{mcallester2003pac}, improved by \citet{maurer2004note}, stating that with probability at least $1\!-\!\delta$ over $\S\!\sim\!\D^m$, we have
\begin{align} 
\nonumber&\forall \Q \in\M(\Hcal),\\[-2.5mm]
&\EE_{h \sim \Q}L(h)  -  \EE_{h\sim \Q}\Lemp(h) \le \sqrt{\frac{ \KL (\Q\|\P) + \ln \frac{2\sqrt{m}}{\delta}}{2m}},
\label{eq:mcallester-bound}
\end{align}
where $\KL(\Q\|\P){\defeq}\!\EE_{h\sim\Q}\ln\!\big(\frac{d\Q}{d\P}(h)\big)$, and $\frac{d\Q}{d\P}$ the Radon-Nikodym derivative.
If $\Q\!\ll\!\P$, then $\KL(\Q\|\P)$ is the KL-divergence; otherwise $\KL(\Q\|\P)\!=\!+\infty$.
While the randomized risk may be meaningful 
 (\eg, when studying randomized predictors~\citep{dziugaite2017computing} or majority votes~\citep{germain2009pac}), in practice, we often deploy a single deterministic model.
To tackle this, disintegrated PAC-Bayes~\citep{blanchard2007occam,catoni2007pac,viallard2024general,viallard2024leveraging} has been proposed, where generalization  
bounds apply directly to a single hypothesis $h\! \sim\! \Q_\S$, after learning $\Q_\S$.
For instance, \citet{rivasplata2020pac} derived bounds of the following form. 
With probability at least $1\!-\!\delta$ over $\S\!\sim\!\D^m$ \textbf{and} $h\!\sim\!\Q_\S$, we have
\begin{align} 
 L(h) - \Lemp(h) \le \sqrt{\frac{1}{2m} \left[\lnplus\!\!\left(\frac{d\Q_\S}{d\P}(h)\!\right) + \ln \frac{2\sqrt{m}}{\delta}\right]},
\label{eq:rivasplata-bound}
\end{align}
where $\Q_\S\!=\!\Phi(\S,\P)$, and $\lnplus\!(\cdot) \!=\! \ln(\max(0, \cdot))$, and $\lnplus\!\big(\frac{d\Q_\S}{d\P}(h)\big)$ is the ``disintegrated'' KL-divergence.
Such results are crucial when we seek guarantees for a single deployed model $h$.

In our work, we are not interested in upper-bounding the classical gap between  $\L(h)$ and $\Lemp(h)$.
We want to study the gap between the risk measures:
\begin{align}
\nonumber &\risk(h) =   \sup_{\rho \in E} \EE_{\a\sim \rho}  \La(h)\  \mbox{ and }\ \riskemp(h) =  \sup_{\rho \in E} \EE_{\a\sim \rho}  \Laemp(h),\\
&\mbox{with }  E\subseteq  E_{\alpha} = \left\{ \rho \,  \middle| \,  \rho  \ll  \distA, \text{ and }  \frac{d\rho}{d\distA}  \le  \frac{1}{\alpha} \right\},\label{eq:single-risk}
\end{align}
with $\alpha\!\in\!(0,1]$, and $\distA$ a reference\footnote{To avoid any confusion with PAC-Bayes posterior/prior distributions, we call ``reference distribution'' the distribution $\pi$ of the (constrained) $f$-entropic risk measures.} distribution on the subgroups $\a\!\in\!\A$.
Intuitively, $\alpha$ constrains how much~$\rho$ can deviate from~$\pi$.
We derive in \Cref{sec:bounds}, classical and disintegrated PAC-Bayesian bounds; thus, we are interested in the true randomized risk measures
\begin{align}
\EE_{h\sim\Q} \risk(h) &\defeq \EE_{h\sim\Q} \ \sup_{\rho \in E}\  \EE_{\a\sim \rho}\  \La(h), \label{eq:true-risk}\\
\mbox{or}\qquad \risk(\Q) &\defeq \sup_{\rho \in E}\ \ \EE_{\a\sim \rho} \ \EE_{h\sim\Q} \La(h).
\end{align}
By Jensen's inequality, we have $\risk(\Q)\leq \EE_{h\sim\Q} \risk(h)$.
Furthermore, the associated empirical counterparts are 
\begin{align}
\EE_{h\sim\Q} \riskemp(h) &\defeq \EE_{h\sim\Q} \ \sup_{\rho \in E}\  \EE_{\a\sim \rho}\  \Laemp(h),\\
 \text{or}\qquad \riskemp(\Q) &\defeq \sup_{\rho \in E}\  \EE_{\a\sim \rho}\  \EE_{h\sim\Q} \Laemp(h).\label{eq:emp-risk}
\end{align}

\subsection{\texorpdfstring{$f$}{f}-Entropic Risk Measures in a Nutshell}

\looseness=-1
In Equations~\eqref{eq:single-risk} to~\eqref{eq:emp-risk}, we have to define the right set $E$. 
For example, we can use \mbox{$f$-divergences} \citep{csiszar1963informationstheoretische,csiszar1967information,morimoto1963markov,ali1966general} as follows.
\begin{assumption}
Let $f$ be a convex function with $f(1)\!=\!0$ and $f(0)\!=\!\lim_{t\to 0^+}f(t)$ such that $D_{f}(\rho\|\pi)\!\defeq\!\EE_{\a\sim\pi}\!\big[f\big(\frac{d\rho}{d\pi}(\a)\big)\big]$ is a $f$-divergence. 
Let $\beta\!\geq\!0$. 
We~have
\begin{align*}
 E \defeq E_{f,\beta} \defeq \left\{ \, \rho\  \middle|\  \rho \ll \pi,\ \text{and} \EE_{\a\sim\pi} f\left(\frac{d\rho}{d\pi}(\a)\!\right) \le \beta \right\},
\end{align*}
where $\ll$ denotes absolute continuity and $\pi$ is a reference distribution over $\A$.
\label{ass:classic}
\end{assumption}
\begin{restatable}{definition}{defferm}{\citep{ahmadi2012entropic}}
\label{def:f-entropic}
We say that $\risk$ of \Cref{eq:sup-risk} is a \emph{\mbox{$f$-entropic} risk measure} if $E$ satisfies \Cref{ass:classic}.
\end{restatable}
\looseness=-1
The Conditional Value at Risk (CVaR,\,\citet{rockafellar2000optimization}) is an example of a $f$-entropic risk measure.
Let $\alpha \!\in\!(0{,}1]$ and $g_{\alpha}(x) {\defeq} \iota\!\left[x \!\in\! [0{,} \frac{1}{\alpha}]\right]$ with $\iota[a]{=}0$ if $a$ is true and $+\infty$ otherwise, CVaR is defined~for
\begin{align}
\nonumber  E = E_{g_{\alpha},0} \defeq& \left\{\, \rho  \ \middle| \  \rho \ll \pi, \text{ and } \EE_{\a\sim\pi}g_{\alpha}\!\left(\frac{d\rho}{d\pi}(\a)\!\right) \le 0\right\} \\
\nonumber =& \left\{\, \rho  \ \Big| \  \rho \ll \pi, \text{ and } D_{g_{\alpha}}(\rho\|\pi) \le 0 \right\}\\
=& \left\{ \rho  \ \middle| \  \rho \ll \pi, \text{ and } \frac{d\rho}{d\pi} \le \frac{1}{\alpha}\right\} = E_{\alpha}.\label{eq:set-cvar}
\end{align}
Note that CVaR also belongs to another family of measures known as \emph{Optimized Certainty Equivalents} \citep[OCE,][]{bental1986expected,bental2007old}.\footnote{The link between $f$-entropic risk measures and OCEs is detailed in Appendix~\ref{sec:link-f-entropic-risk-oce}.}

\subsection{Related Work}

\looseness=-1
There exist some generalization bounds related to ours.
Unlike our setting, which allows partitioning $\S$ into $n$ subgroups $\A$, these existing bounds hold for $|\A|\!=\!n\!=\!m\!=\!|\S|$, \ie, there is only one example per subgroup.

\looseness=-1
Apart from PAC-Bayes bounds, generalization bounds that focus on the worst-case generalization gap, $$\sup_{h\in\Hcal}\big|\risk(h)-\riskemp(h)\big|,$$ have been introduced.
For example, \citet{curi2020adaptive} derived an upper bound on the CVaR, relying on \citet{brown2007large}'s concentration inequality.
Their bound holds either for finite hypothesis sets or for infinite hypothesis sets, but with a bound depending on covering numbers or \citet{pollard1984convergence}'s pseudo-dimension.
Another example is \citet{lee2020learning}'s generalization bound for OCEs, which relies on the Rademacher complexity associated with $\Hcal$ \citep[see, \eg,][]{bartlett2002rademacher}.
In these examples, the bounds are not easy to manipulate in practice.

\looseness=-1
The bound that is most closely related to our bounds in \Cref{sec:bounds} is the classical PAC-Bayes bound of \citet{mhammedi2020pac} on the CVaR (recalled in \Cref{thm:mhammedi}).
Their bound holds only with one example per subgroup with a uniform \textit{reference} distribution $\pi$.
\begin{restatable}[\small PAC-Bayesian Bound on CVaR \citep{mhammedi2020pac}]{theorem}{thmhammedi}
\label{thm:mhammedi}
   For any distribution $\D$ over $\XY$,
    for any prior $\P \! \in \! \M(\Hcal)$, 
    for any loss $\ell :  \Y \!\times \Y \!\rightarrow [0,1]$,
    for any $\alpha \!\in \!(0,1]$,
    for any $\delta\!\in\!(0,1]$,
   with probability at least $1 {-} \delta$ over $\S {\sim} \D^m$, we have for all $\Q \! \in \! \M(\Hcal)$, 
\begin{align*}
         \EE_{h \sim \Q}\, \risk(h) \le \riskemp(\Q) \ + \qquad \qquad \qquad \qquad \qquad \qquad \quad &\\
          2\, \riskemp(\Q)\! \left[ \! \sqrt{\frac{1}{2 \alpha m }\ln\frac{2\lceil \log_2[\frac{m}{\alpha}] \rceil}{\delta}}  \!+\!  \frac{1}{3m\alpha}\ln\frac{2\lceil \log_2[\frac{m}{\alpha}] \rceil}{\delta}\! \right]&\\
      +\ \sqrt{\frac{27}{{5 \alpha m}}\riskemp(\Q)\!\left[\KL (\Q \| P) {+} \ln\tfrac{2\lceil \log_2[\frac{m}{\alpha}] \rceil}{\delta}\right]  
        }&\\
         +\  \frac{27}{5 \alpha m} 
      \left[\KL (\Q \| P) {+} \ln\frac{2\lceil \log_2[\frac{m}{\alpha}] \rceil}{\delta}\right], &
\end{align*}
 \begin{align*}     
&\text{where}\ \ \EE_{h\sim\Q} \risk(h) \defeq \EE_{h\sim\Q}\ \ \sup_{\rho \in E}\  \EE_{(\x,y)\sim \rho}\  \ell(y, h(\x))\\
&\text{with}\ \  E\!=\!\left\{ \ \rho  \ \big| \  \rho \ll \D, \text{ and } \frac{d\rho}{d\D} \le \frac{1}{\alpha}\right\},\\
&\text{and}\ \ \riskemp(\Q) \defeq \sup_{\rho \in \widehat{E}}\ \ \EE_{\a\sim \rho}\  \EE_{h\sim\Q} \ell(y_\a, h(\x_\a)), \\
&\text{with}\ \ \widehat{E}\!=\!\left\{ \ \rho  \ \big| \  \rho \ll \pi, \text{ and } \frac{d\rho}{d\pi} \le \frac{1}{\alpha}\right\},\\
&\text{where}\ \ \pi(\a)=\frac{1}{m}.
\end{align*}
\end{restatable}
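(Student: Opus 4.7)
The plan is to reduce the CVaR to a family of one-dimensional ``tilted'' risks via the Rockafellar--Uryasev dual representation
\[
\risk(h) = \inf_{\mu \in [0,1]} \left\{ \mu + \tfrac{1}{\alpha}\, \EE_{(\xbf,y)\sim \D}\, [\ell(y,h(\xbf))-\mu]^+ \right\},
\]
and analogously $\riskemp(\Q)$ as the minimum over $\mu$ of the empirical counterpart, so that controlling CVaR reduces to controlling, for a well-chosen $\mu$, the thresholded loss $\phi_\mu(h,z) \defeq [\ell(y,h(\xbf)) - \mu]^+ \in [0,1]$. For each fixed $\mu$ I would apply a PAC-Bayes empirical Bernstein-type inequality to $\phi_\mu$, exploiting the variance-to-mean bound $\EE[\phi_\mu^2] \le \EE[\phi_\mu]$ (since $\phi_\mu \in [0,1]$), which after division by $\alpha$ yields the $\sqrt{\riskemp(\Q)/(\alpha m)}$ scaling visible in the statement and ultimately the mixed $\sqrt{\riskemp(\Q)} + \riskemp(\Q)$ structure.

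To make the bound hold simultaneously at the (random) minimizer $\hat\mu$ of the empirical dual, the second key ingredient is a peeling argument: discretize the admissible range of $\mu$ on a geometric grid of $\lceil \log_2(m/\alpha) \rceil$ points (say $\{2^{-k}\}_{k=0}^{\lceil \log_2(m/\alpha) \rceil}$), take a union bound across the grid -- which is exactly what produces the $\ln\tfrac{2\lceil \log_2(m/\alpha)\rceil}{\delta}$ factor -- and verify that snapping $\hat\mu$ to the nearest grid point costs at most $O(1/(m\alpha))$, absorbable in the Bernstein remainder; values of $\mu$ below $\alpha/m$ can be handled separately since $[\ell-\mu]^+$ then differs from $\ell$ by at most $\alpha/m$. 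Combining these pieces, one instantiates the uniform-in-$\mu$ bound at $\hat\mu$, which makes the right-hand side equal to $\riskemp(\Q)$ plus the Bernstein corrections, and bounds the left-hand side by the Rockafellar--Uryasev infimum at the same $\mu$; Jensen's inequality applied to the convex map $\mu \mapsto \mu + \tfrac{1}{\alpha}\EE[\phi_\mu]$ then pulls $\EE_{h\sim \Q}$ inside so that $\EE_{h\sim \Q}\risk(h)$ appears on the left. The $\KL$-free term $2\,\riskemp(\Q)\bigl[\sqrt{\cdot}+\cdot\bigr]$ in the statement accounts for the concentration of the empirical quantile $\hat\mu$ around the true $\mathrm{VaR}_\alpha$, obtained via a multiplicative empirical-Bernstein argument without any change-of-measure.

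The hard part will be the peeling step: discretizing $\mu$ on a grid of only $\lceil \log_2(m/\alpha)\rceil$ points while keeping the rounding error dominated by the Bernstein rates, and orchestrating the union bound so that this log factor appears only inside a $\ln(1/\delta)$. Bounding the variance of $\phi_\mu$ by $\riskemp(\Q)$ (rather than by the coarse $1/\alpha$) is the other delicate point: it is precisely what produces the $\sqrt{\riskemp(\Q)\,\KL}$ and $\riskemp(\Q)$ prefactors instead of an $\alpha$-only rate. The PAC-Bayes change-of-measure for fixed $\mu$ is standard and is essentially the routine ingredient.
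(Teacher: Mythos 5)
You should first note that the paper does not actually prove this statement: Theorem~\ref{thm:mhammedi} is recalled verbatim from \citet{mhammedi2020pac} as a point of comparison, and no proof appears anywhere in the appendix (the authors only remark that the original proof ``involves concentration inequalities tailored for CVaR''). So there is no in-paper argument to measure your attempt against; the relevant benchmark is the original proof of Mhammedi et al.

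Measured against that benchmark, your sketch reconstructs the right architecture --- the Rockafellar--Uryasev dual representation, a variance-sensitive (empirical-Bernstein-style) PAC-Bayes inequality exploiting $\EE[\phi_\mu^2]\le\EE[\phi_\mu]$, and a union bound over a grid of size $\lceil \log_2(m/\alpha)\rceil$ that accounts for the $\ln\frac{2\lceil \log_2(m/\alpha)\rceil}{\delta}$ factor --- and the direction of your Jensen step ($\EE_{h\sim\Q}$ of an infimum is at most the infimum of the expectation) is sound. But as written this is a plan, not a proof, and the two places you yourself flag as ``the hard part'' are precisely where all of the technical content of the original result lives. Concretely: (i) you never establish the uniform-over-the-grid statement, nor verify that snapping the random minimizer $\hat\mu$ to the grid costs only a term absorbable in the Bernstein remainder --- in the original argument the peeling is organized around the scale of the empirical CVaR itself, not a fixed dyadic grid on $\mu$, and getting the rounding error to be multiplicative in $\riskemp(\Q)$ rather than additive of order $1/(m\alpha)$ is exactly what makes the bound non-vacuous when $\riskemp(\Q)$ is small; (ii) the KL-free term $2\,\riskemp(\Q)[\sqrt{\cdot}+\cdot]$ comes from a dedicated concentration inequality for the empirical CVaR estimator (the ``tailored'' inequality the paper alludes to), not merely from concentration of the empirical quantile $\hat\mu$ around $\mathrm{VaR}_\alpha$, and you give no argument producing its multiplicative-in-$\riskemp(\Q)$ form; (iii) none of the constants ($27/5$, $1/3$, the factor $2$) are derived, so the specific inequality in the statement is not actually obtained. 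Until the peeling lemma and the variance-to-empirical-CVaR bound are stated and proved, the proposal does not yet constitute a proof of the theorem.
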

\Cref{thm:mhammedi} upper-bounds the expected true CVaR by its empirical counterpart and terms that depend on the KL-divergence between \textit{posterior} and \textit{prior} over $\Hcal$.
Note that contrary to our bounds, \Cref{thm:mhammedi} does not hold for other measures.
This is due to the proof that involves concentration inequalities tailored for CVaR, making extensions to other measures hard to obtain.

Another related framework is \textit{Group Distributionally Robust Optimization} \citep[Group DRO,][]{sagawa2019distributionally}, which considers a risk measure defined as the maximum of the expected loss on each subgroup. 
\citet{sagawa2019distributionally} proposes a learning procedure based on the principle of structural risk minimization \citep{vapnik1991principles}, where the regularization term estimates the generalization gap and mainly depends on a tunable hyperparameter.
Our work differs in two ways: \textit{(i)} we aggregate subgroup expected loss using the worst-case distribution in a ball defined by a $f$-entropic divergence and constrained by $\alpha$, \textit{(ii)} our learning procedure directly minimizes the generalization gap by optimizing PAC-Bayes bounds, yielding models with built-in generalization guarantees.

\section{CONSTRAINED \texorpdfstring{$f$}{f}-ENTROPIC RISK MEASURES}
\label{sec:risk}
In this paper, we extend the definition of the CVaR to obtain more general PAC-Bayesian generalization bounds (in \Cref{sec:bounds}) for a larger class of risk measures, which we call \textit{constrained $f$-entropic risk measures}.
We construct our new class as a restricted subclass of \mbox{$f$-entropic} risk measures by preserving their flexibility (\Cref{ass:classic}) while considering an additional constraint that controls how much the distribution $\rho$ can deviate from a given reference $\pi$ (\Cref{eq:set-cvar}).
To do so, we assume the following restricted set~$E$.
\begin{assumption} 
\label{ass:set}  
Let $f$ be defined such that $D_{f}(\rho\|\pi)$ is a $f$-divergence. Let $\beta\ge 0$ and $\alpha>0$.
We have
 \begin{align*}
&E = \left\{\, \rho\ \middle| \ \rho\ll\pi\ \text{and}\ \EE_{\a\sim\pi} f\left(\frac{d\rho}{d\pi}(\a)\!\right)\le \beta \right. \\ 
& \phantom{abcdefghijlkmnopq}\text{and}\ \left. \forall \a\in\A,\  \frac{d\rho}{d\pi}(\a)  \le \frac{1}{\alpha} \right\},
\end{align*}
with $\pi$ a reference distribution over $\A$.
\end{assumption}
\looseness=-1
Put into words, $E$ contains all distributions $\rho$ that: 
\textit{(i)}~are absolutely continuous \wrt~$\pi$; 
\textit{(ii)}~have a $f$-divergence with $\pi$ bounded by~$\beta$;
\textit{(iii)}~satisfy a uniform upper bound on the density ratio $\tfrac{d\rho}{d\pi}(\a)\! \le\! \frac1\alpha$.  
We now define the constrained $f$-entropic risk measures.
\begin{restatable}{definition}{defcfermspecial}
\label{def:constrained-f-entropic}
We say that $\risk$ or $\riskemp$ is a \emph{constrained \mbox{$f$-entropic} risk measure} if $E$ satisfies \Cref{ass:set}.
\end{restatable}
A key observation is that a constrained $f$-entropic risk measure corresponds to a standard $f$-entropic risk measure with an augmented function \mbox{$f\!+\!g_\alpha$} (with $g_\alpha$ as defined for \Cref{eq:set-cvar}). 
Indeed, $E$ can be rewritten as 
\begin{align*}
&E =\left\{\, \rho\, \middle| \, \rho\!\ll\!\pi\ \text{and} \EE_{\a\sim\pi} f\left(\frac{d\rho}{d\pi}(\a)\!\right)\le \beta \right. \\ 
& \phantom{abcdefghijklmnopqrstuv}\left.\text{and} \ \EE_{\a\sim\pi}g_{\alpha}\!\left(\frac{d\rho}{d\pi}(\a)\!\right)  \le 0 \right\}
\\
&= \left\{\, \rho\, \middle| \, \rho\!\ll\!\pi\ \text{and} \EE_{\a\sim\pi}\!\left[f\!\left(\frac{d\rho}{d\pi}(\a)\!\right) \!+\! g_{\alpha}\!\left(\frac{d\rho}{d\pi}(\a)\!\right) \right] \!\le\! \beta \right\}\\
&= \Big\{\, \rho\, \Big| \, \rho\!\ll\!\pi\ \text{and}\ D_{f{+}g_{\alpha}}(\rho\|\pi) \le \beta \Big\} = E_{f{+}g_{\alpha},\beta} \!\subseteq\! E_{\alpha},
\end{align*}
where \mbox{$f\!+\!g_\alpha$} generates the divergence $D_{f\!+\!g_{\alpha}}(\rho\|\pi)$, since it is convex, and we have $f(1)\!+\!g_{\alpha}(1)\!=\!0$, and $\lim_{t\to 0^+}f(t)\!+\!g_{\alpha}(t) \!=\! f(0)\!+\!g_{\alpha}(0)$.
Thanks to \Cref{def:constrained-f-entropic}, when $\beta\!\to\! +\infty$, the measure $\rho$ becomes less constrained by $D_f(\rho\|\pi)$, implying that $\risk(h)$ becomes the true CVaR.
Moreover, when $\alpha\!\to\! 0$, the condition \mbox{$\frac{d\rho}{d\pi}(\a)  \le \frac{1}{\alpha}$} does not restrict the set $E$.
In this case, $\risk$ of \Cref{def:constrained-f-entropic} becomes a $f$-entropic risk measure.

\section{PAC-BAYESIAN BOUNDS ON CONSTRAINED \texorpdfstring{$f$}{f}-ENTROPIC RISK MEASURES}
\label{sec:bounds}

\looseness=-1
We present our main contribution, \ie, classical and disintegrated PAC-Bayesian bounds for constrained $f$-entropic risk measures, by distinguishing two regimes.
In \Cref{sec:large}, we focus on the case where the number of subgroups is smaller than the learning set size, \ie,  $|\A|\!\le\! m$. 
For completeness, since the bound of \Cref{sec:large} becomes vacuous when $|\A|\!=\!m$, we consider, in \Cref{sec:small}, the case where each subgroup contains only one example (more specifically, one loss), \ie,  $|\A|\!=\!m$.

\subsection{When \texorpdfstring{$|\A| \le m$}{|A| ≤ m}}
\label{sec:large}

In \Cref{thm:general-ma}, we present both classical and disintegrated \textit{general} PAC-Bayesian bounds.
As commonly done in PAC-Bayes~\citep[\eg,][]{germain2009pac}, these general results are flexible since they depend on a convex deviation function $\varphi$ between true and empirical risks.
Different choices of $\varphi$ result in different instantiations of the bound, allowing us to capture the deviation in different ways. 
Our theorem below upper-bounds the deviations $\varphi\big( \riskemp(\Q), \EE_{h \sim \Q}  \risk(h) \big)$ and $\varphi\big( \riskemp(h), \risk(h) \big)$  for the classical and disintegrated settings, respectively. 
\begin{restatable}{theorem}{thmgeneralboundmaalpha}
\label{thm:general-ma}
\looseness=-1
For any distribution $\D$ on $\XY$,
for any positive, jointly convex function $\varphi(a,b)$ that is non-increasing in $a$ for any fixed $b$,
for any finite set $\A$ of $\cardA$ subgroups, 
for any $\lambdaa >0$ for each $\a\! \in\! \A$,
for any distribution $\pi$ on $\A$,
for any distribution $\P \! \in \! \M(\Hcal)$, 
for any loss $\ell\!:\! \Y  \!\times \!\Y \!\to\! [0,1]$,
for any constrained $f$-entropic risk measure $\risk$ satisfying \Cref{def:constrained-f-entropic}, 
for any $\delta \!\in\! (0,1]$,
for any $\alpha\!\in\! (0,1]$, we have the following bounds.\\[1mm]
\textbf{Classical PAC-Bayes.}
With probability at least $1 {-} \delta$ over $\S {\sim} \D^m$,
for all distributions $\Q \!\in\! \M(\Hcal)$,
we have\ifnotappendix%
\begin{align}
 \nonumber &\displaystyle   \varphi\Big( \riskemp(\Q), \EE_{h \sim \Q}  \risk(h) \Big) \le \EE_{\a \sim \pi}  \frac{1}{\alpha\,\lambdaa} \bigg[\KL(\Q\|\P) \\
&\phantom{abc} + \ln \left(\frac{\cardA}{\delta}   \EE_{\S'  \!\sim  \D^{m}} \EE_{h' \!\sim \P}e^{\lambdaa\varphi\left( \Laempprime(h'), \Laprime(h')\right)}\right)\bigg].
\label{eq:general-ma-classical} 
\end{align}
\else%
\begin{align}
\varphi\Big( \riskemp(\Q), \EE_{h \sim \Q}  \risk(h) \Big) \le \EE_{\a \sim \pi}  \frac{1}{\alpha\,\lambdaa} \bigg[\KL(\Q\|\P) + \ln \left(\frac{\cardA}{\delta}   \EE_{\S'  \!\sim  \D^{m}} \EE_{h' \!\sim \P}e^{\lambdaa\varphi\left( \Laempprime(h'), \Laprime(h')\right)}\right)\bigg].
\end{align}
\fi%
\textbf{Disintegrated PAC-Bayes.}
 For any algorithm \mbox{$\Phi: (\XY)^m \!\times\! \M(\Hcal) \rightarrow \M(\Hcal)$}, with probability at least $1-\delta$ over $\S\sim\D^m$ and $h\sim\Q_{\S}$, we have\ifnotappendix%
\begin{align}
 \nonumber &\displaystyle   \varphi\Big( \riskemp(h), \risk(h) \Big) \le \EE_{\a \sim \pi}  \frac{1}{\alpha\,\lambdaa} \bigg[\lnplus\!\left(\frac{d\Q_{\S}}{d\P}(h)\right) \\
&\phantom{abc} + \ln \left(\frac{\cardA}{\delta}   \EE_{\S'  \!\sim  \D^{m}} \EE_{h' \!\sim \P}e^{\lambdaa\varphi\left( \Laempprime(h'), \Laprime(h')\right)}\right)\bigg],%
\label{eq:general-ma-dis} 
\end{align}
\else%
\begin{align}
&\displaystyle   \varphi\Big( \riskemp(h), \risk(h) \Big) \le \EE_{\a \sim \pi}  \frac{1}{\alpha\,\lambdaa} \bigg[\lnplus\!\left(\frac{d\Q_{\S}}{d\P}(h)\right) + \ln \left(\frac{\cardA}{\delta}   \EE_{\S'  \!\sim  \D^{m}} \EE_{h' \!\sim \P}e^{\lambdaa\varphi\left( \Laempprime(h'), \Laprime(h')\right)}\right)\bigg],%
\end{align}
\fi%
where $\Q_{\S}$ is the posterior learned with $\Phi(\S,\P)$.
\end{restatable}

\begin{proof}%
(Complete proofs are deferred in Appendix~\ref{sec:general-ma-proof}.)
To prove \Cref{eq:general-ma-classical}, we start by using the proof technique of \citet{germain2009pac} to obtain a general PAC-Bayes bound on subgroup risks that holds for any $\a \in \A$: With probability at least $1-\delta$ over the random choice of $\S \sim \D^m$, we have $\forall \Q \in \mathcal{M}(\mathcal{H})$, 
\begin{align*}
    &\varphi \left( \EE_{h \sim \Q} \Laemp(h),  \EE_{h \sim \Q} \La(h)\right) \le \frac{1}{\lambda_\a} \bigg[ \KL(\Q\|\P) \\
    &\phantom{aaaaaaa}+ \ln \left( \frac{n}{\delta}  \EE_{\S' \sim \D^m} \EE_{h' \sim \P}\ e^{\lambda_{\a} \varphi\left( \Laempprime (h'), \La(h')\right)}\right) \bigg]. 
\end{align*}
Then, we apply a union bound to combine these subgroup bounds with an expectation over $\A$ using the reference $\pi$ (see \Cref{lem:general-ma}).
Finally, we use our class of measure of risks' constraint stating that $\forall \a,  \frac{d\rho}{d\pi}(\a) \le \frac{1}{\alpha}$, and we perform a change of measure to prove that
\begin{align*}
    \varphi\!\left[\riskemp (\Q), \! \EE_{h \sim \Q} \!\risk(h)\right] \!\le\! \frac{1}{\alpha}\! \EE_{\a \sim \pi}  \!\varphi\! \left[ \EE_{h \sim \Q}\! \Laemp\!(h), \!\EE_{h \sim \Q} \!\La\!(h)\right]\!,
\end{align*} 
leading to the desired result.
The proof of \Cref{eq:general-ma-dis} follows the same steps after using \citet{rivasplata2020pac}'s proof technique to get the first bound.
\end{proof}

As in \Cref{eq:mcallester-bound,eq:rivasplata-bound}, the bounds in \Cref{eq:general-ma-classical,eq:general-ma-dis} depend respectively on the KL-divergence and its disintegrated version between $\Q$ and~$\P$.
Our bounds additionally involve the parameter~$\lambdaa$, which varies \wrt the subgroup $\a\! \in\! \A$.
Interestingly, since the Radon-Nikodym derivative is uniformly bounded by~$\tfrac{1}{\alpha}$, our bounds depend only on the parameter $\alpha$ of the constrained $f$-entropic risk measure.

To make the result more concrete, we instantiate our disintegrated bound in \Cref{cor:mca-dis} with two choices of deviation $\varphi$.
For completeness, we report in Appendix (\Cref{cor:mca-pb}) the corresponding classical bounds.
First, we use $\varphi(a,b)\!=\!\kl^+\!(a \| b)$ defined, for any \mbox{$a,b \!\in\! [0,1]$}, as
\begin{align*}
\kl^+\!\left(a \middle\|b\right) \!\triangleq \! 
\left\{
\begin{array}{l}
\kl(a\|b) = a \ln \frac{a}{b}{+}(1{-}a) \ln \frac{1-a}{1-b}\  \text{if}\  a \le b, \\
 0\quad \text{otherwise.}
\end{array}\right.
\end{align*}
This quantity corresponds to the KL-divergence between two Bernoulli distributions with parameters $a$ and $b$ (truncated to $a\le b$).
Second, thanks to Pinsker's inequality, we have $2(a\!-\!b)^2 \!\le\! \kl^+(a \|b)$ for $a\!\le\! b$, which yields another (direct) bound with $\varphi(a,b)\!=\!2(a\!-\!b)^2$.
Hence, we obtain the following corollary. 
\begin{restatable}{corollary}{corboundseegermcadis}
\label{cor:mca-dis}
For any $\D$ on $\XY$,
for any $\A$ of $\cardA$ subgroups, 
for any $\pi$ over $\A$,
for any $\P \! \in \! \M(\Hcal)$, 
for any loss $\ell: \Y  \!\times \!\Y \!\to\! [0,1]$,
for any $\risk$ satisfying \Cref{def:constrained-f-entropic}, 
for any $\delta \!\in\! (0,1]$,
for any $\alpha\!\in\! (0,1]$, 
for any algorithm $\Phi: (\XY)^m\!\times\! \M(\Hcal)\! \rightarrow\! \M(\Hcal)$,
with probability at least $1 {-} \delta$ \mbox{over $\S {\sim} \D^m$} and $h\!\sim\!\Q_\S$,
we have 
\allowdisplaybreaks[4]
\begin{align}
\label{eq:seeger-ma-dis} &\kl^+\! \Big( \riskemp(h) \Big\| \risk(h) \Big)\!  \le \! \EE_{\a \sim \pi} \! \! \frac{\lnplus\!\!\left[\frac{d\Q_\S}{d\P}(h)\right] \! \! + \!  \ln\frac{2 \cardA  \sqrt{\ma}}{\delta}}{\alpha \, \ma},\\
\label{eq:mcallester-ma-dis}&\text{and}\ \risk(h)  \le  \riskemp(h) \! + \! \sqrt{ \EE_{\a \sim \pi} \!  \frac{\lnplus\!\!\left[\frac{d\Q_\S}{d\P}(h)\right]  \!+\!  \ln\frac{2 \cardA  \sqrt{\ma}}{\delta}}{2 \, \alpha \, \ma}},
\end{align}
where $\Q_{\S}$ is the posterior learned with $\Phi(\S,\P)$.
\end{restatable}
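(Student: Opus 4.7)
The plan is to deduce both inequalities from the general disintegrated bound of \Cref{thm:general-ma} (\Cref{eq:general-ma-dis}) by choosing the deviation function $\varphi$, fixing the free parameters $\lambdaa$, and by controlling the exponential moment term on the right-hand side.

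For the Seeger-type bound in \Cref{eq:seeger-ma-dis}, I would set $\varphi(a,b) = \kl^+(a\|b)$ and $\lambdaa = \ma$ for every subgroup $\a \in \A$. The truncated function $\kl^+$ is positive, non-increasing in its first argument, and jointly convex on $[0,1]^2$, so it satisfies the hypotheses of \Cref{thm:general-ma}. The only nontrivial piece is then the moment generating term. Because $\kl^+ \le \kl$ pointwise, the term is bounded, subgroup by subgroup, by $\EE_{\S' \sim \D^m}\EE_{h' \sim \P}\bigl[e^{\ma\, \kl(\Laempprime(h')\|\Laprime(h'))}\bigr]$, which is at most $2\sqrt{\ma}$ via the classical Maurer lemma applied conditionally on subgroup $\a$ (for which $\Laempprime(h')$ is an average of $\ma$ i.i.d.\ $[0,1]$-valued losses from $\Dac$). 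Substituting $2\sqrt{\ma}$ into \Cref{eq:general-ma-dis} and simplifying yields \Cref{eq:seeger-ma-dis} directly.

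For the McAllester-type bound in \Cref{eq:mcallester-ma-dis}, I would lift the Seeger-type bound via Pinsker's inequality. On the event $\riskemp(h) \le \risk(h)$, Pinsker gives $2(\risk(h) - \riskemp(h))^2 \le \kl(\riskemp(h)\|\risk(h)) = \kl^+(\riskemp(h)\|\risk(h))$, so combining with \Cref{eq:seeger-ma-dis} and taking square roots produces \Cref{eq:mcallester-ma-dis}. On the complementary event $\riskemp(h) > \risk(h)$, the square-root term of \Cref{eq:mcallester-ma-dis} is non-negative and the bound holds trivially.

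The main obstacle is concentrated in the moment-generating step: one must argue that the subgroup sizes $\ma$ coincide in $\S$ and $\S'$ so that Maurer's lemma applies with the correct denominator $\ma$ inside each subgroup's exponential moment. Once that is handled (by conditioning on the subgroup assignment, under the paper's standing assumption that every subgroup is represented with a prescribed size), the remainder of the argument reduces to a routine substitution of two classical ingredients, Maurer's lemma and Pinsker's inequality, into the master bound of \Cref{thm:general-ma}.
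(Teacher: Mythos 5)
Your proposal is correct and follows essentially the same route as the paper: apply \Cref{thm:general-ma} (disintegrated version) with $\varphi=\kl^+$ and $\lambdaa=\ma$, bound the exponential moment by $2\sqrt{\ma}$ via $\kl^+\le\kl$ and Maurer's lemma, then obtain \Cref{eq:mcallester-ma-dis} from \Cref{eq:seeger-ma-dis} by the $\kl^+$ version of Pinsker's inequality with the same case split on the sign of $\risk(h)-\riskemp(h)$. The point you flag about the subgroup sizes in $\S'$ is handled in the paper exactly as you suggest, by applying Maurer's lemma per subgroup with denominator $\ma$ under the standing assumption that every subgroup is represented with its prescribed size.
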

\begin{proof}
Deferred to \Cref{sec:mca-dis-proof}.
\end{proof}
Put into words, the larger the subgroup size $m_\a$, the tighter the bound. 
Conversely, smaller values of $\alpha$ make the bound looser, due both to the multiplicative factor $\frac{1}{\alpha}$ and to the fact that smaller $\alpha$ values make the constrained $f$-entropic risk measures more pessimistic.

\subsection{When \texorpdfstring{$|\A| = m$}{|A| = m}}
\label{sec:small}
\looseness=-1
When each subgroup corresponds to a single example of $\S$, the bounds of \Cref{thm:general-ma} become vacuous (since $\forall \a\!\in\!\A,$ $m_\a\!=\!1$).
To obtain a non-vacuous bound in this context, we derive bounds that take a different form.
Formally, for a learning set \mbox{$\S\!=\!\{(\xbf_\a,y_\a)\}_{\a=1}^m \!\sim\! \D^m$}, we set the reference distribution $\pi$ to be the uniform distribution over $\S$, we have 
\begin{align}
\label{eq:ass-small}
\Laemp(h) = \ell(y_\a, h(\x_\a)), \quad\text{and}\quad \pi(\a)=\frac{1}{m},
\end{align}
and we constrain the distribution $\rho$ with $\alpha$, \ie, for each $(\xbf_\a,y_\a)$.
We obtain the following PAC-Bayes bounds.

\begin{restatable}{theorem}{thmboundoneexample}
For any $\D$ on $\XY$,
for any $\lambda \!>\!0$,
for any $\P\!  \in \! \M(\Hcal)$, 
for any loss $\ell\!:\!\Y\!\times\!\Y\!\to\! [0,1]$,
for any constrained $f$-entropic risk measure $\riskemp$ satisfying \Cref{def:constrained-f-entropic} \emph{and} \Cref{eq:ass-small},
for any algorithm $\Phi: (\XY)^m \!\times \!\M(\Hcal) \!\rightarrow \!\M(\Hcal)$, 
for any $\delta\! \in\! (0,1]$, we have the following bounds.\\[1mm]
\textbf{Classical PAC-Bayes.}
With probability at least $1\!-\!\delta$ over $\S\!\sim\!\D^m$, we have
\ifnotappendix%
\begin{align}
&\left|\, \EE_{h\sim\Q_{\S}} \riskemp(h)  - \EE_{h\sim\Q_{\S}} \EE_{\S' \sim \D^m} \riskempprime(h)\, \right|\  \leq \nonumber\\
 &\frac{1}{\alpha} 
 \sqrt{\frac{1}{2m}\!\left(\left[1 {+} \frac{1}{\lambda}\right]\! \KL(\Q_\S \| \P ) \!+\! \ln \!\left[\!\frac{2(\lambda{+}1)}{\delta}\!  \right] \!\!+\! 3.5\!\right)},\label{eq:mcallester-one}
\end{align}
\else%
\begin{align}
&\left|\, \EE_{h\sim\Q_{\S}} \riskemp(h)  - \EE_{h\sim\Q_{\S}} \EE_{\S' \sim \D^m} \riskempprime(h)\, \right|\  \leq \frac{1}{\alpha} 
 \sqrt{\frac{1}{2m}\!\left(\left[1 {+} \frac{1}{\lambda}\right]\! \KL(\Q_\S \| \P ) \!+\! \ln \!\left[\!\frac{2(\lambda{+}1)}{\delta}\!  \right] \!\!+\! 3.5\!\right)},
\end{align}
\fi%
where $\Q_{\S}$ is the posterior learned with $\Phi(\S,\P)$.\\[3mm]
\textbf{Disintegrated PAC-Bayes.}
With probability at least $1\!-\!\delta$ over $\S\!\sim\!\D^m$ and $h\!\sim\! \Q_\S$, 
we have%
\ifnotappendix%
\begin{align}
& \left|\, \riskemp(h)  - \EE_{\S' \sim \D^m} \riskempprime(h)\, \right|\  \leq \nonumber\\ 
 &\frac{1}{\alpha}\sqrt{\frac{1}{2m}\! \left( \left[1 {+} \frac{1}{\lambda} \right] \lnplus\!\!\left[\frac{d\Q_\S}{d\P}(h)\right]\! +\! \ln \!\left[\!\frac{2(\lambda{+}1)}{\delta}  \!\right] \right)},\label{eq:mcallester-one-dis}
\end{align}
\else%
\begin{align}
& \left|\, \riskemp(h)  - \EE_{\S' \sim \D^m} \riskempprime(h)\, \right|\  \leq \frac{1}{\alpha}\sqrt{\frac{1}{2m}\! \left( \left[1 {+} \frac{1}{\lambda} \right] \lnplus\!\!\left[\frac{d\Q_\S}{d\P}(h)\right]\! +\! \ln \!\left[\!\frac{2(\lambda{+}1)}{\delta}  \!\right] \right)},
\end{align}
\fi%
where $\Q_{\S}$ is the posterior learned with $\Phi(\S,\P)$.
\label{thm:bound-for-one-example}
\end{restatable}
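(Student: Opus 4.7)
Let $\Delta(h,\S) \defeq \riskemp(h) - \EE_{\S'\sim\D^m}\riskempprime(h)$ denote the deviation we want to control. My plan has three pieces: (i) a pointwise concentration of $\Delta(h,\S)$ in $\S$ for each fixed $h$; (ii) a PAC-Bayesian change of measure that lifts it to either the integrated (classical) or single-hypothesis (disintegrated) form; and (iii) a Hölder/Young manipulation that introduces the free parameter $\lambda$ and turns the resulting linear-plus-quadratic inequality in an auxiliary $\lambda'$ into the announced $\sqrt{\cdot}/\alpha$ form. The absolute value on the left-hand side is finally obtained by a union bound applied to $\Delta$ and $-\Delta$ at level $\delta/2$ each.

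\textbf{Step 1 (pointwise concentration).}
Under \Cref{eq:assumption} the reference distribution is uniform and every admissible $\rho\in E$ satisfies $\rho(\a)\le 1/(\alpha m)$, so
\[
\riskemp(h)=\sup_{\rho\in E}\sum_{\a=1}^{m}\rho(\a)\,\ell(h(\x_\a),y_\a)
\]
is the supremum of linear functionals of the losses $\ell(h(\x_\a),y_\a)\in[0,1]$ with coefficients in $[0,1/(\alpha m)]$. Replacing a single example in $\S$ therefore modifies $\riskemp(h)$ by at most $1/(\alpha m)$, i.e.\ $\riskemp(h)$ satisfies McDiarmid's bounded-differences condition with constants $c_\a=1/(\alpha m)$. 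The associated Hoeffding--McDiarmid MGF bound gives, for each fixed $h$ and every $\lambda'\in\R$,
\[
\EE_{\S\sim\D^m}\exp\!\bigl(\lambda'\,\Delta(h,\S)\bigr)\ \le\ \exp\!\Bigl(\tfrac{(\lambda')^2}{8m\alpha^2}\Bigr),
\]
and the analogous bound for $-\Delta(h,\S)$.

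\textbf{Steps 2--3 (change of measure and free parameter).}
For the classical bound I would apply Donsker--Varadhan's variational formula, $\lambda'\EE_{h\sim\Q_\S}\Delta \le \KL(\Q_\S\|\P)+\ln\EE_{h\sim\P}e^{\lambda'\Delta}$, and control the log-MGF via Markov in $\S$ combined with Fubini and Step~1, which costs $\ln(2/\delta)$. For the disintegrated bound I would instead use the Rivasplata-style identity exploited in \Cref{eq:rivasplata-bound}: since $\frac{d\Q_\S}{d\P}(h)\exp\bigl(-\lnplus\frac{d\Q_\S}{d\P}(h)\bigr)\le 1$ pointwise, Markov's inequality applied to $\exp\bigl(\lambda'\Delta-\lnplus\frac{d\Q_\S}{d\P}(h)\bigr)$ under the joint law $(\S,h)\sim\D^m\otimes\Q_\S$ yields, with probability at least $1-\delta/2$,
\[
\lambda'\,\Delta(h,\S)\ \le\ \lnplus\!\bigl(\tfrac{d\Q_\S}{d\P}(h)\bigr)\ +\ \tfrac{(\lambda')^2}{8m\alpha^2}\ +\ \ln\tfrac{2}{\delta}.
\]
To convert this linear-plus-quadratic inequality in $\lambda'$ into the $\sqrt{\cdot}$-form with free $\lambda$ and the $[1+1/\lambda]$ prefactor announced in the theorem, I would introduce $\lambda$ by Young's inequality with conjugate exponents $p=\lambda+1$ and $q=1+1/\lambda$, equivalently via a weighted union bound on a geometric grid $\lambda'\in\{e^k\}_{k\ge 0}$ discretizing the (data-dependent) Donsker--Varadhan optimizer $\lambda'\asymp\sqrt{m\alpha^2\KL}$. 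This produces the multiplier $1+1/\lambda$ in front of $\KL$ (resp.\ $\lnplus$) and the additional $\ln(\lambda+1)$ inside the logarithm. A final union bound over the two tails at level $\delta/2$ gives the absolute value, and inverting the quadratic inequality yields the $\frac{1}{\alpha}\sqrt{\cdot/(2m)}$ form; the leftover constants collected along the classical route (log-MGF plus Young/union-bound) aggregate into the additive $+3.5$ appearing only in the classical inequality.

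\textbf{Main obstacle.}
The delicate step is turning Donsker--Varadhan's inequality $\KL/\lambda'+\lambda'/(8m\alpha^2)$ into a $\sqrt{\cdot}$ form with a \emph{data-independent} $\lambda'$ while keeping the optimal $1/(2m)$ scaling: the tight minimizer $\lambda'\asymp\sqrt{m\alpha^2\KL}$ is random, so it must be replaced by an analytic reparameterization through Young/Hölder that produces the $(1+1/\lambda)$ factor at only a $\ln(\lambda+1)$ logarithmic cost. A secondary subtlety, specific to the disintegrated setting, is that $\Q_\S$ depends on $\S$, which forces the exponential-Markov step to be carried out jointly in $(\S,h)$ via the Rivasplata identity rather than separately in the two variables.
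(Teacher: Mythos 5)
Your Step 1 matches the paper exactly: the bounded-differences constant $1/(\alpha m)$ for $\riskemp(h)$ is established the same way (sup of differences of the linear functionals, coefficients bounded by $\frac{1}{\alpha}\pi(j)=\frac{1}{\alpha m}$), and McDiarmid applies. The divergence, and the gap, is in how you propose to obtain the $(1+\frac{1}{\lambda})$ factor. The paper never passes through an MGF/Chernoff bound with a free parameter $\lambda'$ at all: it keeps McDiarmid in deviation form, defines the bad event $\mathcal{B}(h,\delta)=\{\S: |\riskemp(h)-\EE_{\S'}\riskempprime(h)|>\frac{1}{\alpha}\sqrt{\ln(2/\delta)/(2m)}\}$, and applies Occam's hammer (Blanchard--Fleuret, Thm.~2.4), i.e.\ a continuous union bound over confidence levels indexed by $[\frac{d\Q_\S}{d\P}(h)]^{-1}$, with the level-prior $\Gamma(du)\propto u^{-1+1/\lambda}du$ on $[0,1]$. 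Computing $\beta(x)=\frac{1}{\lambda+1}\min(x^{1+1/\lambda},1)$ is precisely what turns $\ln(2/\delta)$ into $(1+\frac{1}{\lambda})\lnplus\bigl(\frac{d\Q_\S}{d\P}(h)\bigr)+\ln\frac{2(\lambda+1)}{\delta}$: the exponent $1+\frac1\lambda$ becomes the multiplicative factor on $\lnplus$ after taking logs, and $\lambda+1$ is the normalization of $\beta$, not a union-bound count. Your proposed mechanism --- Chernoff with $\lambda'$, then Young's inequality or a geometric grid over $\lambda'$ --- cannot reproduce this form: a grid over the Chernoff parameter inflates the \emph{entire} bound by a multiplicative constant $c(r)>1$ sitting outside the square root (and multiplying the $\ln(1/\delta)$ term as well), whereas the theorem's inflation factor sits inside the root and multiplies only the $\lnplus$ term, vanishing when $h$ is close to the prior. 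So the step you yourself flag as "the delicate step" is exactly the one that is not resolved, and the two resolutions you sketch do not yield the stated inequality.

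The classical bound is also obtained differently. You propose an independent Donsker--Varadhan derivation, but the paper derives \Cref{eq:mcallester-one} \emph{from} the disintegrated bound (this is why it only holds for the learned $\Q_\S$, as the main text remarks): one applies Markov over $h\sim\Q_\S$ at levels $(\gamma_i,\delta_i)=(2^{-i},\delta 2^{-i})$, union-bounds over $i\in\N$, integrates the resulting tail bound to get $\EE_{h\sim\Q_\S}[\phi(h,\S)]\le 4\ln 2\le 3$ for the centered quantity $\phi$, applies Jensen, and finally uses $\EE_{h\sim\Q_\S}\lnplus\bigl(\frac{d\Q_\S}{d\P}(h)\bigr)\le\KL(\Q_\S\|\P)+e^{-1}$. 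The additive $3.5$ is exactly $3+\tfrac12$ from these two steps; it is not a generic aggregation of "leftover constants" and would not emerge from your DV route. In short: right concentration lemma, but the change-of-measure engine (Occam's hammer over confidence levels, rather than Chernoff-parameter optimization) is the missing idea, and without it neither displayed inequality is reached in the stated form.
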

\begin{proof}
\looseness=-1
    (Complete proofs are deferred in Appendix~\ref{sec:bound-for-one-example-proof}.)
    With McDiarmid's inequality, we prove the next concentration inequality for constrained $f$-entropic measures (see \Cref{lem:brown}), holding for a fixed hypothesis $h\in \mathcal{H}$, with probability of at most $\delta$ on $\S \sim \D^m$, we have
    \begin{align*}
    |\riskemp(h)  - \EE_{\S' \sim \D^m}  \riskemp(h)|\ge \frac{1}{\alpha} \sqrt{\frac{\ln (2/\delta)}{2m}}.
    \end{align*}
    Then, we use Occam's hammer \citep{blanchard2007occam} to make the disintegrated KL and the sampling $h\!\sim\!\Q_{\S}$ appear and obtain the disintegrated bound of \Cref{eq:mcallester-one-dis}.
    The bound of \Cref{eq:mcallester-one} then follows by plugging \Cref{eq:mcallester-one-dis} into the argument of \citet{blanchard2007occam} that recovers a classical PAC-Bayesian bound from a disintegrated one.
\end{proof}
The proof of \Cref{thm:bound-for-one-example} follows \citet{blanchard2007occam}'s technique for the classical generalization gap.
Unlike \Cref{thm:general-ma}, \Cref{thm:bound-for-one-example} is not a general PAC-Bayesian theorem (\ie, it does not involve a deviation~$\varphi$), but it is a \textit{parametrized} PAC-Bayes bound with parameter $\lambda$ which controls the trade-off between the concentration terms and the KL-divergence, and which is independent of the risk measure and the subgroups. 
Moreover, the classical PAC-Bayes bound of \Cref{eq:mcallester-one} derives from the disintegrated one, so it holds only for the posterior $\Q_\S$ learned from~$\S$.
Finally, we recall that \Cref{cor:mca-dis} suffers from subgroup sizes $m_\a$ when some $m_\a$ are small, due to the $\tfrac{1}{m_\a}$ term.
In contrast, the bounds of \Cref{thm:bound-for-one-example} only depend on the global sample size $m$ with a $\tfrac{1}{m}$ term, as in standard PAC-Bayesian bounds.

\textbf{Comparison with \Cref{thm:mhammedi}.} 
We compare the two classical PAC-Bayes bounds, \Cref{eq:mcallester-one} and the one of \citet{mhammedi2020pac} (see \Cref{thm:mhammedi}).
Even though the generalization gaps of the two bounds do not involve the same quantities, we can compare the rates.
Interestingly, when $\riskemp(\Q)\!>\!0$, which is a reasonable assumption in practice, our bound is asymptotically tighter, with a rate of $\mathcal{O}(\sqrt{1/m})$ compared to their $\mathcal{O}(\sqrt{(\ln \ln m)/m})$.
Importantly, our work establishes the first disintegrated PAC-Bayesian bounds that are not the vanilla true/empirical risk $\L(h)$ and $\Lemp(h)$. 
This yields a key practical advantage: The empirical CVaR becomes computable. 
In contrast, \Cref{thm:mhammedi} relies on the computation of $\riskemp(\Q)$, which can only be estimated and for which no standard concentration inequality (\eg,  Hoeffding's inequality) provides a non-vacuous bound.
Additionally, although our bound can suffer from the $\frac{1}{\alpha^2}$ factor (larger than the $\frac{1}{\alpha}$ factor in \Cref{thm:mhammedi}), we observe in practice that our disintegrated bound remains at least comparable.

\section{SELF-BOUNDING ALGORITHMS}
\label{sec:self-bounding}

\looseness=-1
Our bounds are general, as they do not impose any algorithm for learning the posterior.
In the following, we have two objectives: \textit{(i)} in this section, designing a self-bounding algorithm~\citep{freund1998self} to learn a model by directly minimizing our bounds, and 
\textit{(ii)} in \Cref{sec:expe}, showing the usefulness of our bounds on two types of subgroups (one class per group, one example per group). 
A self-bounding algorithm outputs a model together with its own \mbox{non-vacuous} generalization bound: the one optimized. 
For practical purposes, we focus on an algorithm for disintegrated bounds, since they apply to a deterministic model.
Indeed, we recall that \textit{(i)}~classical PAC-Bayes bounds hold for a randomized model over the entire hypothesis space, which incurs additional computational cost, and \textit{(ii)}~the measure $\riskemp(\Q)$ involved in the classical bounds (\eg, \citet{mhammedi2020pac}) is not directly computable, unlike $\riskemp(h)$ in our disintegrated bounds (we detail the objective functions associated with our bounds in Appendix~\ref{bounds:expe}).

\Cref{alg:self-bounding} below summarizes the bound's minimization procedure\footnote{\Cref{alg:self-bounding} follows a quite standard procedure to minimize a bound, but is specialized to our setting.}. 
We parametrize the posterior distribution denoted by $\Q_\theta$ and we update the parameters $\theta$ by (a variant of) stochastic gradient descent as follows.
For each epoch and mini-batch $U\!\subset \!\S$ (Lines \ref{line:epoch}-\ref{line:minibatch}), we draw a model $\hthetatilde$ from the current posterior distribution $\Q_\theta$ (\Cref{line:model}). 
Then, we compute the empirical risk $\riskempB(\hthetatilde)$ of $\hthetatilde$ on $U$ (\Cref{line:compute-risk}), which is used to compute the bound, denoted $\empBound$ (\Cref{line:compute-bound}), and we update the parameters $\theta$ of the posterior distribution using the gradient $\nabla_{\theta}\empBound(\riskempB(\hthetatilde), \Q_{\theta}, \hthetatilde)$ (\Cref{line:compute-gradient}). 
Finally, we return a model drawn from the learned $\Q_{\theta}$ (\Cref{line:final-model}).

\begin{algorithm}[ht]
\caption{Self-bounding algorithm for constrained $f$-entropic risk measures}
\label{alg:self-bounding}
\begin{algorithmic}[1]
  \Require Set $\S\!=\!\{(\xbf_i,y_i)\}_{i=1}^m$, 
  number of epochs $T$, 
  variance $\sigma^2$, 
  prior $\P \!=\! \Ncal(\thetaP, \sigma^2 I_d)$ where $\thetaP \in \R^d$, 
  bound $\empBound$, 
  reference $\pi$,  
  parameters $\alpha$, $\beta$
  \State Initialize $\theta \gets \thetaP$ \label{line:init} 
  \For{$t = 1$ \textbf{to} $T$\label{line:epoch}}
    \ForAll{mini-batches $\batch \subset \S$ drawn \wrt $\pi$\label{line:minibatch}}
        \State Draw a model $\hthetatilde$ from $\Q_{\theta} \!=\! \Ncal(\theta, \sigma^2 I_d)$ \label{line:model}
      \State Compute the risk $\riskempB(\hthetatilde)$ on the mini-batch\!\!\!\!\label{line:compute-risk}
      \State Compute the bound $\empBound(\riskempB(\hthetatilde), \Q_{\theta}, \thetatilde)$
      \label{line:compute-bound} 
      \State \mbox{Update $\theta$ with gradient $\nabla_{\!\theta}\empBound(\riskempB(\hthetatilde), \Q_{\theta}, \thetatilde)$\!\!\!\!\!}\label{line:compute-gradient}
    \EndFor
  \EndFor
  \State Draw a model $h_{\hat{\theta}}$ from $\Q_{\theta}$ \label{line:final-model}
  \State \textbf{return} $h_{\hat{\theta}}$
\end{algorithmic}
\end{algorithm}

\looseness=-1
\textbf{On the prior distribution $\P$.}
A key ingredient of PAC-Bayesian methods is the choice of $\P$ (which can be set to uniform by default).
Here, we adopt a different, but classical, approach \citep[\eg,][]{ambroladze2006tighter,germain2009pac,parradohernandez2012pac,perezortiz2021tighter,dziugaite2021role,viallard2024general}: The prior $\P$ is learned from an auxiliary set $\Sp$, disjoint from the learning set $\S$ (often obtained by a 50/50 split).
Here, we learn the parameters $\thetaP$ of the prior distribution with a variant of \Cref{alg:self-bounding}: 
We remove the bound computation (\Cref{line:compute-bound}), replace the gradient in \Cref{line:compute-gradient} by $\nabla_{\thetaP} \riskempB(\hthetatildeP)$, and keep the rest unchanged.
Concretely, for each mini-batch $\batch \!\subset\! \Sp$ (Lines \ref{line:epoch}-\ref{line:minibatch}), we sample $\hthetatildeP$ from  $\P_\theta \!=\!\Ncal_\P(\thetaP, \sigma^2 I_d)$, evaluate $\riskempB(\hthetatildeP)$ (\Cref{line:compute-risk}), and update $\thetaP$ with the gradient $\nabla_{\thetaP} \riskempB(\hthetatildeP)$. 
Instead of returning a model sampled from the final $\P_\theta$ (\Cref{line:final-model}), we output the prior~$\P$ parametrized by the best-performing $\thetaP$ over the epochs and across the hyperparameter grid search.

\section{EXPERIMENTS\protect\footnote{The code is available at \url{https://gitlab.com/hatbir/aistats26-pb-cferm}.}}
\label{sec:expe}

\begin{figure*}[t]
\centering
\includegraphics[width=\textwidth]{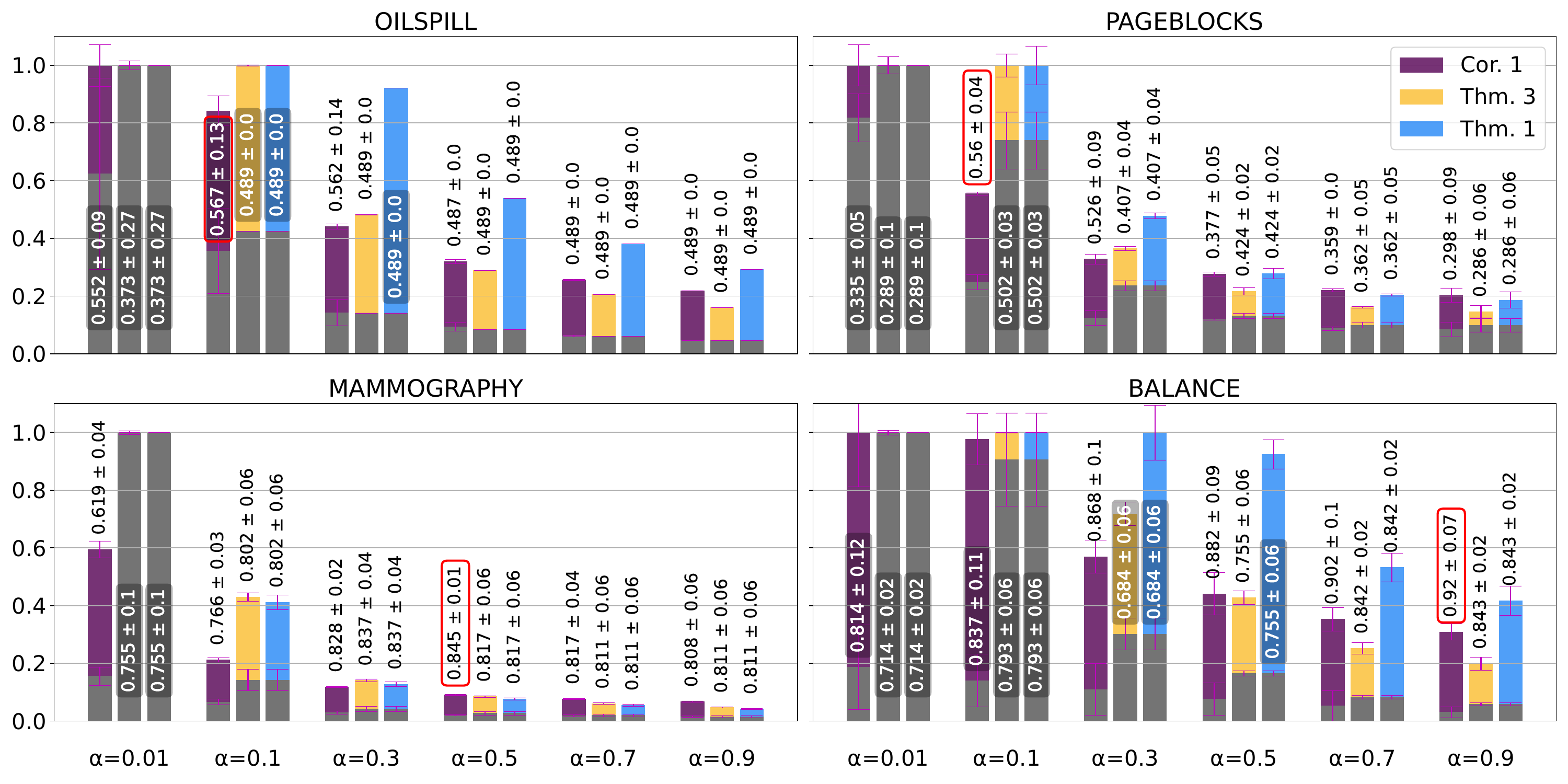}
    \caption{Bound values (in color), test risk $\risk_{\T}$ (in grey), and F-score value on $\T$ (with their standard deviations) for \Cref{thm:bound-for-one-example}, \Cref{cor:mca-dis}, and \Cref{thm:mhammedi}, as a function of $\alpha$ (on the $x$-axis).
    The $y$-axis corresponds to the value of the bounds and test risks.
    The highest F-score for each dataset is emphasized with a red frame.}
    \label{fig:bounds}
\end{figure*}

\begin{figure}[ht]
\centering
\includegraphics[trim=0 20 0 20,width=\columnwidth]{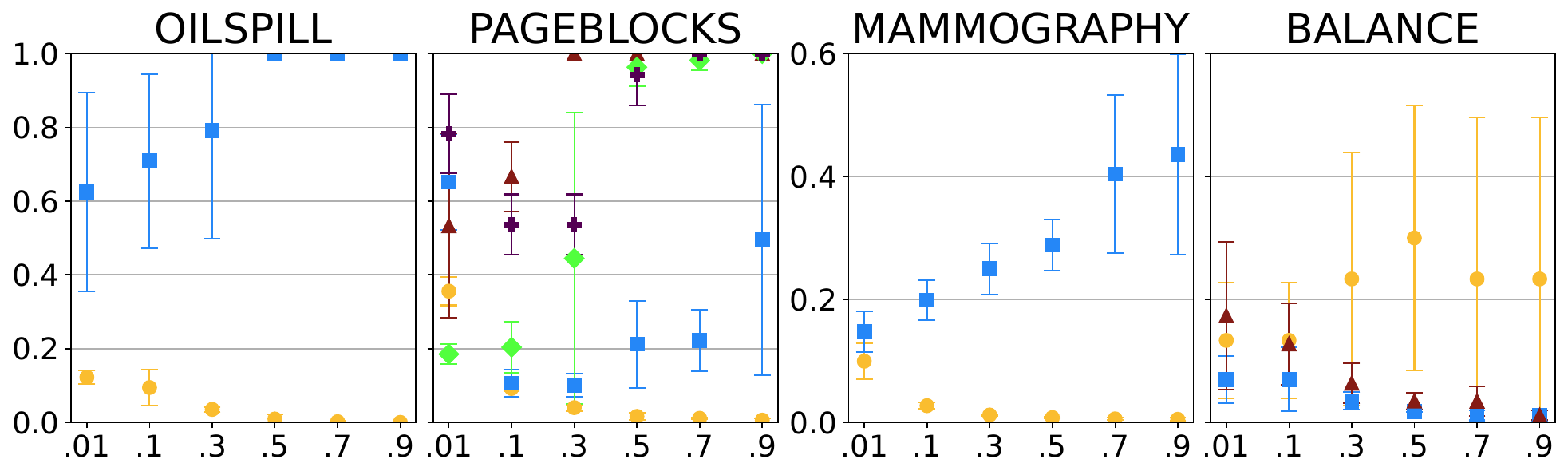}
  \caption{Evolution of the class-wise error rates and standard deviation on the set $\T$ ($y$-axis) as a function of the parameter $\alpha$ ($x$-axis) with \Cref{cor:mca-dis}. Each class is represented by different markers and colors.}
    \label{fig:risks}
\end{figure}

We now illustrate the potential of our PAC-Bayes bounds for constrained $f$-entropic risk measures with the CVaR, focusing on imbalances in the classical class-imbalance setting.
To do so, we study the behavior of our self-bounding algorithm with our bounds in \Cref{eq:mcallester-ma-dis} (\Cref{cor:mca-dis}, with one group corresponding to a class, \ie, $|\A|\!=\!|\Y|\!\leq\! m$), and \Cref{eq:mcallester-one} (\Cref{thm:bound-for-one-example}, with one example per group, \ie, $|\A|\!=\!m$), with \citet{mhammedi2020pac}'s bound (\Cref{thm:mhammedi}), and discuss their potential.
Before analyzing our results, we present our general experimental setting (more details are given in Appendix~\ref{sec:expe-details}).
 
\textbf{Datasets.} 
We report results for the 4 most imbalanced datasets we considered \citep[taken from OpenML,][]{vanschoren2013openml}: \textit{Oilspill} {\small (class ratio .96/.04)} \citep{kubat1998machine}, \textit{Mammography} {\small(.98/.02)}, \textit{Balance} {\small(.08/.46/.46)} \citep{siegler1976balance}, and \textit{Pageblocks} {\small (.90/.06/.01/.02/.02)} \citep[][]{malerba1994page}.
Each dataset is split into a training set ($\S'$) and a test set ($\T$) with a $80\%/20\%$ ratio.
Following our PAC-Bayesian \Cref{alg:self-bounding}, we  split $\S'$ into two disjoint sets $\S$ and $\Sp$ with a $50\%/50\%$ ratio; $\S$ is used to learn the posterior $\Q_\theta$ and $\Sp$ to learn the prior $\P$.
All the splits preserve the original class ratio.
Note that each experiment is repeated $3$ times with random splits. 

\looseness=-1
\textbf{Models \& distributions.}
We consider neural networks with 2 hidden layers of size $128$ (a 2-hidden-layer multilayer perceptron), with leaky ReLUs activations.
To learn the prior $\P\!=\!\Ncal(\thetaP, \sigma^2I_d)$, \ie, $\thetaP$, we initialize the parameters with a Xavier uniform distribution \citep{glorot2010understanding}, then, to learn the posterior distribution $\Q_\theta\!=\!{\cal N}(\theta,\sigma^2I_d)$, the parameters are initialized with $\thetaP$ (\Cref{line:init} of \Cref{alg:self-bounding}), and $\sigma^2\!=\!10^{-6}$.

\textbf{Risk.} We recall that we compare two regimes with the CVaR as the risk measure: \textit{(i)} for \Cref{cor:mca-dis} when $\A\!\leq\!m$ with $\A$ defined by classes, \ie, for all $y\in\Y$, we have a subgroup $\S_\a\!=\!\{(\xbf_j,y)\}_{j=1}^{\ma}$, with the reference $\pi$ set to the class ratio, and \textit{(ii)} for \Cref{thm:bound-for-one-example} and \Cref{thm:mhammedi} when $\A\!=\!m$ where each subgroup is a single example, \ie, $\A\!=\!\S\!=\!\{(\xbf_\a,y_\a)\}_{\a=1}^m$ with $\pi$ set to the uniform distribution.
The CVaR is computed with bounded cross-entropy of \citet{dziugaite2018data} as the loss, with parameter $\ell_{\text{max}} \!=\! 4$ (the loss is rescaled to $[0,1]$ to align with the theoretical assumptions).
To solve the maximization problem associated with \Cref{eq:emp-risk}, we use the python library \textit{cvxpylayers} \citep{agrawal2019differentiable} that creates differentiable convex optimization layers.
This layer is built on top of \textit{CVXPY}~\citep{diamond2016cvxpy}; We use the optimizer SCS~\citep{odonoghue2023scs} under the hood, with $\varepsilon\!=\!10^{-5}$ and a maximum of $100000$ iterations.
In additional experiments, in Appendix~\ref{sec:additional-expe}, we provide results with $\pi$ as the uniform distribution, and 
for another constrained $f$-entropic risk measure (a constrained version of the EVaR \citet{ahmadi2012entropic}).

\looseness=-1
\textbf{Bound.} We compare our disintegrated bounds of \Cref{cor:mca-dis} and \Cref{thm:bound-for-one-example} with an estimate of \citeauthor{mhammedi2020pac}'s bound (\Cref{thm:mhammedi}), obtained by sampling a single model from the posterior $\Q_\theta$.
We think this estimation is reasonable, since our bounds also rely on a single model sampled from $\Q_\theta$, and since 
 \Cref{thm:mhammedi}'s bound is harder to estimate, as it requires sampling and evaluating a large number of models to estimate the expectation over $\Q_\theta$. 
 For all bounds, we fix $\delta=0.05$ and for \Cref{thm:bound-for-one-example} we fix $\lambda=1$.
The details of the bounds' evaluation are deferred in Appendix~\ref{bounds:expe}.

\looseness=-1
\textbf{Optimization.} We use the Adam optimizer~\citep{kingma2014adam}. 
We set the parameters $\beta_1$ and $\beta_2$ to their default values in PyTorch.
For each experiment, we learn 3 prior distributions with $\Sp$ using learning rates in $\{0.1, 0.01, 0.001\}$, with $20$ epochs.
We select the best-performing prior (according to the same loss used for optimization) on $\S$ to compute the bound.
To learn the posterior on $\S$ we set the learning rate to $10^{-8}$, and the number of epochs to $10$.
We fix the batch size to $256$.

\looseness=-1
\textbf{Analysis.}
\Cref{fig:bounds} exhibits the bounds values computed on $\S$, along with the CVaR computed on the test set $\T$, highlighting the tightness of the bounds as a function of $\alpha\!\in\!\{0.01,0.1,0.3,0.5,0.7,0.9\}$. 
To give additional information on the performance of the models, and since the CVaR is not necessarily easy to interpret on its own, we report the F-score on $\T$.

\looseness=-1
First of all, as expected, \Cref{fig:bounds} shows that $\alpha$ strongly influences the tightness of the bounds: the higher $\alpha$, the tighter the bounds. 
This is not only due to the factor $\frac{1}{\alpha}$ or $\frac{1}{\alpha^2}$ in the bounds, but also because a larger $\alpha$ makes the CVaR tighter. 
Indeed, when $\alpha\!\to\! 0$, the CVaR acts as a supremum over the subgroups and puts all the weights on the subgroup that has the highest risk, while when $\alpha\!\to\!1$ the weights are equal to the reference.
This highlights that $\alpha$ plays an important role in the behavior of the algorithm and needs to be chosen to balance the predictive performance and the theoretical guarantee.
To confirm this, \Cref{fig:risks} reports class-wise error rates on the test set $\T$ as a function of $\alpha$ when optimizing \Cref{cor:mca-dis}'s bound (since it provides the best F-score).
We observe that depending on the dataset and on the value of $\alpha$, the class-wise error rates move closer or farther apart.
The main issue with the choice of $\alpha$ is that we cannot select the one associated with the lowest bound, since \textit{(i)} the tightest bound is obtained when $\alpha\!=\!1$, and \textit{(ii)} we observe on \Cref{fig:bounds} that the value of $\alpha$ leading to the tightest bounds does not yield the best F-score.

If we compare \Cref{thm:mhammedi,thm:bound-for-one-example} (which uses the same subgroups defined by one example), as expected, our bound is generally tighter (or very close for \textit{mammography}), for all values of $\alpha$.
Remarkably, when $\alpha\!\in\!\{0.01,0.1,0.3\}$, \Cref{cor:mca-dis} gives the smallest bound, and it continues to give non-vacuous and competitive bounds as long as $\alpha$ remains relatively high despite the $\frac{1}{\alpha\ma}$ term in the bound.
Moreover, as mentioned previously, \Cref{cor:mca-dis} gives the best F-score, confirming the interest of capturing the subgroups in $\S$ with our constrained $f$-entropic risk measures to tackle the imbalance better.

\begin{figure}[t]
\centering
\includegraphics[width=\columnwidth]{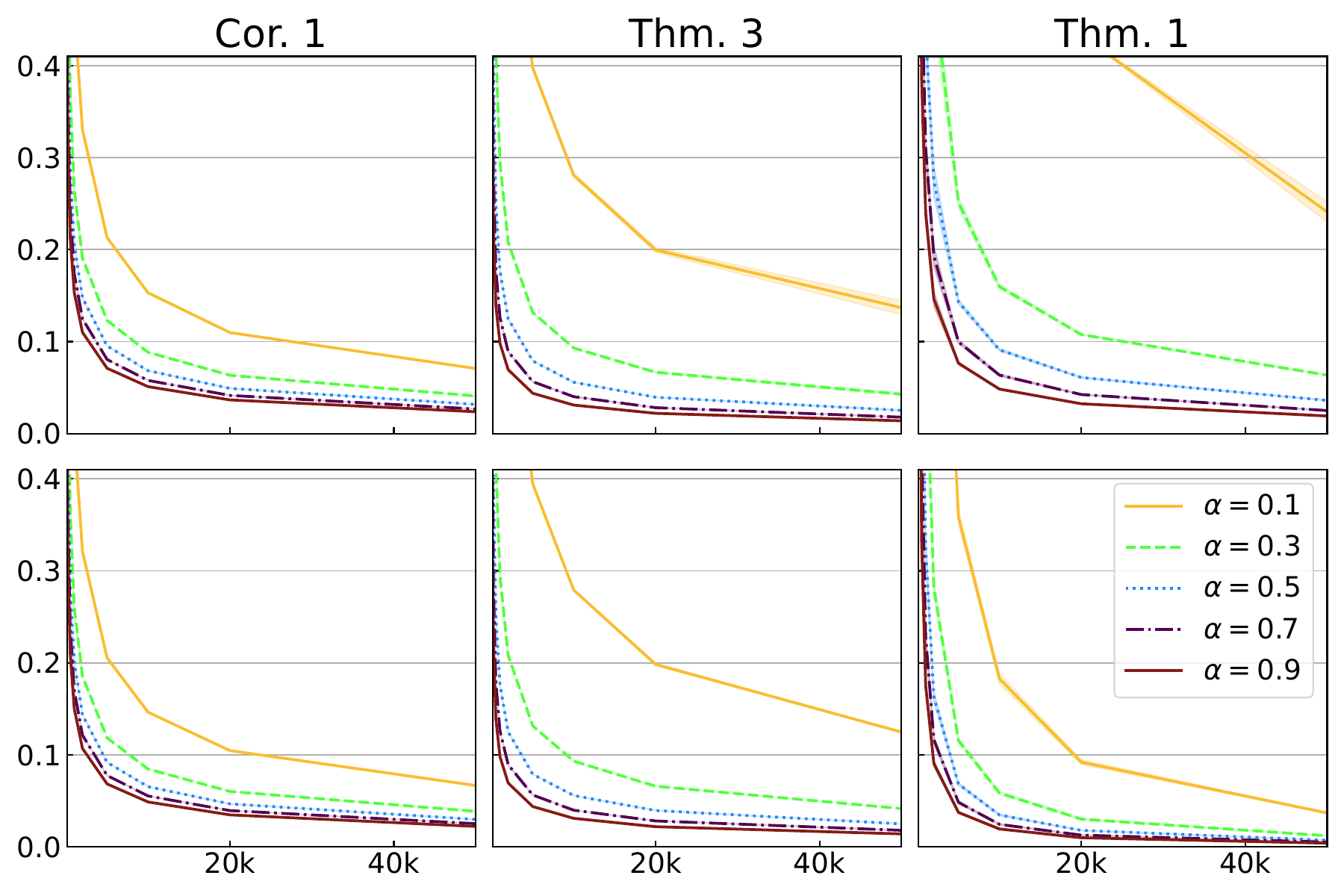}
  \caption{Evolution of the difference between the bound values and the empirical risks as a function of $m$ for \Cref{thm:bound-for-one-example,thm:mhammedi}.
  On the top figures the classes are balanced when $m$ varies.
 On the bottom figures one class size is fixed to $50$ when the other varies.}
    \label{fig:synthetic}
\end{figure}

\textbf{Synthetic experiments.}
 We report in \Cref{fig:synthetic} additional experiments on synthetic data, to show the convergence of the bounds of \Cref{thm:bound-for-one-example,thm:mhammedi}, and \Cref{cor:mca-dis} to the associated empirical risk $\riskemp$ as a function of the number $m$ of examples in $\S$ from $100$ to $100,000$ (with a test set size of $m$).
We consider two settings (with $3$ random seeds) \textit{(on the top figure)} when the classes are perfectly balanced \textit{(on the bottom figures)} when one class has a fixed size of $50$ and the other one varies. 
We observe that for any value of $\alpha$ and even in imbalanced settings, the bound values tend to the empirical risk when $m$ increases.

\section{CONCLUSION}
\looseness=-1
In this paper, we introduce classical and disintegrated PAC-Bayesian generalization bounds for a broad new family of risks, namely the constrained $f$-entropic risk measures. 
We show that the computable terms of the disintegrated bounds can be minimized with a self-bounding algorithm, leading to models equipped with tight PAC-Bayesian generalization guarantees.

As direct future work, we plan to extend our algorithm to different frameworks with subgroup structure (\eg, groups defined by populations in fairness settings or by tasks in multitask learning).
Another direction is to explore the integration of alternative risk measures into self-bounding algorithms.
One could replace the $f$-divergence with Integral Probability Metrics, such as Wasserstein distance, or consider related risk measures such as group DRO or OCEs.
Finally, we believe that our work opens the door to studying the generalization properties of other measures. 
For example, we could design an extension where $\alpha$ varies across subgroups $\a\! \in\! \A$, which can be relevant, \eg, in cost-sensitive learning, or adapt (and potentially learn) $\alpha$ dynamically to better handle harder-to-learn subgroups.

\subsubsection*{Acknowledgments}
\looseness=-1
We would like to sincerely thank the reviewers for their valuable feedback.
We are also grateful to Rémi Eyraud for his support during this work and to Rémi Emonet for valuable feedback on a draft of the manuscript.
This work was supported in part by the French Project FAMOUS ANR-23-CE23-0019.
Paul Viallard is partially funded through Inria with the associate team PACTOL and the exploratory action HYPE.

\subsubsection*{References}
\renewcommand{\bibsection}{}
\bibliographystyle{apalike}
\bibliography{biblio}

@inproceedings{agrawal2019differentiable,
  author       = {Akshay Agrawal and
                  Brandon Amos and
                  Shane T. Barratt and
                  Stephen P. Boyd and
                  Steven Diamond and
                  J. Zico Kolter},
  title        = {{Differentiable Convex Optimization Layers}},
  booktitle    = {{Advances in Neural Information Processing Systems}},
  year         = {2019},
}

@article{ahmadi2012entropic,
  author       = {Amir Ahmadi-Javid},
  title        = {{Entropic value-at-risk: A new coherent risk measure}},
  journal      = {{Journal of Optimization Theory and Applications}},
  year         = {2012},
}

@article{ali1966general,
  author       = {S. M. Ali and S. D. Silvey},
  title        = {{A General Class of Coefficients of Divergence of One Distribution from Another}},
  journal      = {{Journal of the Royal Statistical Society. Series B (Methodological)}},
  year         = {1966},
}

@inproceedings{ambroladze2006tighter,
  author       = {Amiran Ambroladze and
                  Emilio Parrado{-}Hern{\'{a}}ndez and
                  John Shawe{-}Taylor},
  title        = {{Tighter PAC-Bayes Bounds}},
  booktitle    = {{Advances in Neural Information Processing Systems}},
  year         = {2006},
}

@article{bartlett2002rademacher,
  author    = {Peter Bartlett and
               Shahar Mendelson},
  title     = {{Rademacher and Gaussian Complexities: Risk Bounds and Structural Results}},
  journal   = {{Journal of Machine Learning Research}},
  year      = {2002},
}

@article{bental1986expected,
  author       = {Aharon Ben-Tal and Marc Teboulle},
  title        = {{Expected Utility, Penalty Functions, and Duality in Stochastic Nonlinear Programming}},
  journal      = {Management Science},
  year         = {1986},
}

@article{bental2007old,
  author       = {Aharon Ben-Tal and Marc Teboulle},
  title        = {{An Old‐New Concept Of Convex Risk Measures: The Optimized Certainty Equivalent}},
  journal      = {{Mathematical Finance}},
  year         = {2007},
}

@inproceedings{blanchard2007occam,
  author       = {Gilles Blanchard and
                  Fran{\c{c}}ois Fleuret},
  title        = {{Occam's Hammer}},
  booktitle    = {{Conference on Learning Theory}},
  year         = {2007},
}

@article{brown2007large,
  author       = {David B. Brown},
  title        = {{Large deviations bounds for estimating conditional value-at-risk}},
  journal      = {{Operations Research Letters}},
  year         = {2007},
}

@article{catoni2007pac,
  author       = {Olivier Catoni},
  title        = {{PAC-Bayesian supervised classification: the thermodynamics of statistical learning}},
  journal      = {{arXiv}},
  volume       = {abs/0712.0248},
  year         = {2007}
}

@article{csiszar1963informationstheoretische,
  author       = {Imre Csisz{\'a}r},
  title        = {{Eine informationstheoretische Ungleichung und ihre Anwendung auf den Beweis der Ergodizit{\"a}t von Markoffschen Ketten}},
  journal      = {{A Magyar Tudom{\'a}nyos Akad{\'e}mia Matematikai Kutat{\'o} Int{\'e}zet{\'e}nek K{\"o}zlem{\'e}nyei}},
  year         = {1963}
}

@article{csiszar1967information,
  author       = {Imre Csisz{\'a}r},
  title        = {{Information-Type Measures of Difference of Probability Distributions and Indirect Observations}},
  journal      = {{Studia Scientiarum Mathematicarum Hungarica}},
  year         = {1967}
}

@inproceedings{curi2020adaptive,
  author       = {Sebastian Curi and
                  Kfir Y. Levy and
                  Stefanie Jegelka and
                  Andreas Krause},
  title        = {{Adaptive Sampling for Stochastic Risk-Averse Learning}},
  booktitle    = {{Advances in Neural Information Processing Systems}},
  year         = {2020},
}

@article{delage2010distributionally,
  author       = {Erick Delage and Yinyu Ye},
  title        = {{Distributionally Robust Optimization under Moment
Uncertainty with Application to Data-Driven Problems}},
  journal      = {{Operations research}},
  year         = {2010},
}

@article{diamond2016cvxpy,
  author       = {Steven Diamond and
                  Stephen P. Boyd},
  title        = {{CVXPY: A Python-Embedded Modeling Language for Convex Optimization}},
  journal      = {{Journal of Machine Learning Research}},
  year         = {2016},
}

@inproceedings{dziugaite2021role,
  author       = {Gintare Karolina Dziugaite and
                  Kyle Hsu and
                  Waseem Gharbieh and
                  Gabriel Arpino and
                  Daniel M. Roy},
  title        = {{On the role of data in PAC-Bayes}},
  booktitle    = {{International Conference on Artificial Intelligence and Statistics}},
  year         = {2021},
}

@inproceedings{dziugaite2017computing,
  author       = {Gintare Karolina Dziugaite and
                  Daniel M. Roy},
  title        = {{Computing Nonvacuous Generalization Bounds for Deep (Stochastic) Neural Networks with Many More Parameters than Training Data}},
  booktitle    = {{Conference on Uncertainty in Artificial Intelligence}},
  year         = {2017},
}

@inproceedings{dziugaite2018data,
  author       = {Gintare Karolina Dziugaite and
                  Daniel M. Roy},
  title        = {{Data-dependent PAC-Bayes priors via differential privacy}},
  booktitle    = {{Advances in Neural Information Processing Systems}},
  year         = {2018},
}

@inproceedings{freund1998self,
  author       = {Yoav Freund},
  title        = {{Self Bounding Learning Algorithms}},
  booktitle    = {{Conference on Computational Learning Theory}},
  year         = {1998},
}

@inproceedings{germain2009pac,
  author       = {Pascal Germain and
                  Alexandre Lacasse and
                  Fran{\c{c}}ois Laviolette and
                  Mario Marchand},
  title        = {{PAC-Bayesian learning of linear classifiers}},
  booktitle    = {{International Conference on Machine Learning}},
  year         = {2009},
}

@inproceedings{glorot2010understanding,
  author       = {Xavier Glorot and
                  Yoshua Bengio},
  title        = {{Understanding the difficulty of training deep feedforward neural networks}},
  booktitle    = {{International Conference on Artificial Intelligence and Statistics}},
  year         = {2010},
}

@inproceedings{kingma2014adam,
  author       = {Diederik P. Kingma and
                  Jimmy Ba},
  title        = {{Adam: A Method for Stochastic Optimization}},
  booktitle    = {International Conference on Learning Representations},
  year         = {2015},
}

@article{kubat1998machine,
  author       = {Miroslav Kubat and
                  Robert C. Holte and
                  Stan Matwin},
  title        = {{Machine Learning for the Detection of Oil Spills in Satellite Radar
                  Images}},
  journal      = {{Machine Learning}},
  year         = {1998},
}

@inproceedings{lee2020learning,
  author       = {Jaeho Lee and
                  Sejun Park and
                  Jinwoo Shin},
  title        = {{Learning Bounds for Risk-sensitive Learning}},
  booktitle    = {{Advances in Neural Information Processing Systems}},
  year         = {2020},
}

@misc{malerba1994page,
  author       = {Donato Malerba},
  title        = {{Page Blocks Classification}},
  year         = {1994},
  howpublished = {UCI Machine Learning Repository},
}

@article{maurer2004note,
  author       = {Andreas Maurer},
  title        = {{A Note on the PAC Bayesian Theorem}},
  journal      = {{arXiv}},
  volume       = {cs.LG/0411099},
  year         = {2004},
}

@article{mcallester2003pac,
  author       = {David A. McAllester},
  title        = {{PAC-Bayesian Stochastic Model Selection}},
  journal      = {{Machine Learning}},
  year         = {2003},
}

@inproceedings{mcallester1998some,
  author       = {David A. McAllester},
  title        = {{Some PAC-Bayesian Theorems}},
  booktitle    = {{Conference on Computational Learning Theory}},
  year         = {1998},
}

@inproceedings{mhammedi2020pac,
  author       = {Zakaria Mhammedi and
                  Benjamin Guedj and
                  Robert C. Williamson},
  title        = {{PAC-Bayesian Bound for the Conditional Value at Risk}},
  booktitle    = {{Advances in Neural Information Processing Systems}},
  year         = {2020},
}

@article{morimoto1963markov,
  author       = {Tetsuzo Morimoto},
  title        = {{Markov processes and the H-theorem}},
  journal      = {{Journal of the Physical Society of Japan}},
  year         = {1963},
}

@misc{odonoghue2023scs,
    author       = {Brendan O'Donoghue and 
                    Eric Chu and
                    Neal Parikh and
                    Stephen Boyd},
    title        = {{SCS: Splitting Conic Solver}},
    year         = {2023}
}

@article{parradohernandez2012pac,
  author       = {Emilio Parrado{-}Hern{\'{a}}ndez and
                  Amiran Ambroladze and
                  John Shawe{-}Taylor and
                  Shiliang Sun},
  title        = {{PAC-bayes bounds with data dependent priors}},
  journal      = {{Journal of Machine Learning Research}},
  year         = {2012},
}

@article{perezortiz2021tighter,
  author       = {Mar{\'{\i}}a P{\'{e}}rez{-}Ortiz and
                  Omar Rivasplata and
                  John Shawe{-}Taylor and
                  Csaba Szepesv{\'{a}}ri},
  title        = {{Tighter Risk Certificates for Neural Networks}},
  journal      = {{Journal of Machine Learning Research}},
  year         = {2021},
}

@book{pollard1984convergence,
  author       = {David Pollard},
  title        = {{Convergence of Stochastic Processes}},
  publisher    = {Springer New York},
  year         = {1984},
}

@phdthesis{rivasplata2022pac,
  author       = {Omar Rivasplata},
  title        = {{PAC-Bayesian Computation}},
  year         = {2022},
  school       = {University College London, United Kingdom}
}

@inproceedings{rivasplata2020pac,
  author       = {Omar Rivasplata and
                  Ilja Kuzborskij and
                  Csaba Szepesv{\'{a}}ri and
                  John Shawe{-}Taylor},
  title        = {{PAC-Bayes Analysis Beyond the Usual Bounds}},
  booktitle    = {{Advances in Neural Information Processing Systems}},
  year         = {2020},
}

@article{rockafellar2000optimization,
  author       = {R. Tyrrell Rockafellar and
                  Stanislav Uryasev},
  title        = {{Optimization of Conditional Value-at-Risk}},
  journal      = {Journal of risk},
  year         = {2000}
}

@inproceedings{sagawa2019distributionally,
  author       = {Shiori Sagawa and
                  Pang Wei Koh and
                  Tatsunori B. Hashimoto and
                  Percy Liang},
  title        = {{Distributionally Robust Neural Networks for Group Shifts: On the Importance of Regularization for Worst-Case Generalization}},
  booktitle    = {{International Conference on Learning Representations}},
  year         = {2020}
}

@techreport{scarf1957min,
  author       = {Herbert E. Scarf},
  title        = {{A min-max solution of an inventory problem}},
  year         = {1957},
  institution  = {Rand Corporation}
}

@inproceedings{shawetaylor1997pac,
  author       = {John Shawe{-}Taylor and
                  Robert C. Williamson},
  title        = {{A PAC Analysis of a Bayesian Estimator}},
  booktitle    = {{Conference on Computational Learning Theory}},
  year         = {1997},
}

@misc{siegler1976balance,
  author       = {R. Siegler},
  title        = {{Balance Scale}},
  year         = {1976},
  howpublished = {UCI Machine Learning Repository},
}

@article{valiant1984theory,
  author    = {Leslie Valiant},
  title     = {{A Theory of the Learnable}},
  journal   = {{Communications of the ACM}},
  year      = {1984},
}

@article{vanschoren2013openml,
  author       = {Joaquin Vanschoren and
                  Jan N. van Rijn and
                  Bernd Bischl and
                  Lu{\'{\i}}s Torgo},
  title        = {{OpenML: networked science in machine learning}},
  journal      = {SIGKDD Explorations},
  year         = {2013},
}

@inproceedings{vapnik1991principles,
  author       = {Vladimir Vapnik},
  title        = {{Principles of Risk Minimization for Learning Theory}},
  booktitle    = {{Advances in Neural Information Processing Systems}},
  year         = {1991},
}

@article{vapnik1971uniform,
  author    = {Vladimir Vapnik and
               Alexey Chervonenkis},
  title     = {{On the Uniform Convergence of Relative Frequencies of Events to Their Probabilities}},
  journal   = {{Theory of Probability and its Applications}},
  year      = {1971},
}

@phdthesis{viallard2023pac,
  author    = {Paul Viallard},
  title     = {{PAC-Bayesian Bounds and Beyond: Self-Bounding Algorithms and New Perspectives on Generalization in Machine Learning}},
  school    = {{University Jean Monnet Saint-Etienne, France}},
  year      = {2023},
}

@inproceedings{viallard2024leveraging,
  author       = {Paul Viallard and
                  R{\'{e}}mi Emonet and
                  Amaury Habrard and
                  Emilie Morvant and
                  Valentina Zantedeschi},
  title        = {{Leveraging PAC-Bayes Theory and Gibbs Distributions for Generalization Bounds with Complexity Measures}},
  booktitle    = {{International Conference on Artificial Intelligence and Statistics}},
  year         = {2024},
}

@article{viallard2024general,
  author       = {Paul Viallard and
                  Pascal Germain and
                  Amaury Habrard and
                  Emilie Morvant},
  title        = {{A general framework for the practical disintegration of PAC-Bayesian bounds}},
  journal      = {{Machine Learning}},
  year         = {2024},
}


\section*{Checklist}

\begin{enumerate}

  \item For all models and algorithms presented, check if you include:
  \begin{enumerate}
    \item A clear description of the mathematical setting, assumptions, algorithm, and/or model.
   \textbf{Yes}
    \item An analysis of the properties and complexity (time, space, sample size) of any algorithm.
    \textbf{No}
    \item (Optional) Anonymized source code, with specification of all dependencies, including external libraries.
    \textbf{Yes}, \textit{The code is available } \href{https://gitlab.com/hatbir/aistats26-pb-cferm}{here}.
  \end{enumerate}

  \item For any theoretical claim, check if you include:
  \begin{enumerate}
    \item Statements of the full set of assumptions of all theoretical results.
   \textbf{Yes}, \textit{the assumptions are recalled in the statements of the theorems and corollaries}.
    \item Complete proofs of all theoretical results.
    \textbf{Yes}, \textit{for the sake of readability, the proofs are deferred in the Appendix.}
    \item Clear explanations of any assumptions.
    \textbf{Yes}, \textit{all the assumptions are related to notations and mathematical settings that have been introduced.}
  \end{enumerate}

  \item For all figures and tables that present empirical results, check if you include:
  \begin{enumerate}
    \item The code, data, and instructions needed to reproduce the main experimental results (either in the supplemental material or as a URL).
    \textbf{Yes}, \textit{as supplementary material.}
    \item All the training details (e.g., data splits, hyperparameters, how they were chosen).
    \textbf{Yes}, \textit{the main details are in \Cref{sec:expe} (additional details are given in Appendix).}
    \textbf{Yes}, \textit{for the sake of readability, note that the main details are given in \Cref{sec:expe} (and the complete details are given in the Appendix).}

    \item A clear definition of the specific measure or statistics and error bars (e.g., with respect to the random seed after running experiments multiple times).
    \textbf{Yes}
    \item A description of the computing infrastructure used. (e.g., type of GPUs, internal cluster, or cloud provider). 
    \textbf{No}
  \end{enumerate}

  \item If you are using existing assets (e.g., code, data, models) or curating/releasing new assets, check if you include:
  \begin{enumerate}
    \item Citations of the creator If your work uses existing assets.
    \textbf{Yes}, \textit{in \Cref{sec:expe}}.
    \item The license information of the assets, if applicable.
    \textbf{Not Applicable}
    \item New assets either in the supplemental material or as a URL, if applicable.
    \textbf{Yes}, \textit{our code is in the supplementary material.}
    \item Information about consent from data providers/curators.
     \textbf{Not Applicable}, \textit{the data used are publicly available from OpenML \citep{vanschoren2013openml}.}
    \item Discussion of sensible content if applicable, e.g., personally identifiable information or offensive content.
     \textbf{Not Applicable}
  \end{enumerate}

  \item If you used crowdsourcing or conducted research with human subjects, check if you include:
  \begin{enumerate}
    \item The full text of instructions given to participants and screenshots.
     \textbf{Not Applicable}
    \item Descriptions of potential participant risks, with links to Institutional Review Board (IRB) approvals if applicable.
    \textbf{Not Applicable}
    \item The estimated hourly wage paid to participants and the total amount spent on participant compensation.
     \textbf{Not Applicable}
  \end{enumerate}

\end{enumerate}

\clearpage
\appendix
\thispagestyle{empty}

\notappendixfalse
\onecolumn

\aistatstitle{Supplementary Materials of\\[1mm]
PAC-Bayesian Bounds on Constrained $f$-Entropic Risk Measures
}
\newenvironment{itemize*}
    {\begin{itemize}%
      \setlength{\itemsep}{1.5pt}%
      \setlength{\parskip}{0pt}}%
    {\end{itemize}}
The supplementary materials are organized as follows.\\[-8mm]
\begin{itemize*}
    \item \Cref{sec:notations} recalls the list of the main notations of the paper;
    \item \Cref{sec:link-f-entropic-risk-oce} discusses the relationship between (constrained) $f$-entropic risk measures and OCE measures;
    \item \Cref{sec:general-ma-proof,sec:klplus-proof,sec:cor-general-ma-proof,sec:bound-for-one-example-proof} contains all the proofs of our statements;
    \item \Cref{sec:expe-details} gives more details about our method and experimental setting;
    \item \Cref{sec:additional-expe} reports the associated additional empirical results;
\end{itemize*}

\section{TABLES OF NOTATION}
\label{sec:notations}
\def\arraystretch{1.35}
\paragraph{Probability theory}~\\
\begin{tabular}{cp{14cm}}
$\EE_{z\sim\mathcal{Z}}$ & Expectation \wrt the random variable $z\sim\mathcal{Z}$\\
$\PP_{z\sim\mathcal{Z}}$ &  Probability \wrt the random variable $z\sim\mathcal{Z}$\\
$\rho \ll \pi$ & $\rho$  is absolutely continuous \wrt $\pi$\\
$\displaystyle \frac{d\rho}{d\pi}$ & Radon-Nikodym derivative\\[2mm]
$\KL(\cdot\|\cdot)$ & Kullback-Leibler (KL) divergence\\
$\kl^+(a\|b)$ &  KL divergence between 2 Bernoulli distributions with param. $a$ and $b$ (truncated to $a\leq b$)\\
$\M(\Hcal)$& Set of probability measures / distributions\\
$\Ncal(\theta, \sigma^2)$ & Normal distribution with mean $\theta$ and variance $\sigma^2$
\end{tabular}

\paragraph{Main notations}~\\
\begin{tabular}{cp{13cm}}
    $\X$ & Input space\\
    $\Y$ & Output/label space\\
    $\D$ &  Data distribution over $\XY$\\
    $\D^m$ & Distribution of a $m$-sample\\
    $\displaystyle \S=\{(\xbf_i,y_i)\}_{i=1}^m\sim \D^m$ & Learning set of $m$ examples drawn \iid from $\D$\\
$\A=\{\a_1,\dots,\a_n\}$ & Partition of the data from $\D$ into $n$ subgroups\\
  $\S=\{\S_\a\}_{\a\in\A}$ & Partition of $\S$ into $n$ subgroups\\
    $\forall \a\in\A,\ \S_\a=\{(\xbf_j,y_j)\}_{j=1}^{\ma}$ & A subgroup $\S_\a$ consists of $\ma$ examples\\
    $\Dac$ & Conditional distribution on $\a\in\A$\\
    $\pi$ & Reference distribution over $\A$\\
    $\rho$ & Distribution over $\A$\\
    $\Hcal$ & Hypothesis space of predictors $h:\X\to\Y$\\
    $\P$ & (PAC-Bayesian) prior distribution over $\Hcal$\\
   $\Q$ or $\Q_\S$ & (PAC-Bayesian) posterior distribution over $\Hcal$\\
   $\Phi:(\XY)^m\!\times\!\M(\Hcal) \!\to\! \M(\Hcal)$ & Deterministic algorithm to learn $\Q_\S=\Phi(\S, P)$
    \end{tabular}
    
\paragraph{Risk measures}~\\
    \begin{tabular}{cp{6cm}}
    $\ell(\cdot,\cdot)$  &  Loss function $\Y\!\times\! \Y\!\to\! [0,1]$\\[2mm]
    $\displaystyle\L(h) = \EE_{(\xbf,y)\sim \D} \ell(y,h(\xbf))$ & Classical true risk of $h$\\
    $\displaystyle
    \Lemp_{\S}(h) = \frac{1}{m} \sum_{i=1}^m \ell(y_i,h(\xbf_i))$ & Classical empirical risk of $h$\\
    $\displaystyle\La(h) = \EE_{(\xbf,y)\sim \Dac} \ell(y,h(\xbf))$ & Classical true risk of $h$ on subgroup $\a$\\
    $\displaystyle \Laemp(h)  =  \frac{1}{\ma} \sum_{j=1}^{\ma} \ell(y_j,h(\xbf_j))$ & Classical empirical risk of $h$ on subgroup $\S_\a$ of size $\ma$\\[1mm]
\midrule
  $\displaystyle\risk(h) =  \sup_{\rho \in E} \, \EE_{\a\sim \rho} \, \La(h)$
  & True risk measure \\[2mm]
     $\displaystyle \riskemp(h) =  \sup_{\rho \in E} \, \EE_{\a\sim \rho} \, \Laemp(h)$
     & Empirical risk measure \\[2mm]
     \small with  $\displaystyle E = E_{f,\beta} \defeq \bigg\{  \rho\  \bigg|\  \rho \ll \pi\ \text{and} \EE_{\a\sim\pi} f\bigg(\frac{d\rho}{d\pi}(\a)\!\bigg) \!\le\! \beta \bigg\} $ &  $f$-entropic risk measure\\[2mm]
     \small with $\displaystyle E \!=\! E_{\alpha} \!=\! \bigg\{ \rho  \ \bigg| \  \rho \ll \pi\ \text{and}\ \frac{d\rho}{d\pi} \!\le\! \frac{1}{\alpha}\bigg\} $ & Conditional Value at Risk (CVaR)\\[2mm]
      \small with $\displaystyle E \!=\! \bigg\{ \rho\, \bigg| \, \rho\!\ll\!\pi\ \text{and} \EE_{\a\sim\pi} f\bigg(\frac{d\rho}{d\pi}(\a)\!\bigg)\!\le\! \beta \ \text{and}\  \forall \a\!\in\!\A,\,  \frac{d\rho}{d\pi}(\a)  \!\le\! \frac{1}{\alpha} \bigg\}$ & Constrained $f$-entropic risk measure\\[4mm]
      \midrule
      $\displaystyle \risk(\Q) \defeq \sup_{\rho \in E}\ \ \EE_{\a\sim \rho} \ \EE_{h\sim\Q} \La(h)$ & Randomized risk measure\\[1mm]
      $\displaystyle \EE_{h\sim\Q} \risk(h) \defeq \EE_{h\sim\Q} \ \sup_{\rho \in E}\  \EE_{\a\sim \rho}\  \La(h)$ & we have $\risk(\Q) \leq \EE_{h\sim\Q} \risk(h)$
\end{tabular}

\paragraph{Specific notations of  \Cref{sec:self-bounding}, \ie, for the self-bounding algorithm}~\\
\begin{tabular}{cp{13cm}}
$\S$ & Learning set for the posterior\\
$\S_P$ & Learning set for the prior (independent from $\S$)\\
$\T$ & Test set \\
$U\subset \S$ & A mini-batch\\
$\P=\Ncal(\theta_\P, \sigma^2 I_d)$ & Prior parametrized by $\thetaP$\\
$\Q_\theta=\Ncal(\theta, \sigma^2 I_d)$ & Posterior parametrized by $\theta$ \\
$\theta$ & Parameters of $\Q$\\
$\hthetatilde$ & Model drawn from the current $\Q_{\theta}$ at each iteration\\
$\riskempB(\hthetatilde)$ & Risk measure evaluated on the mini-batch $U$\\
$\empBound(\cdot)$ &  Objective function associated with the bound\\
$h_{\hat{\theta}}$ & The final model drawn from the final $\Q_{\theta}$
\end{tabular}

\newpage 

\section{ABOUT THE LINK BETWEEN (CONSTRAINED) \texorpdfstring{$f$}{f}-ENTROPIC RISK MEASURES AND OCES}
\label{sec:link-f-entropic-risk-oce}

In order to compare more precisely the (constrained) \texorpdfstring{$f$}{f}-entropic risk measure and the Optimized Certainty Equivalents (OCE), we first present another formulation of the \texorpdfstring{$f$}{f}-entropic risk measure, and the definition of an OCE. 

\paragraph{(Constrained) $f$-entropic risk measure.} Let $\beta \geq 0$, recall from \Cref{ass:classic} and \Cref{def:f-entropic} that true and empirical $f$-entropic risk measures are defined by
\begin{align*}
\nonumber &\risk(h) =   \sup_{\rho \in E} \EE_{\a\sim \rho}  \La(h)\  \mbox{ and }\ \riskemp(h) =  \sup_{\rho \in E} \EE_{\a\sim \rho}  \Laemp(h),\\
&\mbox{with }  E = E_{f,\beta} \defeq \left\{ \, \rho\  \middle|\  \rho \ll \pi,\ \text{and} \EE_{\a\sim\pi} f\left(\frac{d\rho}{d\pi}(\a)\!\right) \le \beta \right\},
\end{align*}
where $f$ is defined such that $D_{f}(\rho\|\pi)\!\defeq\!\EE_{\a\sim\pi}\!\big[f\big(\frac{d\rho}{d\pi}(\a)\big)\big]$ is a $f$-divergence.\\
From \citet[Theorem~5.1]{ahmadi2012entropic}, we have the following equalities:
\begin{align}
\risk(h) \!=\!\! \inf_{t>0,\mu\in\mathbb{R}}\left\{ t\left[ \mu +\! \EE_{\a\sim\pi}f^*\!\left(\frac{\La(h)}{t}\!-\!\mu\!+\!\beta\right) \right] \right\}, \ \text{and}\  \riskemp(h) \!=\!\! \inf_{t>0,\mu\in\mathbb{R}}\left\{ t\left[ \mu \!+\! \EE_{\a\sim\pi}f^*\!\left(\frac{\Laemp(h)}{t}\!-\!\mu\!+\!\beta\right) \right] \right\},\label{eq:true-emp-f-entropic}
\end{align}
where $f^*$ is the convex conjugate of $f$.
Note that these results hold also for the constrained $f$-entropic risk measure since it is a $f$-entropic risk measure as we use the divergence $f+g_\alpha$ instead of $f$; see \Cref{sec:risk}.

\paragraph{OCE Risk Measure.} According to \citet{bental1986expected,bental2007old}, an OCE is defined by 
\begin{align}
\risk^{\texttt{oce}}(h) \defeq \inf_{\mu\in\mathbb{R}}\left\{ \mu + \EE_{\a\sim\pi}f^*\left(\La(h) - \mu\right) \right\} \ \text{and}\ \riskemp^{\texttt{oce}}(h) := \inf_{\mu\in\mathbb{R}}\left\{ \mu + \EE_{\a\sim\pi}f^*\left(\Laemp(h)-\mu\right) \right\}.\label{eq:oce}
\end{align}

\paragraph{Comparison.} 
By comparing \Cref{eq:true-emp-f-entropic} and \Cref{eq:oce}, we can remark that in \Cref{eq:oce}, we have $t=1$ and $\beta=0$. Following the proof of Theorem 5.1 in \citet{ahmadi2012entropic} (with $t=1$ and $\beta=0$), we deduce that  
\begin{align*}
\risk^{\texttt{oce}}(h) &:= \inf_{\mu\in\mathbb{R}}\left\{ \mu + \EE_{\a\sim\pi}f^*\left(\La(h) - \mu\right) \right\} = \sup_{\rho \ll \pi}\left\{ \EE_{\a\sim\rho}\La(h) - D_{f}(\rho\|\pi) \right\},\\
\text{and}\quad \riskemp^{\texttt{oce}}(h) &:= \inf_{\mu\in\mathbb{R}}\left\{ \mu + \EE_{\a\sim\pi}f^*\left(\Laemp(h) - \mu\right) \right\} = \sup_{\rho \ll \pi}\left\{ \EE_{\a\sim\rho}\Laemp(h) - D_{f}(\rho\|\pi) \right\}.
\end{align*}

Hence, as we can remark, the OCE corresponds to a different optimization problem from the (constrained) $f$-entropic risk measures. 
Indeed, the OCE finds the distribution $\rho$ that maximizes $\EE_{\a\sim\rho}\La(h) - D_{f}(\rho\|\pi)$ or $\EE_{\a\sim\rho}\Laemp(h) - D_{f}(\rho\|\pi)$. 
The (constrained) \texorpdfstring{$f$}{f}-entropic risk maximizes the risk $\EE_{\a\sim \rho}  \La(h)$ or $\EE_{\a\sim \rho}  \Laemp(h)$ while keeping $D_{f}(\rho\|\pi) \le \beta$.

\section{PROOF OF THEOREM~\ref{thm:general-ma}}
\label{sec:general-ma-proof}

In this section, we give the proof of the following theorem.

\thmgeneralboundmaalpha*

We prove \Cref{eq:general-ma-classical} in \Cref{sec:general-ma-classical-proof}, and   \Cref{eq:general-ma-dis} in \Cref{sec:general-ma-dis-proof}.

\subsection{Proof of \Cref{eq:general-ma-classical}}
\label{sec:general-ma-classical-proof}

To prove \Cref{eq:general-ma-classical}, we first prove \Cref{lem:general-ma}, which follows the steps of the general proof of the PAC-Bayesian theorem by \citet{germain2009pac} and a union bound.

\begin{lemma}\label{lem:general-ma}
    For any distribution $\D$ on $\XY$,
    for any positive, jointly convex function $\varphi(a,b)$,
    for any finite set $\A$ of $\cardA$ subgroups, 
    for any $\lambdaa >0$ for each $\a\! \in\! \A$,
    for any distribution $\pi$ over $\A$,
    for any distribution $\P  \in  \M(\Hcal)$, 
    for any loss function $\ell\!:\!\Y\!\times\!\Y\!\to\! \R$,
    for any $\delta \in (0,1]$,
    for any $\alpha  \in  (0,1)$,
    with probability at least $1 {-} \delta$ over $\S {\sim} \D^m$,
    for all distributions $\Q \in \M(\Hcal)$, we have
    \begin{align*}
          \EE_{\a \sim \pi }\varphi\left(\EE_{h \sim \Q} \Lemp_{\a}(h), \EE_{h \sim \Q} \L_{\a}(h)\right)  \le \EE_{\a \sim \pi} \frac{1}{\lambda_\a}\left[\KL(\Q\|\P) + \ln\left( \frac{n}{\delta} \EE_{\S' \sim \D^m} \EE_{h' \sim \P }\ e^{\lambda_\a \varphi\left( \Laempprime(h'), \Laprime(h')\right)} \right) \right].
    \end{align*}
\end{lemma}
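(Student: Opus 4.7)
The plan is to prove the per-subgroup bound first (by the standard \citet{germain2009pac}-style argument), then combine the $n$ per-subgroup inequalities via a union bound, and finally take expectation over $\a \sim \pi$. The construction is entirely modular: each subgroup is treated as its own independent PAC-Bayesian instance, with the failure budget $\delta$ split uniformly by the union bound into pieces of size $\delta/n$.

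More precisely, I fix $\a \in \A$ and consider the exponential moment
$\EE_{\S \sim \D^m} \EE_{h' \sim \P}\, e^{\lambda_\a \varphi(\Laemp(h'), \La(h'))}$.
By Markov's inequality applied to the nonnegative random variable
$\EE_{h' \sim \P}\, e^{\lambda_\a \varphi(\Laemp(h'), \La(h'))}$, with probability at least $1 - \delta/n$ over $\S \sim \D^m$,
\begin{align*}
\EE_{h' \sim \P}\, e^{\lambda_\a \varphi(\Laemp(h'), \La(h'))}
\le \frac{n}{\delta}\, \EE_{\S' \sim \D^m} \EE_{h' \sim \P}\, e^{\lambda_\a \varphi(\Laempprime(h'), \Laprime(h'))}.
\end{align*}
Next, by the Donsker--Varadhan change of measure, for every $\Q \ll \P$,
\begin{align*}
\lambda_\a\, \EE_{h \sim \Q} \varphi(\Laemp(h), \La(h))
\le \KL(\Q\|\P) + \ln \EE_{h' \sim \P}\, e^{\lambda_\a \varphi(\Laemp(h'), \La(h'))}.
\end{align*}
Finally, since $\varphi$ is jointly convex, Jensen's inequality yields
$\varphi(\EE_{h \sim \Q} \Laemp(h), \EE_{h \sim \Q} \La(h)) \le \EE_{h \sim \Q} \varphi(\Laemp(h), \La(h))$.
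Chaining these three inequalities gives the per-subgroup bound
\begin{align*}
\varphi\!\Big(\EE_{h \sim \Q} \Laemp(h),\, \EE_{h \sim \Q} \La(h)\Big)
\le \frac{1}{\lambda_\a}\!\left[\KL(\Q\|\P) + \ln\!\left(\frac{n}{\delta}\, \EE_{\S' \sim \D^m} \EE_{h' \sim \P} e^{\lambda_\a \varphi(\Laempprime(h'), \Laprime(h'))}\right)\right],
\end{align*}
valid uniformly in $\Q \in \M(\Hcal)$, with probability at least $1 - \delta/n$ over $\S \sim \D^m$.

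To close the proof, I apply a union bound over the $n$ subgroups so that all $n$ per-subgroup bounds hold simultaneously with probability at least $1 - \delta$, and I take $\EE_{\a \sim \pi}$ of both sides. Linearity of expectation on the right-hand side reproduces exactly the claimed bound. The main (mild) subtlety is keeping the quantifiers straight: the randomness in $\S$ is shared across subgroups, so the union bound applies cleanly and the ``for all $\Q$'' quantifier survives because each per-subgroup statement is itself a ``for all $\Q$'' statement. No step poses a real obstacle; the only bookkeeping point is to track that the $\ln(n/\delta)$ term is produced by Markov and then carried inside the $\ln$ as a multiplicative $n/\delta$ factor, which is what appears in the stated lemma.
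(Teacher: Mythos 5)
Your proposal is correct and follows essentially the same route as the paper's proof: Jensen's inequality via joint convexity of $\varphi$, the Donsker--Varadhan change of measure, Markov's inequality at level $\delta/n$, a union bound over the $n$ subgroups, and finally the expectation over $\a \sim \pi$. The only cosmetic difference is that you state the Jensen step explicitly whereas the paper folds it into its change-of-measure line, and you apply Markov before the change of measure rather than after; neither affects the argument.
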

\begin{proof}
First of all, our goal is to upper-bound $\lambda_\a\varphi\left(\EE_{h \sim \Q} \Laemp(h), \EE_{h \sim \Q} \La(h)\right)$ for each $\a\in\A$. To do so, we follow the steps of \citet{germain2009pac}.
From the Donsker-Varadhan representation of the KL divergence, we have
\begin{align}
\lambda_\a \varphi\left(\EE_{h \sim \Q} \Laemp(h), \EE_{h \sim \Q} \La(h)\right) &\le \KL(\Q \| \P) +\ln \left(  \EE_{h \sim \P }\ e^{\lambda_\a \varphi\left( \Laemp(h), \La(h)\right)} \right).\label{eq:general-ma-proof-3}
\end{align}
Now, we apply Markov's inequality to $\EE_{h \sim \P}e^{\lambda_\a \varphi(\Laemp(h), \La(h))}$, which is positive.
We have
\begingroup
\allowdisplaybreaks
\begin{align}
    &\PP_{\S \sim \D^m} \left[\EE_{h \sim \P }\ e^{\lambda_\a \varphi\left( \Laemp(h), \La(h)\right)} \le \frac{n}{\delta}\EE_{\S' \sim \D^m} \EE_{h' \sim \P }\ e^{\lambda_\a \varphi\left( \Laempprime(h'), \La(h')\right)} \right] \ge 1 - \frac{\delta}{n}\nonumber\\
    \iff \quad &\PP_{\S \sim \D^m} \left[\ln\left(\EE_{h \sim \P }\ e^{\lambda_\a \varphi\left( \Laemp(h), \La(h)\right)} \right) \le \ln\left( \frac{n}{\delta} \EE_{\S' \sim \D^m} \EE_{h' \sim \P }\ e^{\lambda_\a \varphi\left( \Laempprime(h'), \La(h')\right)} \right)  \right] \ge 1 - \frac{\delta}{n}. \label{eq:general-ma-proof-4}
\end{align}
\endgroup
Hence, by combining \Cref{eq:general-ma-proof-3} and \Cref{eq:general-ma-proof-4}, we have for any $\a \in \A$, 
\begin{align*}
    &\PP_{\S \sim \D^m} \left[ 
    \begin{array}{l}
         \displaystyle \forall \Q \in \M(\Hcal),  \\
         \displaystyle \varphi\left(\EE_{h \sim \Q} \Laemp(h), \EE_{h \sim \Q} \La(h)\right) \le \frac{1}{\lambda_\a}\left[ \KL(\Q\|\P) + \ln\left( \frac{n}{\delta} \EE_{\S' \sim \D^m} \EE_{h' \sim \P }\ e^{\lambda_\a \varphi\left( \Laempprime(h'), \La(h')\right)} \right) \right]
    \end{array}
     \right] \ge 1 - \frac{\delta}{n}.
\end{align*}
As $\A$ is finite with $|\A|=n$, we apply a union bound argument to obtain
\begingroup
\allowdisplaybreaks
\begin{align}
     \iff &\PP_{\S \sim \D^m} \left[ 
    \begin{array}{l}
         \displaystyle \forall \a \in \A,\  \forall \Q \in \M(\Hcal),  \\
         \displaystyle \varphi\left(\EE_{h \sim \Q} \Lemp_{\a}(h), \EE_{h \sim \Q} \L_{\a}(h)\right)   \\
         \displaystyle \phantom{aaaa} \le \frac{1}{\lambda_\a}\left[\KL(\Q\|\P) + \ln\left( \frac{n}{\delta} \EE_{\S' \sim \D^m} \EE_{h' \sim \P }\ e^{\lambda_\a \varphi\left( \Laempprime(h'), \Laprime(h')\right)} \right) \right]
    \end{array}
     \right] \ge 1-  \delta \\
     \iff &\PP_{\S \sim \D^m} \left[ 
    \begin{array}{l}
         \displaystyle \forall \a \in \A,\  \forall \Q \in \M(\Hcal),  \\
         \displaystyle \pi(\a) \varphi\left(\EE_{h \sim \Q} \Lemp_{\a}(h), \EE_{h \sim \Q} \L_{\a}(h)\right)   \\
         \displaystyle \phantom{aaaa} \le \pi(\a) \frac{1}{\lambda_\a}\left[\KL(\Q\|\P) + \ln\left( \frac{n}{\delta} \EE_{\S' \sim \D^m} \EE_{h' \sim \P }\ e^{\lambda_\a \varphi\left( \Laempprime(h'), \Laprime(h')\right)} \right) \right]
    \end{array}
     \right] \ge 1-  \delta \\
     \implies &\PP_{\S \sim \D^m} \left[ 
    \begin{array}{l}
         \displaystyle  \forall \Q \in \M(\Hcal),  \\
         \displaystyle \sum_{\a \in \A} \pi(\a) \varphi\left(\EE_{h \sim \Q} \Lemp_{\a}(h), \EE_{h \sim \Q} \L_{\a}(h)\right)   \\
         \displaystyle \phantom{aaaa} \le \sum_{\a \in \A} \pi(\a) \frac{1}{\lambda_\a}\left[\KL(\Q\|\P) + \ln\left( \frac{n}{\delta} \EE_{\S' \sim \D^m} \EE_{h' \sim \P }\ e^{\lambda_\a \varphi\left( \Laempprime(h'), \Laprime(h')\right)} \right) \right]
    \end{array}
     \right] \ge 1-  \delta \\
     \iff &\PP_{\S \sim \D^m} \left[ 
    \begin{array}{l}
         \displaystyle \forall \Q \in \M(\Hcal),  \\
         \displaystyle \EE_{\a \sim \pi }\varphi\left(\EE_{h \sim \Q} \Lemp_{\a}(h), \EE_{h \sim \Q} \L_{\a}(h)\right)   \\
         \displaystyle \phantom{aaaa} \le \EE_{\a \sim \pi} \frac{1}{\lambda_\a}\left[\KL(\Q\|\P) + \ln\left( \frac{n}{\delta} \EE_{\S' \sim \D^m} \EE_{h' \sim \P }\ e^{\lambda_\a \varphi\left( \Laempprime(h'), \Laprime(h')\right)} \right) \right] 
    \end{array}
     \right] \ge 1-  \delta,
     \label{eq:general-ma-proof-5}
    \end{align}
\endgroup
which is the desired result.
\end{proof}

Thanks to \Cref{lem:general-ma}, we are now ready to prove \Cref{eq:general-ma-classical} of \Cref{thm:general-ma}.

\begin{proof}
For any $\rho^* \in E$, we can define $\varepsilon_{\rho^*} \ge 0$ such that we have 
\begin{align*}
 \risk(h) = \sup_{\rho \in \env} \EE_{\a \sim \rho} \La(h) = \EE_{\a \sim \rho^*} \La(h) + \varepsilon_{\rho^*}.
\end{align*}
Therefore, we have for all $\rho^* \in E$
\begin{align}
\varphi\left(\riskemp(\Q) , \EE_{h \sim \Q}  \risk(h) - \varepsilon_{\rho^*}\right) & = \varphi\left(\sup_{\rho \in \env} \EE_{\a \sim \rho} \EE_{h \sim \Q} \Laemp(h) \, ,\, \EE_{h \sim \Q} \EE_{\a\sim \rho^{*}} \La(h) \right)\nonumber\\
&\le \varphi\left(\EE_{\a\sim \rho^{*}}\EE_{h \sim \Q} \Laemp(h), \EE_{\a\sim \rho^{*}}\EE_{h \sim \Q} \La(h)\right)\nonumber\\
&\le \EE_{\a\sim \rho^{*}} \varphi\left(\EE_{h \sim \Q} \Laemp(h), \EE_{h \sim \Q} \La(h)\right),
\label{eq:general-ma-proof-6}
\end{align}
where the first inequality comes from the fact that $\rho^* \in E$ and $\varphi$ is non-increasing with respect to its first argument, and we used, for the second inequality, Jensen's inequality (since $\varphi$ is jointly convex).
Moreover, as $\varphi$ is positive and since $\frac{d \rho^*}{d \pi}(\a) \le \frac{1}{\alpha}$ for all $\a\in\A$, we have
\begin{align}
    \EE_{\a\sim \rho^{*}} \varphi\left(\EE_{h \sim \Q} \Laemp(h), \EE_{h \sim \Q} \La(h)\right) &= \EE_{\a \sim \pi} \frac{d \rho^{*}}{d \pi}(\a)\   \varphi\left(\EE_{h \sim \Q} \Laemp(h), \EE_{h \sim \Q} \La(h)\right) \nonumber\\
    &\le \EE_{\a \sim \pi} \frac{1}{\alpha} \   \varphi\left(\EE_{h \sim \Q} \Laemp(h), \EE_{h \sim \Q} \La(h)\right) \nonumber\\
    &= \frac{1}{\alpha} \EE_{\a \sim \pi}  \varphi\left(\EE_{h \sim \Q} \Laemp(h), \EE_{h \sim \Q} \La(h)\right).
    \label{eq:general-ma-proof-7}
\end{align}
By combining \Cref{eq:general-ma-proof-6,eq:general-ma-proof-7} and \Cref{lem:general-ma} we get
\begin{align}
    \PP_{\S \sim \D^m} \left[ 
    \begin{array}{l}
         \displaystyle \forall \Q \in \M(\Hcal), \forall \rho^*\in E,\\
         \displaystyle \varphi\left(\riskemp(\Q) , \EE_{h \sim \Q}  \risk(h) - \varepsilon_{\rho^*}\!\right) \le \EE_{\a \sim \pi} \frac{1}{\alpha \, \lambda_\a}\left[ \KL(\Q\|\P) + \ln\left( \frac{n}{\delta} \EE_{\S' \sim \D^m} \EE_{h' \sim \P }\ e^{\lambda_\a \varphi\left( \Laempprime(h'), \Laprime(h')\right)} \right) \right]
    \end{array}
     \right] \ge 1-  \delta  \, .
\end{align}
Finally, since the bound holds for all $\rho^*\in E$, we can let $\varepsilon_{\rho^*} \rightarrow 0$ to get the desired result.
\end{proof}

\subsection{Proof of \Cref{eq:general-ma-dis}}
\label{sec:general-ma-dis-proof}

To prove \Cref{eq:general-ma-dis}, we first prove \Cref{lem:general-ma-dis}; the proof essentially follows the steps of \citet{rivasplata2020pac} before applying a union bound.

\begin{lemma}\label{lem:general-ma-dis}
    For any distribution $\D$ on $\XY$,
    for any positive, jointly convex function $\varphi(a,b)$,
    for any finite set $\A$ of $\cardA$ subgroups, 
    for any $\lambdaa >0$ for each $\a\! \in\! \A$,
    for any distribution $\pi$ over $\A$,
    for any distribution $\P  \in  \M(\Hcal)$, 
    for any loss function $\ell\!:\!\Y\!\times\!\Y\!\to\! [0,1]$,
    for any $\delta \in (0,1]$,
    for any $\alpha  \in  (0,1)$,
    for any algorithm $\Phi: (\XY)^m \!\times\! \M(\Hcal) \rightarrow \M(\Hcal)$,
    with probability at least $1-\delta$ over $\S\sim\D^m$ and $h\sim\Q_{\S}$, we have
    \begin{align*}
          \EE_{\a \sim \pi}\varphi\left(\Lemp_{\a}(h), \L_{\a}(h)\right)  \le \EE_{\a \sim \pi} \frac{1}{\lambda_\a}\left[ \lnplus\!\left(\frac{d\Q_{\S}}{d\P}(h)\right) + \ln\left( \frac{n}{\delta} \EE_{\S' \sim \D^m} \EE_{h' \sim \P }\ e^{\lambda_\a \varphi\left( \Laempprime(h'), \Laprime(h')\right)} \right) \right].
    \end{align*}
\end{lemma}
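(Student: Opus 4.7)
The plan is to adapt the proof of \Cref{lem:general-ma} to the disintegrated setting by replacing the Donsker--Varadhan change-of-measure inequality with the pointwise identity used by \citet{rivasplata2020pac-dis}, while keeping the outer structure (one bound per subgroup, then a union bound over the $n$ subgroups followed by $\EE_{\a \sim \prior}$) intact.

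Concretely, I would fix $\a \in \A$ and work with $f_\S(h) \defeq \tfrac{d\Q_\S}{d\P}(h)$, which is well-defined because $\Q_\S \ll \P$. The central step is to apply Markov's inequality to the non-negative random variable $\tfrac{1}{f_\S(h)}\,e^{\lambda_\a\varphi(\Laemp(h),\La(h))}$ under the joint law of $\S\sim\D^m$ and $h\sim\Q_\S$. Its expectation can be controlled by the standard change-of-measure identity for non-negative integrands:
\begin{align*}
\EE_{\S\sim\D^m}\EE_{h\sim\Q_\S}\tfrac{1}{f_\S(h)}\,e^{\lambda_\a\varphi(\Laemp(h),\La(h))} \,\le\, \EE_{\S\sim\D^m}\EE_{h\sim\P}e^{\lambda_\a\varphi(\Laemp(h),\La(h))} \,=\, M_\a,
\end{align*}
where $M_\a \defeq \EE_{\S'\sim\D^m}\EE_{h'\sim\P}e^{\lambda_\a\varphi(\Laempprime(h'),\Laprime(h'))}$ (just a variable renaming). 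Markov at level $\delta/n$ then says that with probability at least $1-\delta/n$ over $\S\sim\D^m$ and $h\sim\Q_\S$, the quantity inside the expectation is bounded by $\tfrac{n}{\delta}M_\a$. Taking logarithms, using $\ln x \le \lnplus(x)$ to turn $-\ln f_\S(h)$ into $\lnplus(d\Q_\S/d\P(h))$ on the right-hand side, and dividing by $\lambda_\a>0$ yields the per-subgroup bound.

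Finally, since $\A$ is finite of size $n$, a union bound will give the per-subgroup bound simultaneously for every $\a\in\A$ with probability at least $1-\delta$; multiplying by $\prior(\a)$ and summing---equivalently, taking $\EE_{\a\sim\prior}$ on both sides---produces the stated inequality. The only delicate point I anticipate is the change-of-measure step itself: since $\P$ need not be absolutely continuous with respect to $\Q_\S$, the quotient $1/f_\S(h)$ is only defined on $\{f_\S>0\}$. This is harmless because the integrand is non-negative, so any mass $\P$ may place on $\{f_\S=0\}$ is simply dropped and the identity becomes the displayed inequality, which is exactly what Markov requires.
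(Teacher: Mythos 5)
Your proposal is correct and follows essentially the same route as the paper's proof: the random variable you feed to Markov's inequality, $\tfrac{1}{f_\S(h)}e^{\lambda_\a\varphi(\Laemp(h),\La(h))}$, is exactly the paper's $e^{\lambda_\a\varphi(\Laemp(h),\La(h))-\ln\frac{d\Q_\S}{d\P}(h)}$, and the subsequent change of measure to $\P$, the $\ln(\cdot)\le\lnplus(\cdot)$ step, the union bound over the $n$ subgroups, and the final averaging under $\prior$ all match. Your remark that the change of measure is an inequality rather than an equality when $\P$ charges $\{f_\S=0\}$ is a (harmless) point of extra care relative to the paper's write-up.
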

\begin{proof}
We apply Markov's inequality on $e^{\lambda_\a \varphi\left( \Laemp(h), \La(h)\right)-\ln\frac{d\Q_\S}{d\P}(h)}$.
Indeed, we have, with probability at least $1-\delta/n$ over $\S\sim\D^m$ and $h\sim\Q_\S$
\begingroup
\allowdisplaybreaks
\begin{align*}
&e^{\lambda_\a \varphi\left( \Laemp(h), \La(h)\right)-\ln\frac{d\Q_\S}{d\P}(h)} \le \frac{n}{\delta}\EE_{\S'\sim\D^m}\EE_{h\sim\Q_{\S'}}e^{\lambda_\a \varphi\left( \Laemp(h), \La(h)\right)-\ln\frac{d\Q_{\S'}}{d\P}(h)}\\
\iff\quad &\ln\left(e^{\lambda_\a \varphi\left( \Laempprime(h), \La(h)\right)-\ln\frac{d\Q_\S}{d\P}(h)}\right) \le \ln\frac{n}{\delta} + \ln\left(\EE_{\S'\sim\D^m}\EE_{h\sim\Q_{\S'}}e^{\lambda_\a \varphi\left( \Laempprime(h), \La(h)\right)-\ln\frac{d\Q_{\S'}}{d\P}(h)}\right)\\
\iff\quad &\lambda_\a \varphi\left( \Laemp(h), \La(h)\right)-\ln\frac{d\Q_\S}{d\P}(h) \le \ln\frac{n}{\delta} + \ln\left(\EE_{\S'\sim\D^m}\EE_{h\sim\Q_{\S'}}e^{\lambda_\a \varphi\left( \Laempprime(h), \La(h)\right)-\ln\frac{d\Q_{\S'}}{d\P}(h)}\right)\\
\iff &\lambda_\a \varphi\left( \Laemp(h), \La(h)\right)-\ln\frac{d\Q_\S}{d\P}(h) \le \ln\frac{n}{\delta} + \ln\left(\EE_{\S'\sim\D^m}\EE_{h\sim\P} e^{\lambda_\a \varphi\left( \Laempprime(h), \La(h)\right)}\right)\\
\iff &\varphi\left( \Laemp(h), \La(h)\right) \le \frac{1}{\lambda_\a}\left[\ln\frac{d\Q_\S}{d\P}(h) + \ln\frac{n}{\delta} + \ln\left(\EE_{\S'\sim\D^m}\EE_{h\sim\P} e^{\lambda_\a \varphi\left( \Laempprime(h), \La(h)\right)}\right)\right].
\end{align*}
\endgroup
Furthermore, since $\ln(\cdot) \le \lnplus\!(\cdot)$, with probability at least $1-\delta/n$ over $\S\sim\D^m$ and $h\sim\Q_\S$, we have
\begin{align*}
\varphi\left( \Laemp(h), \La(h)\right) \le \frac{1}{\lambda_\a}\left[\lnplus\!\left(\frac{d\Q_\S}{d\P}(h)\right)+ \ln\frac{n}{\delta} + \ln\left(\EE_{\S'\sim\D^m}\EE_{h\sim\P} e^{\lambda_\a \varphi\left( \Laempprime(h), \La(h)\right)}\right)\right].
\end{align*}

As $\A$ is finite with $|\A|=n$, we apply the union bound argument to obtain
\begingroup
\allowdisplaybreaks
\begin{align*}
     \iff &\PP_{\S \sim \D^m, h\sim\Q_\S} \left[ 
    \begin{array}{l}
         \displaystyle \forall \a \in \A,  \\
         \displaystyle \varphi\left(\Lemp_{\a}(h), \L_{\a}(h)\right)   \\
         \displaystyle \phantom{aaaa} \le \frac{1}{\lambda_\a}\left[\lnplus\!\left(\frac{d\Q_\S}{d\P}(h)\right) + \ln\left( \frac{n}{\delta} \EE_{\S' \sim \D^m} \EE_{h' \sim \P }\ e^{\lambda_\a \varphi\left( \Laempprime(h'), \Laprime(h')\right)} \right) \right]
    \end{array}
     \right] \ge 1-  \delta \\
     \iff &\PP_{\S \sim \D^m, h\sim\Q_\S} \left[ 
    \begin{array}{l}
         \displaystyle \forall \a \in \A,  \\
         \displaystyle \pi(\a) \varphi\left(\Lemp_{\a}(h), \L_{\a}(h)\right)   \\
         \displaystyle \phantom{aaaa} \le \pi(\a) \frac{1}{\lambda_\a}\left[\lnplus\!\left(\frac{d\Q_\S}{d\P}(h)\right) + \ln\left( \frac{n}{\delta} \EE_{\S' \sim \D^m} \EE_{h' \sim \P }\ e^{\lambda_\a \varphi\left( \Laempprime(h'), \Laprime(h')\right)} \right) \right]
    \end{array}
     \right] \ge 1-  \delta \\
     \implies &\PP_{\S \sim \D^m, h\sim\Q_\S} \left[ 
    \begin{array}{l}
         \displaystyle \sum_{\a \in \A} \pi(\a) \varphi\left(\Lemp_{\a}(h), \L_{\a}(h)\right)   \\
         \displaystyle \phantom{aaaa} \le \sum_{\a \in \A} \pi(\a) \frac{1}{\lambda_\a}\left[\lnplus\!\left(\frac{d\Q_\S}{d\P}(h)\right) + \ln\left( \frac{n}{\delta} \EE_{\S' \sim \D^m} \EE_{h' \sim \P }\ e^{\lambda_\a \varphi\left( \Laempprime(h'), \Laprime(h')\right)} \right) \right]
    \end{array}
     \right] \ge 1-  \delta \\
     \iff &\PP_{\S \sim \D^m, h\sim\Q_\S} \left[ 
    \begin{array}{l}
         \displaystyle \EE_{\a \sim \pi }\varphi\left(\Lemp_{\a}(h), \L_{\a}(h)\right)   \\
         \displaystyle \phantom{aaaa} \le \EE_{\a \sim \pi} \frac{1}{\lambda_\a}\left[\lnplus\!\left(\frac{d\Q_\S}{d\P}(h)\right) + \ln\left( \frac{n}{\delta} \EE_{\S' \sim \D^m} \EE_{h' \sim \P }\ e^{\lambda_\a \varphi\left( \Laempprime(h'), \Laprime(h')\right)} \right) \right] 
    \end{array}
     \right] \ge 1-  \delta,
    \end{align*}
\endgroup
which is the desired result.
\end{proof}

We are now ready to prove \Cref{eq:general-ma-dis} of \Cref{thm:general-ma}.

\begin{proof}
For any $\rho^* \in E$, we can define $\varepsilon_{\rho^*} \ge 0$ such that we have 
\begin{align*}
 \risk(h) = \sup_{\rho \in \env} \EE_{\a \sim \rho} \La(h) = \EE_{\a \sim \rho^*} \La(h) + \varepsilon_{\rho^*}.
\end{align*}
Therefore, we have for all $\rho^* \in E$
\begin{align}
\varphi\left(\riskemp(h) , \risk(h) - \varepsilon_{\rho^*}\right) & = \varphi\left(\sup_{\rho \in \env} \EE_{\a \sim \rho} \Laemp(h) \, ,\, \EE_{\a\sim \rho^{*}} \La(h) \right)\nonumber\\
&\le \varphi\left(\EE_{\a\sim \rho^{*}}\Laemp(h), \EE_{\a\sim \rho^{*}}\La(h)\right)\nonumber\\
&\le \EE_{\a\sim \rho^{*}} \varphi\left(\Laemp(h), \La(h)\right),
\label{eq:general-ma-dis-proof-1}
\end{align}
where the first inequality comes from the fact that $\rho^{*} \in E$ and $\varphi$ is non-increasing with respect to its first argument, and we used, for the second inequality, Jensen's inequality (since $\varphi$ is jointly convex). \\
Moreover, as $\varphi$ is positive and since $\frac{d \rho^{*}}{d \pi}(\a) \le \frac{1}{\alpha}$ for all $\a\in\A$, we have
\begin{align}
    \EE_{\a\sim \rho^{*}} \varphi\left(\Laemp(h), \La(h)\right) &= \EE_{\a \sim \pi} \frac{d \rho^{*}}{d \pi}(\a)\   \varphi\left(\Laemp(h), \La(h)\right) \nonumber\\
    &\le \EE_{\a \sim \pi} \frac{1}{\alpha} \   \varphi\left(\Laemp(h), \La(h)\right) \nonumber\\
    &= \frac{1}{\alpha} \EE_{\a \sim \pi}  \varphi\left(\Laemp(h), \La(h)\right).
    \label{eq:general-ma-dis-proof-2}
\end{align}
By combining \Cref{eq:general-ma-dis-proof-1,eq:general-ma-dis-proof-2} and \Cref{lem:general-ma-dis} we get
\begin{align}
    \PP_{\S \sim \D^m} \! \left[ 
    \begin{array}{l}
         \displaystyle \forall \rho^*\in E, \\
         \displaystyle \varphi\left(\riskemp(h) , \risk(h) - \varepsilon_{\rho^*}\right) \le \EE_{\a \sim \pi} \frac{1}{\alpha \, \lambda_\a}\left[ \lnplus\!\left(\frac{d\Q_\S}{d\P}(h)\right) + \ln\left( \frac{n}{\delta} \EE_{\S' \sim \D^m} \EE_{h' \sim \P }\ e^{\lambda_\a \varphi\left( \Laempprime(h'), \Laprime(h')\right)} \right) \right]
    \end{array}
   \!  \right] \ge 1\!-\! \delta  \,.
\end{align}
Finally, since the bound holds for all $\rho^*\in E$, we can have $\varepsilon_{\rho^*} \rightarrow 0$ to get the desired result.
\end{proof}

\section{ABOUT THE \texorpdfstring{$\kl^+$}{kl+}}
\label{sec:klplus-proof}

In this section, we prove two properties of $\kl^+$ that are useful in \Cref{sec:cor-general-ma-proof}.

\begin{lemma}[Useful properties on $\kl^+$] 
    For any $a, b \in [0,1]$ we have 
    \begin{align*}
        \kl(a\|b) \triangleq a \ln \frac{a}{b}{+}(1{-}a) \ln \frac{1-a}{1-b} \quad \text{ and } \quad
        \kl^+ \left(a \middle\|b\right) \triangleq  \begin{cases}
                    \kl(a\|b) &\text{if } a \le b, \\
                    0 &\text{otherwise.}
                    \end{cases}
    \end{align*}
    \begin{enumerate}
        \item $\kl^+\left(a \middle\|b\right)$ is non-increasing in $a$ for any fixed $b$. 
        \item $\kl^+(a\|b) \le \kl(a \|b)$.
    \end{enumerate}
    \label{lem:kl}
\end{lemma}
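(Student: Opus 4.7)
The plan is to prove both properties by a direct case split according to the definition of $\kl^+$, namely $a \le b$ versus $a > b$.

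For property $1$, in the region $a > b$ the function $\kl^+(a\|b)$ is identically $0$, so it is trivially non-increasing there. In the region $a \le b$, $\kl^+$ coincides with $\kl$, and I would just differentiate with respect to $a$: a short computation gives
\begin{align*}
\frac{\partial}{\partial a}\kl(a\|b) = \ln\frac{a(1-b)}{b(1-a)},
\end{align*}
which is $\le 0$ exactly when $a(1-b) \le b(1-a)$, i.e.\ when $a \le b$. Hence $a \mapsto \kl(a\|b)$ is non-increasing on $[0,b]$. Since both pieces agree at the junction $a=b$ (where $\kl(b\|b)=0$), the two cases glue together into a globally non-increasing function on $[0,1]$.

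For property $2$, the inequality is an equality on $\{a \le b\}$ by definition of $\kl^+$, and on $\{a > b\}$ it reduces to $0 \le \kl(a\|b)$, which follows from the non-negativity of the KL divergence (Gibbs' inequality).

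There is no real obstacle here: the statement is essentially a bookkeeping lemma unpacking the definition of the truncated KL, and the only analytical input is the standard sign computation for $\partial_a \kl(a\|b)$.
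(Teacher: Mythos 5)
Your proposal is correct and follows essentially the same route as the paper's proof: the identical case split on $a\le b$ versus $a>b$, the same derivative computation $\partial_a \kl(a\|b)=\ln\frac{a(1-b)}{b(1-a)}$ with the same sign argument for property 1, and the same non-negativity-of-KL argument for property 2. Your extra remark about the two pieces gluing at $a=b$ is a small refinement the paper omits, but it does not change the substance of the argument.
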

\begin{proof}[Proof of 1.]
If $a > b$, by definition, $\kl^+(a\|b) = 0$, which is constant.
Otherwise, if $a \leq b$, we compute the derivative of $\kl(a \| b)$ with respect to $a$. We have
\begin{align*}
\frac{d}{da} \kl(a\|b) &= \frac{d}{da} \left[ a \ln \frac{a}{b} + (1-a) \ln \frac{1-a}{1-b} \right]\\
    &= \ln \frac{a}{b} - \ln \frac{1-a}{1-b}. \\
    &= \ln \left( \frac{a(1-b)}{b(1-a)} \right).
\end{align*}
For $a \leq b$, we have $\frac{a(1-b)}{b(1-a)} \leq 1$, so its logarithm is non-positive, meaning $\frac{d}{da} \kl(a \| b) \leq 0.$
Thus, $\kl(a \| b)$ is non-increasing in $a$ when $a \leq b$.
\end{proof}

\begin{proof}[Proof of 2.]
If $a\le b$, $\kl^+(a\|b) = \kl(a\|b)$. Otherwise, $a > b$, $\kl^+(a\|b) = 0 \le \kl(a\|b)$ as $\kl(a\|b) \ge 0$.
\end{proof}

\begin{lemma}[Pinsker's inequality for $\kl^+$]
    \label{lem:pinsker}
For any $a, b \in [0,1]$,
    \begin{align*}
        b - a \le \sqrt{\frac{1}{2} \ \kl^+\left(a \middle\| b\right)}
    \end{align*}  
    \end{lemma}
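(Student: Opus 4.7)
The plan is to split on whether $a \le b$ or $a > b$ and in each case reduce to the standard Pinsker's inequality for the binary KL, namely $|a - b| \le \sqrt{\tfrac{1}{2}\,\kl(a\|b)}$ for $a, b \in [0,1]$, which I will invoke as a known result.

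First, suppose $a \le b$. Then by definition of $\kl^+$ (recalled just before the statement), we have $\kl^+(a\|b) = \kl(a\|b)$. Since $b - a \ge 0$ in this regime, $b - a = |a - b|$, so the standard Pinsker inequality directly yields
\begin{align*}
b - a = |a - b| \le \sqrt{\tfrac{1}{2}\,\kl(a\|b)} = \sqrt{\tfrac{1}{2}\,\kl^+(a\|b)},
\end{align*}
which is the desired conclusion.

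Second, suppose $a > b$. Then $b - a < 0$. On the right-hand side, $\kl^+(a\|b) = 0$ by definition, so the bound becomes $b - a \le 0$, which is trivially true. Combining the two cases establishes the lemma. The only nontrivial ingredient is invoking standard Pinsker in the first case, and the non-symmetric truncation built into $\kl^+$ makes the second case automatic; there is no real obstacle here beyond ensuring the sign convention of $b - a$ matches the one-sided truncation $a \le b$ used to define $\kl^+$.
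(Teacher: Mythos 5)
Your proof is correct and follows essentially the same route as the paper: split on $a \le b$ versus $a > b$, invoke the standard Pinsker inequality in the first case (where $\kl^+ = \kl$ and $b - a \ge 0$), and observe that the second case is trivial since the left-hand side is negative and the right-hand side is zero. No gaps.
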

\begin{proof}
    If $a\le b$, $\kl^+ = \kl$, we apply Pinsker's inequality. Otherwise, $a>b$, meaning $b-a < 0$, and $\sqrt{\frac{1}{2} \ \kl^+\left(a \middle\| b\right)} = 0$, so the inequality holds.
\end{proof}

\section{COROLLARIES OF THEOREM~\ref{thm:general-ma}}
\label{sec:cor-general-ma-proof}

\subsection{\Cref{cor:mca-dis}}
\label{sec:mca-dis-proof}

\corboundseegermcadis*
\begin{proof}[Proof of \Cref{eq:seeger-ma-dis}]
    As $\kl^+(a,b)$ is positive and non-increasing in $a$ (\Cref{lem:kl}) we can apply \Cref{thm:general-ma} with $\lambdaa = \ma$ for any $\a\in A$ and the function $\kl^+$.
    We have with probability at least $1 {-} \delta$ over $\S \sim \D^m$, $\forall \Q \in \M(\Hcal)$,
    \begin{align}
    \kl^+ \left(\riskemp\left(\Q\right) \middle\| \risk\left(h\right)\right) \le \EE_{\a \sim \pi} \frac{1}{\alpha \,\ma}\left[ \KL(\Q\|\P) + \ln\left(\frac{\cardA }{\delta}\EE_{\S' \sim \D^m} \EE_{h'\sim\P}e^{\ma\kl^+ \left( \Laempprime(h') \middle\| \La(h')\right)}\right) \right] \,.
    \label{eq:proof-ma-1}
    \end{align}
    
    \noindent Since $P$ does not depend on $\S'$, we have for any $\a \in \A$,
    \begin{align*}
        \ln  \Biggl(  \frac{\cardA }{\delta}  \EE_{\S' \sim  \D^m} \EE_{h'\sim P}  e^{  \ma \, \kl^+\left( \Laempprime(h') \middle\| \La(h') \right) }  \Biggr) 
        = \ln  \Biggl( \frac{\cardA }{\delta} \EE_{h'\sim P} \EE_{\S'\sim \D^m} e^{ \ma \, \kl^+\left( \Laempprime(h') \middle\| \La(h') \right) } \Biggr).
    \end{align*}
    \noindent Thanks to \citet{maurer2004note}, for any $\a \in \A$ for any $h \in \Hcal$, we have
    \begin{align*}
        \EE_{\mathcal{S'} \sim \D^m} e^{ \ma \, \kl^+\left( \Laempprime(h) \; \middle\| \; \La(h) \right) } \le \EE_{\mathcal{S'} \sim \D^m} e^{ \ma \, \kl\left( \Laempprime(h) \; \middle\| \; \La(h) \right) }
        \le 2 \sqrt{\ma},
    \end{align*}
    \noindent Where the first inequality comes from the fact that $\kl^+ \le \kl$ (see \Cref{lem:kl}).
    
    \noindent Therefore, we have
    \begin{align}
        \ln  \Biggl( \frac{\cardA }{\delta} \EE_{h'\sim P} \EE_{\S'\sim \D^m} e^{ \ma \, \kl^+\left( \Laempprime(h') \middle\| \La(h') \right) } \Biggr)
        \le \ln  \Biggl( \frac{2\cardA \sqrt{\ma}}{\delta}\Biggr).
        \label{eq:proof-ma-2}
    \end{align}
    We get the desired result by combining \Cref{eq:proof-ma-1} and \Cref{eq:proof-ma-2}
\end{proof}

\begin{proof}[Proof of \Cref{eq:mcallester-ma-dis}]
    We apply \Cref{lem:pinsker} on \Cref{eq:seeger-ma-dis} and rearrange the terms.
\end{proof}

\subsection{\Cref{cor:mca-pb}}
\label{sec:mca-pb-proof}

\begin{restatable}{corollary}{corboundseegermca}
\label{cor:mca-pb}
For any  \emph{finite} set of $\cardA$ subgroups $\A$, 
for any distribution $\pi$ over $\A$,
for any distribution $\D$ over $\XY$,
for any distribution $\P  \in  \M(\Hcal)$, 
for any loss function $\ell: \Y  \times  \Y  \to  [0,1]$,
for any $\delta \in (0,1]$,
for any $\alpha   \in  (0,1)$
with probability at least $1 {-} \delta$ \mbox{over $\S {\sim} \D^m$},
for all distributions \mbox{$\Q  \in  \M(\Hcal)$},
we have 
\begin{align}
\label{eq:seeger-ma-dis-pb} \kl^+ \Big( \riskemp(\Q) \Big\| \EE_{h \sim \Q}   \risk(h) \Big)  \le  \EE_{\a \sim \pi} \!   \frac{\KL(\Q\|\P)   +   \ln\frac{2 \cardA  \sqrt{\ma}}{\delta}}{\alpha \, \ma},\\
\label{eq:mcallester-ma-dis-pb}\text{and}\ \EE_{h \sim \Q}   \risk(h)  \le  \riskemp(\Q)  +  \sqrt{ \EE_{\a \sim \pi} \!  \frac{\KL(\Q\|\P)  +  \ln\frac{2 \cardA  \sqrt{\ma}}{\delta}}{2 \, \alpha \, \ma}}.
\end{align}
\end{restatable}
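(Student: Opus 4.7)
\textbf{Proof plan for \Cref{cor:mca-pb}.} The plan is to mirror the proof of \Cref{cor:mca-dis} (given in \Cref{proof:mca-dis}), but starting from the classical PAC-Bayes bound \Cref{eq:general-ma-classical} of \Cref{thm:general-ma} instead of its disintegrated counterpart \Cref{eq:general-ma-dis}. This substitution is exactly what turns the disintegrated term $\lnplus\!\big(\tfrac{d\Q_{\S}}{d\P}(h)\big)$ into the KL-divergence $\KL(\Q\|\P)$, while all other terms (the reference expectation over $\a\sim\pi$, the factor $\tfrac{1}{\alpha\, m_\a}$, and the moment generating function) remain structurally identical.

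First, I will instantiate \Cref{eq:general-ma-classical} with the deviation function $\varphi(a,b)=\kl^+(a\|b)$ and with the choice $\lambda_\a = m_\a$ for every $\a\in\A$. To justify this instantiation, I need that $\kl^+$ is positive, jointly convex, and non-increasing in its first argument for any fixed second argument: positivity is immediate from the definition, joint convexity follows from joint convexity of the standard $\kl$ together with the fact that the truncation to $a\le b$ preserves convexity, and the monotonicity in the first argument is exactly property~1 of \Cref{lem:kl}. This yields, with probability at least $1-\delta$ over $\S\sim\D^m$, for all $\Q\in\M(\Hcal)$,
\begin{align*}
\kl^+\!\Big(\riskemp(\Q)\,\Big\|\,\EE_{h\sim\Q}\risk(h)\Big)\le \EE_{\a\sim\pi}\frac{1}{\alpha\,m_\a}\bigg[\KL(\Q\|\P)+\ln\Big(\tfrac{n}{\delta}\EE_{\S'\sim\D^m}\EE_{h'\sim\P} e^{m_\a\kl^+(\Laempprime(h')\|\Laprime(h'))}\Big)\bigg].
\end{align*}

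Second, I will bound the inner expectation exactly as in \Cref{proof:mca-dis}. Since $\P$ does not depend on $\S'$, I swap the expectations over $\S'$ and $h'$, and then use $\kl^+\le\kl$ (property~2 of \Cref{lem:kl}) together with \citet{maurer2004note}'s inequality $\EE_{\S'\sim\D^m} e^{m_\a\kl(\Laempprime(h')\|\Laprime(h'))}\le 2\sqrt{m_\a}$, which holds for every fixed hypothesis $h'$. Plugging this back yields the logarithmic term $\ln\tfrac{2n\sqrt{m_\a}}{\delta}$ and establishes \Cref{eq:seeger-ma-dis-pb}.

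Third, to obtain \Cref{eq:mcallester-ma-dis-pb}, I will apply Pinsker's inequality for $\kl^+$ (\Cref{lem:pinsker}), which gives $\EE_{h\sim\Q}\risk(h)-\riskemp(\Q)\le\sqrt{\tfrac{1}{2}\kl^+(\riskemp(\Q)\|\EE_{h\sim\Q}\risk(h))}$, and combine this with \Cref{eq:seeger-ma-dis-pb} after rearranging. Since this step is identical to the disintegrated case, I expect no real obstacle: the only subtlety is ensuring that \Cref{lem:pinsker} is applied with arguments in $[0,1]$, which holds because the loss $\ell$ is bounded in $[0,1]$ and therefore so are both $\riskemp(\Q)$ and $\EE_{h\sim\Q}\risk(h)$. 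The hardest step is arguably the joint-convexity check for $\kl^+$ needed to invoke \Cref{thm:general-ma}, but this is standard and already implicit in the earlier disintegrated corollary.
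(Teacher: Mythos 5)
Your proposal matches the paper's own proof essentially step for step: instantiate \Cref{eq:general-ma-classical} of \Cref{thm:general-ma} with $\varphi=\kl^+$ and $\lambda_\a=m_\a$, swap the expectations over $\S'$ and $h'$, bound the moment term by $2\sqrt{m_\a}$ via $\kl^+\le\kl$ and \citet{maurer2004note}, and finish with \Cref{lem:pinsker}. The only difference is cosmetic: you explicitly argue the joint convexity of $\kl^+$ (which \Cref{thm:general-ma} requires but the paper leaves implicit), which is a harmless and indeed slightly more careful addition.
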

\begin{proof}[Proof of \Cref{eq:seeger-ma-dis-pb}]
    As $\kl^+(a,b)$ is positive and non-increasing in $a$ (\Cref{lem:kl}) we can apply of \Cref{thm:general-ma} with $\lambdaa = \ma$ for any $\a\in A$ and the function $\kl^+$.
    We have with probability at least $1 {-} \delta$ over $\S \sim \D^m$, $\forall \Q \in \M(\Hcal)$,
    \begin{align}
    \kl^+ \left(\riskemp\left(\Q\right) \middle\| \risk\left(h\right)\right) \le \EE_{\a \sim \pi} \frac{1}{\alpha \,\ma}\left[ \KL(\Q\|\P) + \ln\left(\frac{\cardA }{\delta}\EE_{\S' \sim \D^m} \EE_{h'\sim\P}e^{\ma\kl^+ \left( \Laempprime(h') \middle\| \La(h')\right)}\right) \right] \,.
    \label{eq:seeger-ma-dis-pb-proof-1}
    \end{align}
    
    \noindent Since $P$ does not depend on $\S'$ we have for any $\a \in \A$,
    \begin{align*}
        \ln  \Biggl(  \frac{\cardA }{\delta}  \EE_{\S' \sim  \D^m} \EE_{h'\sim P}  e^{  \ma \, \kl^+\left( \Laempprime(h') \middle\| \La(h') \right) }  \Biggr) 
        = \ln  \Biggl( \frac{\cardA }{\delta} \EE_{h'\sim P} \EE_{\S'\sim \D^m} e^{ \ma \, \kl^+\left( \Laempprime(h') \middle\| \La(h') \right) } \Biggr).
    \end{align*}
    \noindent Thanks to \citet{maurer2004note}, for any $\a \in \A$ for any $h \in \Hcal$, we have
    \begin{align*}
        \EE_{\mathcal{S'} \sim \D^m} e^{ \ma \, \kl^+\left( \Laempprime(h) \; \middle\| \; \La(h) \right) } \le \EE_{\mathcal{S'} \sim \D^m} e^{ \ma \, \kl\left( \Laempprime(h) \; \middle\| \; \La(h) \right) } \le 2 \sqrt{\ma},
    \end{align*}
    where the first inequality comes from the fact that $\kl^+ \le \kl$ (see \Cref{lem:kl}).
    Therefore, we have
    \begin{align}
        \ln  \Biggl( \frac{\cardA }{\delta} \EE_{h'\sim P} \EE_{\S'\sim \D^m} e^{ \ma \, \kl^+\left( \Laempprime(h') \middle\| \La(h') \right) } \Biggr)
        \le \ln  \Biggl( \frac{2\cardA \sqrt{\ma}}{\delta}\Biggr).
        \label{eq:seeger-ma-dis-pb-proof-2}
    \end{align}
    We get the desired result by combining \Cref{eq:seeger-ma-dis-pb-proof-1} and \Cref{eq:seeger-ma-dis-pb-proof-2}
\end{proof}

\begin{proof}[Proof of \Cref{eq:mcallester-ma-dis-pb}]
    We apply \Cref{lem:pinsker} on \Cref{eq:seeger-ma-dis-pb} and rearrange the terms.
\end{proof}

\section{THEOREM~\ref{thm:bound-for-one-example}}
\label{sec:bound-for-one-example-proof}

\thmboundoneexample*

In the following, we first start by proving \Cref{eq:mcallester-one-dis} and then we prove \Cref{eq:mcallester-one}.

\subsection{Proof of \Cref{eq:mcallester-one-dis}}

To prove \Cref{thm:bound-for-one-example}, we first prove the following lemma.

\begin{lemma}
For any distribution $\D$ on $\XY$, 
for any loss function $\ell\!:\!\Y\!\times\!\Y\!\to\! [0,1]$,
for any constrained $f$-entropic risk measure $\riskemp$ satisfying \Cref{def:constrained-f-entropic} \emph{and} \Cref{eq:ass-small},
for any hypothesis $h\in\Hcal$,
for any $\delta \in (0,1]$, for any $\alpha\in (0,1]$,
we have
    \begin{align*}
        \PP_{\S \sim \D^m} \left[ \left| \riskemp(h)  - \EE_{\S' \sim \D^m} \riskempprime(h) \right | \ge \frac{1}{\alpha} \sqrt{\frac{\ln (2/\delta)}{2m}} \right] \le \delta.
    \end{align*}
    \label{lem:brown}
\end{lemma}
\begin{proof}
To prove the result, we aim to apply McDiarmid's inequality.
To do so, we need to find an upper-bound of $\sup_{(x_j', y_j') \in \X\times\Y}\sup_{\S \in (\X\times\Y)^m} |\riskemp(h) - \riskempprimej(h)|$, where $\S$ and $\S'_j$ differ from the $j$-th example.
For any $h\in\Hcal$, any $(x_j', y_j') \in \X\times\Y$ and $\S \in (\X\times\Y)^m$, we have
\allowdisplaybreaks[4]
\begin{align*}
\riskemp(h) - \riskempprimej(h) &= \sup_{\rho \in \Eemp}\left\{ \sum_{i=1}^m \rho(i)\cdot\ell(y_i, h(\xbf_i))\right\} - \sup_{\rho \in \Eemp}\left\{ \sum_{i=1}^m \rho(i)\cdot\ell(y_i', h(\xbf_i'))\right\}\\
&\le \sup_{\rho \in \Eemp}\left\{ \sum_{i=1}^m \rho(i)\cdot\ell(y_i, h(\xbf_i)) - \sum_{i=1}^m \rho(i)\cdot\ell(y_i', h(\xbf_i')) \right\}\\
&= \sup_{\rho \in \Eemp}\left\{ \sum_{i=1}^m \rho(i)\cdot\left(\ell(y_i, h(\xbf_i)) - \ell(y_i', h(\xbf_i')) \right) \right\}\\
&\le \sup_{\rho \in \Eemp}\left\{ \sum_{i=1}^m \rho(i)\cdot\left|\ell(y_i, h(\xbf_i)) - \ell(y_i', h(\xbf_i')) \right| \right\}\\
&= \sup_{\rho \in \Eemp}\Bigg\{ \rho(j)\cdot\left|\ell(y_j, h(\xbf_j)) - \ell(y_j', h(\xbf_j')) \right| \Bigg\}\\
&\le \sup_{\rho \in \Eemp}\left\{ \rho(j) \right\}\\
&\le \sup_{\rho \in \Eemp}\left\{ \frac{1}{\alpha}\pi(j)\right\} \ =\  \frac{1}{m\alpha}\,.
\end{align*}
Moreover, by doing the same steps, for $\riskempprimej(h) - \riskemp(h)$, we obtain: $\displaystyle \riskempprimej(h) - \riskemp(h) \le \frac{1}{m\alpha}$.

Finally, we get the desired result by applying McDiarmid's inequality.
\end{proof}

Now we recall Occam's hammer\footnote{\Cref{lem:occams-hammer} is a simpler version than Occam's hammer presented in \citet{blanchard2007occam}.} (Theorem 2.4 of \citet{blanchard2007occam}) that we use along with \Cref{lem:brown} to prove \Cref{eq:mcallester-one-dis}.

\begin{lemma}[Occam's hammer]
We assume that\\[-8mm]
\begin{enumerate}
    \item we have 
    \begin{align*}
    \forall h \in \Hcal,\ \forall \delta \in [0,1], \quad \PP_{\S \sim \D^m} \left[\S \in \mathcal{B}(h,\delta) \right] \le \delta,
    \end{align*}
    where $\mathcal{B}(h,\delta)$ is a set of bad events at level $\delta$ for $h$;
    \item the function $(\S, h, \delta) \in \Z^m \times \Hcal \times [0, 1] \rightarrow \mathbf{1}_{\{\S \in \mathcal{B}(h,\delta)\}}$ is jointly measurable in its three variables;
    \item for any $h \in H$, we have $\mathcal{B}(h,0) = \emptyset$;
    \item for any $h\in\Hcal$, $\mathcal{B}(h,\delta)$ is a nondecreasing sequence of sets: for $\delta \le \delta'$, we have $\mathcal{B}(h,\delta) \subseteq \mathcal{B}(h,\delta')$.
\end{enumerate} 
Then, we have
\begin{align*}
\PP_{\S \sim \D, \ h \sim \Q_\S}  \left[ \S \in \mathcal{B}\left(h, \Delta\left(h, \left[\frac{d\Q_\S}{d\P}(h)\right]^{-1}\right)\right) \right] \le \delta,
\end{align*}
where $\Delta(h, u) := \min(\delta \beta(u), 1)$, with $\Gamma$ be a probability distribution on $(0, +\infty)$ and $\beta(x) = \int_0^x u d\Gamma (u)$ for $x \in (0, +\infty)$.
  \label{lem:occams-hammer}
\end{lemma}

We are now ready to prove \Cref{eq:mcallester-one-dis} based on \Cref{lem:brown} and \Cref{lem:occams-hammer}.

\begin{proof}
Thanks to \Cref{lem:brown} we define for any $h\in\Hcal$, any $\delta \in [0,1]$, 
\begin{align} 
\label{eq:bad-event} \mathcal{B}(h,\delta) = \left\{ \S \in \Z^m \;\middle| \; \left| \riskemp(h)  - \EE_{\S' \sim \D^m} \riskempprime(h) \right | > \frac{1}{\alpha} \sqrt{\frac{\ln (2/\delta)}{2m}}\right\}.
\end{align}

Now we apply \Cref{lem:occams-hammer} to our set of \Cref{eq:bad-event}. As in the proof of Proposition 3.1 in \citet{blanchard2007occam}, we set $\Gamma$ as the probability distribution on $[0,1]$ having density $\Gamma(u)=\frac{1}{k} u^{-1 + \frac{1}{k}}$ for any $k>0$. Then we can compute $\beta(x)$.
For the sake of completeness, we compute $\beta$.
We consider two cases.

\textbullet~For $x \le 1$, we have
\begingroup
\allowdisplaybreaks
\begin{align*}
    \beta(x) &= \int_0^x u d \Gamma(u)\\
    &= \int_0^x u \frac{1}{k} u^{-1 + \frac{1}{k}}du\\
    &= \frac{1}{k} \int_0^x  u^{ \frac{1}{k}}du \\
    & = \frac{1}{k} \left[ \frac{1}{\frac{1}{k} + 1} u^{ \frac{1}{k} + 1}  \right]_0^x\\
    &= \frac{1}{k} \left[ \frac{k}{k + 1} u^{ \frac{1}{k} + 1}  \right]_0^x \\
    &= \frac{1}{k} \frac{k}{k + 1} x^{ \frac{1}{k} + 1} = \frac{1}{k + 1} x^{ \frac{1}{k} + 1}.
\end{align*}
\endgroup

\textbullet~For $x>1$, we have
\begingroup
\allowdisplaybreaks
\begin{align*}
    \beta(x) &= \int_0^x u d \Gamma(u)\\
    &= \int_0^1 u \frac{1}{k} u^{-1 + \frac{1}{k}} du +  \int_1^x 0 du  \\
    &= \frac{1}{k} \int_0^1 u^{ \frac{1}{k}} du + 0 \\
    & = \frac{1}{k} \left[ \frac{1}{\frac{1}{k} + 1} u^{ \frac{1}{k} + 1}  \right]_0^1 \\
    &= \frac{1}{k} \left[ \frac{k}{k + 1} u^{ \frac{1}{k} + 1}  \right]_0^1\\
    &= \frac{1}{k} \frac{k}{k + 1} 1^{ \frac{1}{k} + 1}\\
    &= \frac{1}{k + 1}.
\end{align*}
\endgroup
Therefore, we can deduce that $\beta(x) = \frac{1}{k+1} \min(x^{1 + \frac{1}{k}}, 1)$.
Then, by applying \Cref{lem:occams-hammer}, we have with probability at least $1-\delta$ over $\S \sim \D^m, \ h \sim \Q_\S$
\begingroup
\allowdisplaybreaks
\begin{align*}
    &\left| \riskemp(h)  - \EE_{\S' \sim \D^m} \riskempprime(h) \right |  \le \frac{1}{\alpha} \sqrt{\frac{1}{2m} \left[ \ln \left(\frac{2}{\Delta\left(h, \left[\frac{d\Q_\S}{d\P}(h)\right]^{-1}\right)}\right) \right]}\\
    \iff &\left| \riskemp(h)  - \EE_{\S' \sim \D^m} \riskempprime(h) \right |  \le \frac{1}{\alpha} \sqrt{\frac{1}{2m} \left[ \ln \left(\frac{2}{\min\left(\delta \beta \left( \left[\frac{d\Q_\S}{d\P}(h)\right]^{-1}\right), 1 \right)}\right) \right]}\\
    \iff &\left| \riskemp(h)  - \EE_{\S' \sim \D^m} \riskempprime(h) \right |  \le \frac{1}{\alpha} \sqrt{\frac{1}{2m} \left[ \ln \left(2 \max \left( \frac{1}{\delta \beta \left( \left[\frac{d\Q_\S}{d\P}(h)\right]^{-1}\right)}, 1\right) \right) \right]}\\
    \iff &\left| \riskemp(h)  - \EE_{\S' \sim \D^m} \riskempprime(h) \right |  \le \frac{1}{\alpha} \sqrt{\frac{1}{2m} \left[ \ln \left(2\right) + \ln \left( \max \left( \frac{1}{\delta \beta \left( \left[\frac{d\Q_\S}{d\P}(h)\right]^{-1}\right)}, 1\right) \right) \right]}\\
    \implies &\left| \riskemp(h)  - \EE_{\S' \sim \D^m} \riskempprime(h) \right |  \le \frac{1}{\alpha} \sqrt{\frac{1}{2m} \left[ \ln \left(2\right) + \lnplus\! \left( \frac{1}{\delta \beta \left( \left[\frac{d\Q_\S}{d\P}(h)\right]^{-1}\right)}\right) \right]}\\
    \iff &\left| \riskemp(h)  - \EE_{\S' \sim \D^m} \riskempprime(h) \right |  \le \frac{1}{\alpha} \sqrt{\frac{1}{2m} \left[ \ln \left(2\right) + \lnplus\! \left( \frac{1}{\delta \frac{1}{k+1} \min \left( \left(\left[\frac{d\Q_\S}{d\P}(h)\right]^{-1}\right)^{1 + \frac{1}{k}}, 1\right)}\right) \right]}\\
    \implies &\left| \riskemp(h)  - \EE_{\S' \sim \D^m} \riskempprime(h) \right |  \le \frac{1}{\alpha} \sqrt{\frac{1}{2m} \left[ \ln\left(2\right) {+} \ln\left(\frac{k{+}1}{\delta}\right) +\lnplus\left( \frac{1}{ \min \left( \left(\left[\frac{d\Q_\S}{d\P}(h)\right]^{-1}\right)^{1 {+} \frac{1}{k}}, 1\right)} \right) \right]}.
\end{align*}
\endgroup
The last implication is due to the fact that $\frac{k+1}{\delta} \ge 1$. 

Let $x,y \in \R_+$ such that $x \ge 1$, we have
\begin{align*}
\lnplus\!(xy) &= \max(\ln(xy), 0)\\
&= \max(\ln(x) + \ln(y), 0)\\
& \le \max(\ln(x), 0) + \max(\ln(y), 0)\\
&= \ln(x) + \max(\ln(y), 0)\\
&= \ln(x) + \lnplus\!(y)\,,
\end{align*}
where the inequality is due to the sub-additivity of $\max$.

Moreover, we have with probability at least $1-\delta$ over $\S \sim \D^m, \ h \sim \Q_\S$
\begingroup
\allowdisplaybreaks
\begin{align*}
    &\left| \riskemp(h)  - \EE_{\S' \sim \D^m} \riskempprime(h) \right |  \le \frac{1}{\alpha} \sqrt{\frac{1}{2m} \left[ \ln\left(\frac{2 \left(k{+}1\right)}{\delta} \right) +\lnplus\! \left(\! \frac{1}{ \!\min \!\left( \left(\left[\frac{d\Q_\S}{d\P}(h)\right]^{-1}\right)^{1 + \frac{1}{k}} \!, 1\right)}\right) \! \! \right]}\\
    \iff &\left| \riskemp(h)  - \EE_{\S' \sim \D^m} \riskempprime(h) \right |  \le \frac{1}{\alpha} \sqrt{\frac{1}{2m} \left[ \ln\left(\frac{2 \left(k{+}1\right)}{\delta} \right) +\lnplus\! \! \left(\! \max \! \left( \frac{1}{ \! \left(\left[\frac{d\Q_\S}{d\P}(h)\right]^{-1}\right)^{1 + \frac{1}{k}}} , 1 \!\right) \! \right) \! \right]}\\
    \iff &\left| \riskemp(h)  - \EE_{\S' \sim \D^m} \riskempprime(h) \right |  \le \frac{1}{\alpha} \sqrt{\frac{1}{2m} \left[ \ln\left(\frac{2 \left(k{+}1\right)}{\delta} \right) +\lnplus\! \! \left(  \frac{1}{ \! \left(\left[\frac{d\Q_\S}{d\P}(h)\right]^{-1}\right)^{1 + \frac{1}{k}}} \right) \! \right]}\\
    \iff &\left| \riskemp(h)  - \EE_{\S' \sim \D^m} \riskempprime(h) \right |  \le \frac{1}{\alpha} \sqrt{\frac{1}{2m} \left[ \ln\left(\frac{2 \left(k{+}1\right)}{\delta} \right) +\left( 1 + \frac{1}{k}\right) \lnplus\! \left( \frac{d\Q_\S}{ d\P}(h) \right) \right]},
\end{align*}
\endgroup
which is the desired result.
\end{proof}

\subsection{Proof of \Cref{eq:mcallester-one}}

This proof comes from \citet[Corollary 3.2]{blanchard2007occam}.

\begin{proof} 
From \Cref{eq:mcallester-one-dis}, we can deduce that 
\begin{align*}
&\PP_{\S\sim\D^m, h\sim\Q_\S} \left[ \left|\, \riskemp(h)  - \EE_{\S' \sim \D^m} \riskempprime(h)\, \right| > \frac{1}{\alpha}\sqrt{\frac{1}{2m}\! \left( \left[1 {+} \frac{1}{\lambda} \right] \lnplus\!\!\left[\frac{d\Q_\S}{d\P}(h)\right]\! +\! \ln \!\left[\!\frac{2(\lambda{+}1)}{\gamma\delta}  \!\right] \right)} \right] \le \delta \gamma.
\end{align*}
Moreover, from Markov's inequality, we can deduce that we have
\begin{align}
    &\PP_{\S \sim \D^m} \left[ \PP_{h \sim \Q_\S}  \! \Bigg[ \left|\, \riskemp(h)  - \EE_{\S' \sim \D^m} \riskempprime(h)\, \right| > \frac{1}{\alpha}\sqrt{\frac{1}{2m}\! \left( \left[1 {+} \frac{1}{\lambda} \right] \lnplus\!\!\left[\frac{d\Q_\S}{d\P}(h)\right]\! +\! \ln \!\left[\!\frac{2(\lambda{+}1)}{\gamma\delta}  \!\right] \right)} \Bigg] > \gamma \right]\nonumber\\
    \le &\PP_{\S \sim \D^m} \left[ \PP_{h \sim \Q_\S}  \! \Bigg[ \left|\, \riskemp(h)  - \EE_{\S' \sim \D^m} \riskempprime(h)\, \right| > \frac{1}{\alpha}\sqrt{\frac{1}{2m}\! \left( \left[1 {+} \frac{1}{\lambda} \right] \lnplus\!\!\left[\frac{d\Q_\S}{d\P}(h)\right]\! +\! \ln \!\left[\!\frac{2(\lambda{+}1)}{\gamma\delta}  \!\right] \right)} \Bigg] \ge \gamma \right]\nonumber\\
    \le &\frac{1}{\gamma}\EE_{\S \sim \D^m} \PP_{ h \sim \Q_\S}  \! \Bigg[ \!\left|\, \riskemp(h)  - \EE_{\S' \sim \D^m} \riskempprime(h)\, \right| > \frac{1}{\alpha}\sqrt{\frac{1}{2m}\! \left( \left[1 {+} \frac{1}{\lambda} \right] \lnplus\!\!\left[\frac{d\Q_\S}{d\P}(h)\right]\! +\! \ln \!\left[\!\frac{2(\lambda{+}1)}{\gamma\delta}  \!\right] \right)} \Bigg] \le \delta.\label{eq:mcallester-one-proof-1}
\end{align}
For any $i\in \mathbb{N}$, we consider $\delta_i=\delta 2^{-i}$ and $\gamma_i=2^{-i}$ in \Cref{eq:mcallester-one-proof-1} (instead of $\delta$ and $\gamma$).
Concerning $i=0$, we have a special case: We know that $\delta=0$ since we have $\gamma_0=2^{0}=1$.
Hence, we perform a union bound on $\delta_i$ where $i\in\mathbb{N}$; we have $\sum_{i\in\mathbb{N}} \delta_i = \delta_0 + \sum_{i\in\mathbb{N}, i>0} \delta_i = \sum_{i\in\mathbb{N}, i>0} \delta_i = \delta$ and
\begin{align*}
    &\!\!\PP_{\S \sim \D^m} \left[ \begin{array}{cc}
         &  \exists i \ge 0\,, \hfill \\
         &  \PP_{h \sim \Q_\S}  \! \Bigg[ \left|\, \riskemp(h)  - \EE_{\S' \sim \D^m} \riskempprime(h)\, \right| > \frac{1}{\alpha}\sqrt{\frac{1}{2m}\! \left( \left[1 {+} \frac{1}{\lambda} \right] \lnplus\!\!\left[\frac{d\Q_\S}{d\P}(h)\right]\! +\! \ln \!\left[\!\frac{2(\lambda{+}1)}{\delta 2^{-2i}}  \!\right] \right)} \Bigg] > 2^{-i}
    \end{array} \right] \le \delta.
\end{align*}
Moreover, let 
\begin{align}
    \phi(h, \S) = 2m\alpha^2 \left| \riskemp(h)  {-} \!\EE_{\S' \sim \D^m} \riskempprime(h) \right|^2 - \left(1 {+} \frac{1}{\lambda} \right) \lnplus\! \! \left(\frac{d\Q_\S}{d\P}(h) \!\right) - \ln \left(\frac{2(\lambda{+}1)}{\delta}  \right),
\end{align}
and we have
\begin{align}
    &\PP_{\S \sim \D^m} \left[ \exists i \ge 0\,,\ \PP_{h \sim \Q_\S}  \! \Bigg[ \phi(h, \S) > 2i\ln(2) \Bigg] > 2^{-i}\  \right] \le \delta\nonumber\\
    \iff\quad &\PP_{\S \sim \D^m} \left[ \forall i \ge 0\,,\ \PP_{h \sim \Q_\S}  \! \Bigg[ \phi(h, \S) > 2i\ln(2) \Bigg] \le 2^{-i}\  \right] \ge 1-\delta.\label{eq:mcallester-one-proof-2}
\end{align}
Moreover, note that we have
\begin{align*}
\EE_{h \sim \Q_\S}\left[ \phi(h, \S)\right] &\le \int_{t \ge 0} \PP_{h \sim \Q_{\S}} \left[\phi(h, \S) > t\right]dt\\
&= \sum_{i\in \mathbb{N}}\int_{2i\ln(2)}^{2(i+1)\ln(2)}\Bigg[\PP_{h \sim \Q_{\S}} \left[\phi(h, \S) > t\right]\Bigg]dt\\
&\le \sum_{i\in \mathbb{N}}\int_{2i\ln(2)}^{2(i+1)\ln(2)}\Bigg[\PP_{h \sim \Q_{\S}} \left[\phi(h, \S) > 2i\ln(2)\right]\Bigg]dt\\
&\le \sum_{i\in \mathbb{N}}\int_{2i\ln(2)}^{2(i+1)\ln(2)} 2^{-i} dt\\
&= 2\ln(2)\sum_{i\in \mathbb{N}} 2^{-i} dt\\
&= 4\ln(2) \ \le\  3.
\end{align*}
Put into words, having $\forall i \ge 0\,,\ \PP_{h \sim \Q_\S}[ \phi(h, \S) > 2i\ln(2)] \le 2^{-i}$ implies that $\EE_{h \sim \Q_\S}\left[ \phi(h, \S)\right] \le 3$.

Hence, thanks to this implication and \Cref{eq:mcallester-one-proof-2}, we can deduce that we have 
\begin{align}
        &\PP_{\S \sim \D^m} \left[ \EE_{h \sim \Q_\S} 2m\alpha^2\left| \riskemp(h)  {-} \!\EE_{\S' \sim \D^m} \riskempprime(h) \right |^2 - \left(1 {+} \frac{1}{\lambda} \right) \EE_{h \sim \Q_\S} \lnplus\! \! \left(\frac{d\Q_\S}{d\P}(h) \!\right) - \ln \left(\frac{2(\lambda{+}1)}{\delta}  \right)  \le 3
        \right] \ge 1 {-} \delta \nonumber\\
        \iff &\PP_{\S \sim \D^m} \left[ \sqrt{ \EE_{h \sim \Q_\S}  \left| \riskemp(h)  {-} \!\EE_{\S' \sim \D^m} \riskempprime(h) \right | ^2} \le \frac{1}{\alpha} \sqrt{\frac{\left(1 {+} \frac{1}{\lambda} \right) \EE_{h \sim \Q_\S} \lnplus\! \! \left(\frac{d\Q_\S}{d\P}(h) \!\right) + \ln \left(\frac{2(\lambda{+}1)}{\delta}  \right) + 3}{2m} }\right]\ge 1 {-} \delta \nonumber\\
        \implies &\PP_{\S \sim \D^m} \left[  \left| \EE_{h \sim \Q_\S}\riskemp(h)  {-} \!\EE_{h \sim \Q_\S}\EE_{\S' \sim \D^m} \riskempprime(h) \right | \le \frac{1}{\alpha} \sqrt{\frac{\left(1 {+} \frac{1}{\lambda} \right) \EE_{h \sim \Q_\S} \lnplus\! \! \left(\frac{d\Q_\S}{d\P}(h) \!\right) + \ln \left(\frac{2(\lambda{+}1)}{\delta}  \right) + 3}{2m} }\right]\ge 1 {-} \delta,\label{eq:mcallester-one-proof-3}
\end{align}
where the last implication comes from Jensen's inequality as $\sqrt{\cdot}$ is concave and  $|\cdot|$ is convex.
Finally, we have,
\begin{align}
    \EE_{h \sim \Q_\S} \lnplus\! \! \left(\frac{d\Q_\S}{d\P}(h) \!\right) &= \EE_{h \sim \P} \frac{d\Q_\S}{d\P}(h) \lnplus\! \! \left(\frac{d\Q_\S}{d\P}(h) \!\right) \nonumber\\
    &\le \EE_{h \sim \P} \frac{d\Q_\S}{d\P}(h) \lnplus\! \! \left(\frac{d\Q_\S}{d\P}(h) \!\right) - \min_{0\le x < 1} x \log x \nonumber\\
    &= \KL(\Q_\S \| \P) + e^{-1} 
    \label{eq:mcallester-one-proof-4}
\end{align}
Combining \Cref{eq:mcallester-one-proof-3} and \Cref{eq:mcallester-one-proof-4} and bounding $e^{-1}$ by $\frac{1}{2}$ gives the desired result.
\end{proof}

\section{DETAILS ABOUT THE EXPERIMENTS}
\label{sec:expe-details}

\subsection{Bounds in practice}
\label{bounds:expe}

\subsubsection{Batch sampling} 
We follow a mini-batch sampling strategy where batches are constructed \wrt the reference distribution $\pi$ on the classes in $\A$.
In this setting, examples belonging to subgroups that are less represented in the data might be present in different batches. 
However, for each batch, we ensure that the data is not redundant and that all subgroups are represented by at least one example.

\subsubsection{Prior learning algorithm} 
\Cref{alg:self-bounding} requires a prior distribution $\P$. 
In practice, we propose to learn this prior by running \Cref{alg:prior} below (as described in \Cref{sec:self-bounding}).  

\begin{algorithm}[h]
\caption{Learning a prior distribution for constrained 
$f$-entropic risk measures}
\label{alg:prior}
\begin{algorithmic}[1]
  \Require Prior learning set $\Sp$,
  posterior learning set $\S$,
  number of epochs $T$, 
  variance $\sigma^2$, 
  reference $\pi$,  
  set of hyperparameter configurations $\configs$ of size $K$,
  parameters $\alpha$, $\beta$
  \State Initialize the set of prior distributions: $\priorset \gets \emptyset$
  \ForAll{$\config \ \in \configs$}
      \State Initialize $\thetaP$
      \For{$t = 1$ \textbf{to} $T$}
        \ForAll{mini-batches $\batch \subset \Sp$ drawn \wrt $\pi$}
            \State Draw a model $\hthetatildeP$ from $\P_{\theta} \!=\! \Ncal(\thetaP, \sigma^2 I_d)$ \Comment{where $d$ is the size of $\thetaP$}
          \State Compute the risk $\riskempB(\hthetatildeP)$ on the mini-batch
          \State \mbox{Update $\thetaP$ with gradient $\nabla_{\!\thetaP}\riskempB(\hthetatildeP)$}
        \EndFor
        \State Add $\P_{\theta}$ to set of prior distributions:  $\priorset \gets \priorset \cup \{\P_{\theta}\}$
      \EndFor
  \EndFor
  \State \textbf{return} $\P = \argmin_{\P_{\theta} \in \priorset} \ \left\{\riskemp(\hthetatildeP), \  \mbox{with}\ \hthetatildeP \sim \P_{\theta}\right\}$
\end{algorithmic}
\end{algorithm}

Across the $T$ epochs and the hyperparameter configurations considered, we get $T\times K$ prior distributions on $\Sp$ stored in the set $\priorset$. 
In the end, the prior $P$ selected to learn the posterior distribution with \Cref{alg:self-bounding}, is the prior that minimizes the risk on the learning set $\S$. 

\subsubsection{Objective functions for learning the posterior with \Cref{alg:self-bounding}} 
Note that the bounds of \Cref{cor:mca-dis}, \Cref{thm:bound-for-one-example,thm:mhammedi} do not hold ``directly'' for the above choice of $\P$ as it depends on the posterior set $\S$. 
To tackle this issue in practice, we adapt and instantiate below the bounds to our practical setting.
We respectively obtain \Cref{cor:mca-pb-expe,cor:bound-for-one-example-dis-expe,cor:mhammedi-expe}, which hold for any prior $\P_t \in \priorset$ after drawing $\S \sim \D^m$, then, they hold for the prior that minimizes the empirical risk on $\S$. 
In consequence, the bounds hold for a prior learned by \Cref{alg:prior}, and we can deduce the objective functions to minimize.

\paragraph{Instantiation of \Cref{cor:mca-dis}, and the objective function.}

The objective function associated to the minimization of \Cref{cor:mca-dis} is

\fbox{$
\displaystyle \empBoundCorUn(\riskemp(\hthetatilde), \Q_{\theta}, \thetatilde)  =\! \sup_{\substack{\rho \in \R^n_+ \\ \frac{\rho_\a}{\pi_\a} \le \frac{1}{\alpha}}} \sum_{\a \in \A} \rho_\a \sum_{i=1}^{\ma} \frac{1}{m_\a} \ell(y_i, \hthetatilde(\xbf_i)) +  \sqrt{ \EE_{\a \sim \pi} \!  \frac{1}{2 \, \alpha \, \ma} \left[ \frac{\lVert \thetatilde \!-\! \thetaP \rVert^2_2 \!-\! \lVert\thetatilde \!-\! \theta \rVert^2_2}{2 \sigma ^2}  \!+\!  \ln\frac{2 \cardA T K \sqrt{\ma}}{\delta}\right]}
$}

with $\Q_\theta = \Ncal(\theta, \sigma^2 I_d)$, and $\hthetatilde \sim \Q_\theta$ with parameters $\thetatilde$, and $\P=\Ncal(\thetaP, \sigma^2 I_d)$, and $\sigma \in [0,1]$, and $\lambda > 0$, and $\alpha \in (0,1]$, and $\delta \in [0,1]$.

The definition of $\empBoundCorUn$ comes from the following corollary of \Cref{cor:mca-dis}.
\begin{restatable}{corollary}{corboundmcadis-expe}
\label{cor:mca-pb-expe}
For any \emph{finite} set of $\cardA$ subgroups $\A$, 
for any distribution $\pi$ over $\A$,
for any distribution $\D$ over $\XY$,
for any number of epochs $T$,
for any number of hyperparameter configurations $K$,
for any set of distributions $\mathcal{P} \in  \{\P_1, ..., \P_{T\times K}\}$, 
for any loss function $\ell: \Y  \times  \Y  \to  [0,1]$,
for any $\delta \in (0,1]$,
for any $\alpha   \in  (0,1)$,
for any algorithm $\Phi: (\XY)^m \times \M(\Hcal) \rightarrow \M(\Hcal)$,
for any $\sigma \in [0,1]$, 
with probability at least $1 {-} \delta$ \mbox{over $\S {\sim} \D^m$} and $h\sim\Q_\S$,
we have $\forall \P_t=\Ncal(\thetaP, \sigma^2 I_d) \in \mathcal{P}$,
\begin{align}
\risk(h)  \le  \riskemp(h)  +  \sqrt{ \EE_{\a \sim \pi} \!  \frac{1}{2 \, \alpha \, \ma} \left[ \frac{\lVert \thetatilde - \thetaP \rVert^2_2 - \lVert\thetatilde - \theta \rVert_2^2}{2 \sigma ^2}  +  \ln\frac{2 \cardA T K \sqrt{\ma}}{\delta}\right]},
\end{align}
with $\Q_\S=\Ncal(\theta, \sigma^2 I_d)$ the posterior distribution.

\end{restatable}
\proof{
As $\frac{\delta}{T K} \in [0,1]$, we have from \Cref{cor:mca-pb}, for any $\P_t \in \mathcal{P}$,
\begin{align*}
&\PP_{\S \sim \D^m, h \sim \Q_\S} \!\!\left[ \risk(h)  \ge  \riskemp(h)  +  \sqrt{ \EE_{\a \sim \pi} \!  \frac{1}{2 \, \alpha \, \ma} \left[ \frac{\lVert \thetatilde {-} \thetaP \rVert^2_2 - \lVert\thetatilde {-} \theta \rVert_2^2}{2 \sigma ^2}  +  \ln\frac{2 \cardA T K \sqrt{\ma}}{\delta}\right]} \right] \le \frac{\delta}{TK}, \\
\implies &\sum_{t=1}^{TK} \PP_{\S \sim \D^m, h \sim \Q_\S}\!\! \left[ \risk(h)  \ge  \riskemp(h)  + \!  \sqrt{ \EE_{\a \sim \pi} \!  \frac{1}{2 \, \alpha \, \ma} \!\left[ \frac{\lVert \thetatilde {-} \thetaP \rVert^2_2 - \lVert\thetatilde {-} \theta \rVert_2^2}{2 \sigma ^2}  +  \ln \! \frac{2 \cardA T K \sqrt{\ma}}{\delta}\right]} \right] \le \sum_{t=1}^{TK} \!\frac{\delta}{TK}, \\
\implies &\PP_{\S \sim \D^m, h \sim \Q_\S}\!\! \left[ \forall \thetaP, \quad \risk(h)  \ge  \riskemp(h)  + \!  \sqrt{ \EE_{\a \sim \pi} \!  \frac{1}{2 \, \alpha \, \ma} \!\left[ \frac{\lVert \thetatilde {-} \thetaP \rVert^2_2 - \lVert\thetatilde {-} \theta \rVert_2^2}{2 \sigma ^2}  +  \ln \! \frac{2 \cardA T K \sqrt{\ma}}{\delta}\right]} \right] \le \delta, 
\end{align*}
where the last inequality follows from the union bound.
}

\paragraph{Instantiation of \Cref{thm:bound-for-one-example}, and the objective function.}

The objective function associated to the minimization of \Cref{thm:bound-for-one-example} is

\fbox{$\displaystyle \empBoundCorDeux(\riskemp(\hthetatilde), \Q_{\theta}, \thetatilde) = \!\!\!\sup_{\substack{\rho \in \R^n_+\\ m \rho_\a \le \frac{1}{\alpha}}} \sum_{\a=1}^{m} \rho_\a  \ell(y_\a, \hthetatilde(\xbf_\a))  + \sqrt{\frac{1}{2\,\alpha^2\,m} \left[\left(1 \!+\! \frac{1}{\lambda} \right) \frac{\lVert \thetatilde {-} \thetaP \rVert^2_2 - \lVert\thetatilde {-} \theta \rVert^2_2}{2 \sigma ^2}  \!+\! \ln \left(\frac{2TK(\lambda\!+\!1)}{\delta}  \right)\right]} 
$}

with $\Q_\theta = \Ncal(\theta, \sigma^2 I_d)$, and $\hthetatilde \sim \Q_\theta$ with parameters $\thetatilde$, and $\P=\Ncal(\thetaP, \sigma^2 I_d)$, and $\sigma \in [0,1]$, and $\lambda > 0$, and $\alpha \in (0,1]$, and $\delta \in [0,1]$.

The definition of $\empBoundCorDeux$ comes from the following corollary of \Cref{thm:bound-for-one-example}.
\begin{restatable}{corollary}{corboundoneexample}
\label{cor:bound-for-one-example-dis-expe}
For any distribution $\D$ over $\XY$,
for any $\lambda > 0$,
for any number of epochs $T$,
for any number of hyperparameter configuration $K$,
for any set of distributions $\mathcal{P} \in  \{\P_1, ..., \P_{T\times K}\}$, 
for any loss function $\ell: \Y  \times  \Y  \to  [0,1]$,
for any $\delta \in (0,1]$,
for any $\alpha   \in  (0,1)$,
for any algorithm $\Phi: (\XY)^m \times \M(\Hcal) \rightarrow \M(\Hcal)$,
for any $\sigma \in [0,1]$, 
with probability at least $1 {-} \delta$ \mbox{over $\S {\sim} \D^m$} and $h\sim\Q_\S$,
we have $\forall \P_t =\Ncal(\thetaP, \sigma^2 I_d) \in \mathcal{P}$,
\begin{align*}
&\EE_{\S' \sim \D^m} \riskempprime(h) \le \riskemp(h) + \sqrt{\frac{1}{2\,\alpha^2\,m} \left[\left(1 \!+\! \frac{1}{\lambda} \right) \frac{\lVert \thetatilde {-} \thetaP \rVert^2_2 - \lVert\thetatilde {-} \theta \rVert_2^2}{2 \sigma ^2}  \!+\! \ln \left(\frac{2TK(\lambda\!+\!1)}{\delta}  \right)\right]}.
\end{align*}
\end{restatable}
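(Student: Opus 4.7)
The plan is to adapt the template used in the proof of Corollary~\ref{cor:mca-pb-expe} verbatim, replacing Corollary~\ref{cor:mca-pb} by Theorem~\ref{thm:bound-for-one-example} (specifically Equation~\ref{eq:mcallester-one-dis}) and specializing its $\lnplus\!\big(\tfrac{d\Q_\S}{d\P}(h)\big)$ term to the Gaussian case.

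First, I would fix an arbitrary prior $\P_t = \mathcal{N}(\theta_{\P_t},\sigma^2 I_d) \in \mathcal P$ and apply Equation~\ref{eq:mcallester-one-dis} with confidence level $\tfrac{\delta}{TK}$ in place of $\delta$. This yields, with probability at least $1-\tfrac{\delta}{TK}$ over $\S\sim\D^m$ and $h\sim\Q_\S$,
\begin{align*}
\big| \riskemp(h) - \EE_{\S'\sim\D^m}\riskempprime(h)\big| \le \frac{1}{\alpha}\sqrt{\frac{1}{2m}\!\left(\!\Big[1{+}\tfrac{1}{\lambda}\Big]\lnplus\!\!\left[\tfrac{d\Q_\S}{d\P_t}(h)\right] + \ln\tfrac{2TK(\lambda{+}1)}{\delta}\right)}.
\end{align*}
Second, since $\Q_\S = \mathcal{N}(\theta,\sigma^2 I_d)$ and $\P_t = \mathcal{N}(\theta_{\P_t},\sigma^2 I_d)$ share the same covariance, a direct calculation of the Radon-Nikodym derivative evaluated at the sampled parameter $\thetatilde$ gives $\ln\tfrac{d\Q_\S}{d\P_t}(\thetatilde) = \tfrac{\|\thetatilde - \theta_{\P_t}\|_2^2 - \|\thetatilde - \theta\|_2^2}{2\sigma^2}$, which can be substituted (exactly as in the companion Corollary~\ref{cor:mca-pb-expe}) for the $\lnplus$ term. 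Third, I would apply a union bound over the $TK$ priors in $\mathcal P$: summing failure probabilities yields $TK\cdot\tfrac{\delta}{TK} = \delta$, so simultaneously for every $\P_t\in\mathcal P$ the bound holds with probability at least $1-\delta$. Finally, dropping the absolute value on the left-hand side (keeping only the upper-tail direction $\EE_{\S'}\riskempprime(h) - \riskemp(h)\le \cdots$) gives the statement of Corollary~\ref{cor:bound-for-one-example-dis-expe}.

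This proof is essentially a mechanical instantiation; the only subtlety worth flagging is the treatment of $\lnplus$ on the Gaussian log-density ratio, which must be handled consistently with the convention already adopted for Corollary~\ref{cor:mca-pb-expe} (the same substitution is performed there without extra fuss, so no new difficulty arises). Consequently, I do not anticipate any genuine obstacle beyond writing out the two displays and invoking the union bound over the finite grid $\mathcal P$.
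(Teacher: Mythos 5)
Your proposal matches the paper's proof, which is stated verbatim as "follows the same steps as the proof of Corollary~\ref{cor:mca-pb-expe}": apply Equation~\ref{eq:mcallester-one-dis} at level $\delta/(TK)$ for each fixed prior, substitute the Gaussian log-density ratio, and union bound over the $TK$ priors. The approach and all the steps are essentially identical, so there is nothing to add.
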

\begin{proof}
The proof follows the same steps as the proof of \Cref{cor:mca-pb-expe}.
\end{proof}

\paragraph{Instantiation of \Cref{thm:mhammedi}, and the objective function.}
We recall that, in practice, we compute an estimation of the bound of \Cref{thm:mhammedi} obtained by
sampling a single model from the posterior $\Q_\theta$ (since we deal with disintegrated bounds).
The associated objective function is

\fbox{
\begin{minipage}[c]{.98\textwidth}
$\displaystyle 
\empBoundMham(\riskemp(\hthetatilde), \Q_{\theta}, \thetatilde)   = \!\! \sup_{\substack{\rho \in \R^n_+\\ m \rho_\a \le \frac{1}{\alpha}}} \sum_{\a=1}^{m} \rho_\a  \ell(y_\a, \hthetatilde(\xbf_\a))$\\
$\displaystyle \phantom{\empBoundMham(\riskemp(\hthetatilde), \Q_{\theta}, \thetatilde)   =} +\, 2\, \!\!\sup_{\substack{\rho \in \R^n_+\\ m \rho_\a \le \frac{1}{\alpha}}} \sum_{\a=1}^{m} \rho_\a \ell(y_\a, \hthetatilde(\xbf_\a))  \left[  \left(   \sqrt{\frac{\ln\frac{2 TK\lceil \log_2(\frac{m}{\alpha}) \rceil}{\delta}}{2\, m\,\alpha }}  {+}  \frac{\ln\frac{2 T K\lceil \log_2(\frac{m}{\alpha}) \rceil}{\delta}}{3\, m\,\alpha}  \right) \right]$\\[1mm]
$ \displaystyle \phantom{\empBoundMham(\riskemp(\hthetatilde), \Q_{\theta}, \thetatilde)   =}
+ \sqrt{\frac{27}{{5\, m\, \alpha}}  \, \sup_{\substack{\rho \in \R^n_+\\ m \rho_\a \le \frac{1}{\alpha}}} \sum_{\a=1}^{m} \rho_\a  \ell(y_\a, \hthetatilde(\xbf_\a)) \left[\frac{\lVert \theta - \thetaP \rVert^2_2}{2\sigma^2} {+} \ln\tfrac{2 T K\lceil \log_2(\frac{m}{\alpha}) \rceil}{\delta}\right]}  $\\[1mm]
$ \displaystyle \phantom{\empBoundMham(\riskemp(\hthetatilde), \Q_{\theta}, \thetatilde)   =}
        + \frac{27}{{5\, m\, \alpha}} 
      \left[\frac{\lVert \theta - \thetaP \rVert^2_2}{2\sigma^2}{+} \ln\frac{2T K\lceil \log_2(\frac{m}{\alpha}) \rceil}{\delta}\right]
$
\end{minipage}
}

with $\Q_\theta = \Ncal(\theta, \sigma^2 I_d)$, with $\hthetatilde \sim \Q_\theta$ with parameters $\thetatilde$, and $\P=\Ncal(\thetaP, \sigma^2 I_d)$, and $\sigma \in [0,1]$, and $\lambda > 0$, and $\alpha \in (0,1]$, and $\delta \in [0,1]$.

 The definition of $\empBoundMham$ comes from the following corollary of \Cref{thm:mhammedi}. 
\begin{restatable}{corollary}{cormhammedi}
\label{cor:mhammedi-expe}
For any distribution $\D$ over $\XY$,
for any prior $\P \! \in \! \M(\Hcal)$, 
for any loss $\ell :  \Y \!\times \Y \!\rightarrow [0,1]$,
for any $\alpha \!\in \!(0,1]$,
for any $\delta\!\in\!(0,1]$,
with probability at least $1 {-} \delta$ over $\S {\sim} \D^m$, we have $\forall \Q =\Ncal(\theta, \sigma^2 I_d)$ and $\forall \P_t =\Ncal(\thetaP, \sigma^2 I_d) \in \mathcal{P}$,
\begin{align*}
        &\nonumber \EE_{h \sim \Q}\, \risk(h) \le \riskemp(\Q) + 2\, \riskemp(\Q)\! \left[ \! \sqrt{\frac{1}{2 \alpha m }\ln\frac{2 T K \lceil \log_2[\frac{m}{\alpha}] \rceil}{\delta}}  \!+\!  \frac{1}{3m\alpha}\ln\frac{2 T K \lceil \log_2[\frac{m}{\alpha}] \rceil}{\delta}\! \right]\\
          &\phantom{ \EE_{h \sim \Q}\, \risk(h) \le \riskemp(\Q)}+ \sqrt{\frac{27}{{5 \alpha m}}\riskemp(\Q)\!\left[\frac{\lVert \theta - \thetaP \rVert^2_2}{2\sigma^2} {+} \ln\tfrac{2  T K \lceil \log_2(\frac{m}{\alpha}) \rceil}{\delta}\right]  
        } + \frac{27}{5 \alpha m} 
      \left[\frac{\lVert \theta - \thetaP \rVert^2_2}{2\sigma^2} {+} \ln\frac{2 T K \lceil \log_2(\frac{m}{\alpha}) \rceil}{\delta}\right], \\[3mm]
&\text{where}\ \ \EE_{h\sim\Q} \risk(h) \defeq \EE_{h\sim\Q}\ \ \sup_{\rho \in E}\  \EE_{(\x,y)\sim \rho}\  \ell(y, h(\x)), \quad \text{with}\ \  E\!=\!\left\{ \ \rho  \ \big| \  \rho \ll \D, \text{ and } \frac{d\rho}{d\D} \le \frac{1}{\alpha}\right\},\\
&\text{and}\ \ \riskemp(\Q) \defeq \sup_{\rho \in \widehat{E}}\ \ \sum_{i=1}^m \rho_\a \EE_{h\sim\Q} \ell(y_\a, h(\x_\a)),  \quad \text{with}\ \ \widehat{E}\!=\!\left\{ \ \rho  \ \big| \ \forall \a \in \A, \ \frac{d\rho_\a}{d\pi_\a} \le \frac{1}{\alpha}\right\} \quad \text{and } \pi_\a = \frac{1}{m}.
\end{align*}
\end{restatable}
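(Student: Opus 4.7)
The plan is to mirror the proof of \Cref{cor:mca-pb-expe} (given in \Cref{proof:cor-mca-pb-expe}), since the argument is essentially the same union-bound trick applied to a different base theorem. The key ingredients are: (i) invoke \Cref{thm:mhammedi} with the confidence level rescaled from $\delta$ to $\delta/(TK)$; (ii) perform a union bound over the $TK$ candidate priors in $\mathcal{P}$; (iii) substitute the closed-form expression for the KL divergence between two isotropic Gaussians with common covariance, namely $\KL(\mathcal{N}(\theta,\sigma^2 I_d) \| \mathcal{N}(\thetaP,\sigma^2 I_d)) = \frac{\lVert \theta - \thetaP \rVert_2^2}{2\sigma^2}$.

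More concretely, I would first fix a single prior $\P_t \in \mathcal{P}$ and apply \Cref{thm:mhammedi} with the confidence parameter $\delta/(TK)$ in place of $\delta$. This produces an inequality that holds for that specific $\P_t$ with probability at least $1 - \delta/(TK)$ over $\S \sim \D^m$, and every occurrence of $\ln(2\lceil \log_2(\tfrac{m}{\alpha})\rceil/\delta)$ becomes $\ln(2TK\lceil \log_2(\tfrac{m}{\alpha})\rceil/\delta)$. Next, summing the complementary event probabilities over $t = 1, \dots, TK$ yields, by the union bound, that with probability at least $1 - \delta$ the inequality is simultaneously valid for \emph{every} $\P_t \in \mathcal{P}$.

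Finally, since $\Q = \mathcal{N}(\theta, \sigma^2 I_d)$ and $\P_t = \mathcal{N}(\thetaP, \sigma^2 I_d)$ share the same isotropic covariance $\sigma^2 I_d$, the KL term simplifies to $\KL(\Q\|\P_t) = \tfrac{\lVert \theta - \thetaP \rVert_2^2}{2\sigma^2}$; substituting this into the bound from the previous step gives exactly the statement of \Cref{cor:mhammedi-expe}. No essential analytic difficulty is expected: the union bound and the Gaussian KL identity are standard, and the non-increasingness/convexity arguments already present in \Cref{thm:mhammedi} are inherited. The only mild bookkeeping concern will be making sure that the logarithmic terms inside both the square-root and the linear summand of \Cref{thm:mhammedi} are updated consistently to contain the $TK$ factor, which is straightforward from how those terms arise in \citet{mhammedi2020pac}'s original bound.
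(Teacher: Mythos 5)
Your proposal is correct and matches the paper's own argument: the paper proves this corollary by stating that it "follows the same steps as the proof of \Cref{cor:mca-pb-expe}," which is exactly the rescale-to-$\delta/(TK)$, union-bound-over-priors, and Gaussian-KL-substitution recipe you describe.
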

\begin{proof}
The proof follows the same steps as the proof of \Cref{cor:mca-pb-expe}. 
\end{proof}

\subsubsection{Additional parameters studied during our experiments} 
\label{sec:param}
In Appendix~\ref{sec:additional-expe}, we present the complete results of our experiments with CVaR, and an additional constrained $f$-entropic risk measure, EVaR defined by \Cref{def:constrained-f-entropic} with the function $f(x) =x \ln x$ extended by continuity at $x=0$ with $f(0) = 0$, and $\beta = - \ln \alpha$.

The different settings we considered are (the rest of the setting follows \Cref{sec:expe}):
\vspace*{-3mm}
\begin{itemize*}
\item Two model architectures: a 2-hidden-layer multilayer perceptron and a perceptron.\\[-3mm]
\item When $|\A|\!\leq\!m$ (a subgroup corresponds to a class), for \Cref{cor:mca-dis}:
\begin{itemize*}
    \item Two reference distributions $\pi$: The class ratio, and the uniform distribution,
    \item Two risks: CVaR and EVaR.\\[-3mm]
\end{itemize*}
\item When $|\A|\!=\!m$ (a subgroup corresponds to a single example), for \Cref{thm:bound-for-one-example}:
\begin{itemize*}
    \item One reference distribution: The uniform distribution,
    \item Two risks: CVaR and EVaR,
    \item Two values of parameter $\lambda$: $\lambda=1$ and $\lambda=m$.\\[-3mm]
\end{itemize*}
\item When $|\A|\!=\!m$ (a subgroup corresponds to a single example), for \Cref{thm:mhammedi}:
\begin{itemize*}
 \item One reference distribution: The uniform distribution,
\item One risk: CVaR (since \Cref{thm:mhammedi} is only defined for CVaR).
\end{itemize*}
\end{itemize*}

\subsection{Datasets}
\label{subsec:datasets-details}
We perform our experiments on 19 datasets taken from OpenML \citep{vanschoren2013openml}. 
Their main characteristics are summarized in \Cref{tab:real-datasets}.

\def\arraystretch{1}
\begin{table}[h!]
    \centering
     \caption{Main characteristics of the datasets (* means that the classes are uniformly distributed).}
    \begin{footnotesize}
    \begin{tabular}{lrrrrr}
    \toprule
        dataset & n examples & n features & n classes & class ratio \\
        \midrule
        australian & 690 & 14 & 2 & 0.56/0.44 \\
        balance & 625 & 4 & 3 & 0.08/0.46/0.46 \\
        german & 1,000 & 20 & 2 & 0.3/0.7 \\
        heart & 270 & 13 & 2 & 0.56/0.44 \\
        iono & 351 & 34 & 2 & 0.36/0.64 \\
        letter & 20,000 & 16 & 26 & 0.04* \\
        mammography & 11,183 & 6 & 2 & 0.98/0.02 \\
        newthyroid & 215 & 5 & 3 & 0.7/0.16/0.14 \\
        oilspill & 937 & 49 & 2 & 0.96/0.04 \\
        pageblocks & 5473 & 10 & 5 & 0.9/0.06/0.01/0.02/0.02 \\
        pendigits & 10,992 & 16 & 10 & 0.1* \\
        phishing & 11,055 & 68 & 2 & 0.44/0.56 \\
        prima & 768 & 8 & 2 & 0.65/0.35 \\
        satimage & 6,430 & 36 & 6 & 0.24/0.11/0.21/0.1/0.11/0.23 \\
        segment & 2,310 & 19 & 7 & 0.14* \\
        spambase & 4,601 & 57 & 2 & 0.61/0.39 \\
        spectfheart & 267 & 44 & 2 & 0.21/0.79 \\
        splice & 3,190 & 287 & 3 & 0.24/0.24/0.52 \\
        wdbc & 569 & 30 & 2 & 0.63/0.37 \\
        \bottomrule
    \end{tabular}

    \end{footnotesize}
    \label{tab:real-datasets}
\end{table}

\section{RESULTS OF THE ADDITIONAL EXPERIMENTS}
\label{sec:additional-expe}

In the main paper, we reported the main behaviors we observed on a representative subset of our experiments (on the four most imbalanced datasets).
For completeness, the following pages provide all figures for every parameter setting and dataset, as described in Appendices~\ref{sec:param} and~\ref{subsec:datasets-details}.
Below, we summarize the main trends across all experiments.

\subsection*{Results in a nutshell.}

\textbf{On the role of $\alpha$.}
On the one hand, across all bar plots (Figures~\ref{fig:appendix-risk-cvar-neural}, \ref{fig:appendix-risk-cvar-perceptron}, \ref{fig:appendix-risk-evar-neural}, \ref{fig:appendix-risk-evar-linear}), we observe that  $\alpha$ strongly influences the tightness of all the bounds: higher values of $\alpha$ imply tighter bounds. 
As discussed in \Cref{sec:expe}, this is not only due to the factor $\frac{1}{\alpha}$ or $\frac{1}{\alpha^2}$ in the bounds but also because a larger $\alpha$ makes the CVaR tighter.
In consequence, the tightest bound values, which always correspond to the highest $\alpha=0.9$, do not lead to the best-performing models across the subgroups (in terms of F-score or in terms in class-wise error rates).

On the other hand, $\alpha$ also plays an important role on the performance across the subgroups.
Indeed, as we can see across all the bar plots, the best F-scores rarely coincide with the tightest bound values (68 times over 76), and
Figures~\ref{fig:appendix-risk-cvar-neural}, \ref{fig:appendix-risk-cvar-perceptron}, \ref{fig:appendix-risk-evar-neural}, and \ref{fig:appendix-risk-evar-linear} show that the class-wise error rates evolve with $\alpha$, showing that adjusting $\alpha$ can help to balance the performances across the subgroups.

\textbf{On the comparison with \citet{mhammedi2020pac} (one example per group setting).}
As expected, when comparing \Cref{thm:mhammedi,thm:bound-for-one-example} (which rely on the same subgroups), our bound of \cref{thm:bound-for-one-example} is generally tighter (or very close) for all values of $\alpha$. 
Note that we can observe that $\lambda=m$ leads always to bounds that are slightly higher than those of $\lambda=1$, but although this has a slight impact on the tightness of the bound, it does not change the overall behavior.

\textbf{On the role of $\pi$ for \Cref{cor:mca-dis}.}
The reference distribution $\pi$ also plays a role in the tightness of the bound. 
Except for the most balanced datasets (\textit{australian}, \textit{heart}, \textit{letter}, \textit{pendigits}, \textit{phishing}, \textit{segment}), where using a uniform $\pi$ or the class ratio $\pi$ yields similar results as expected, we observe that bounds computed with a uniform $\pi$ are generally (and sometimes significantly) looser than those computed with $\pi$ set to the class ratio. 
Remarkably, for $\alpha\!\in\!\{0.01,0.1,0.3\}$, \Cref{cor:mca-dis} with $\pi$ set to the class ratio continues to give non-vacuous and competitive bounds, even when $\alpha$ is relatively high, despite the $\frac{1}{\alpha\ma}$ term in the bound.
This suggests that choosing a reference $\pi$ that reflects the imbalance in the data can lead to better capturing the under-representation in the data while keeping guarantees

\textbf{On the performances.}
Interestingly, the bound of \Cref{cor:mca-dis} always leads to the best results in terms of F-score on the most imbalanced datasets (\textit{oilspill}, \textit{mammography}, \textit{balance}, \textit{pageblocks}, illustrating the usefulness of our subgroup-based approach.
For the 15 more balanced datasets, the bound of \Cref{cor:mca-dis} is always competitive, achieving the best performance in 9 cases (for each set of experiments), while the bound of \Cref{thm:bound-for-one-example} performs best 6 times.

\textbf{A note on the EVaR.}
The results obtained with EVaR are similar to the one observed with the CVaR.
This confirms that our bounds can be effectively applied to other constrained $f$-entropic risk measures beyond the CVaR.

\newpage 
\thispagestyle{empty}
\begin{landscape}
\begin{figure*}[p]
\includegraphics[width=\linewidth, keepaspectratio]{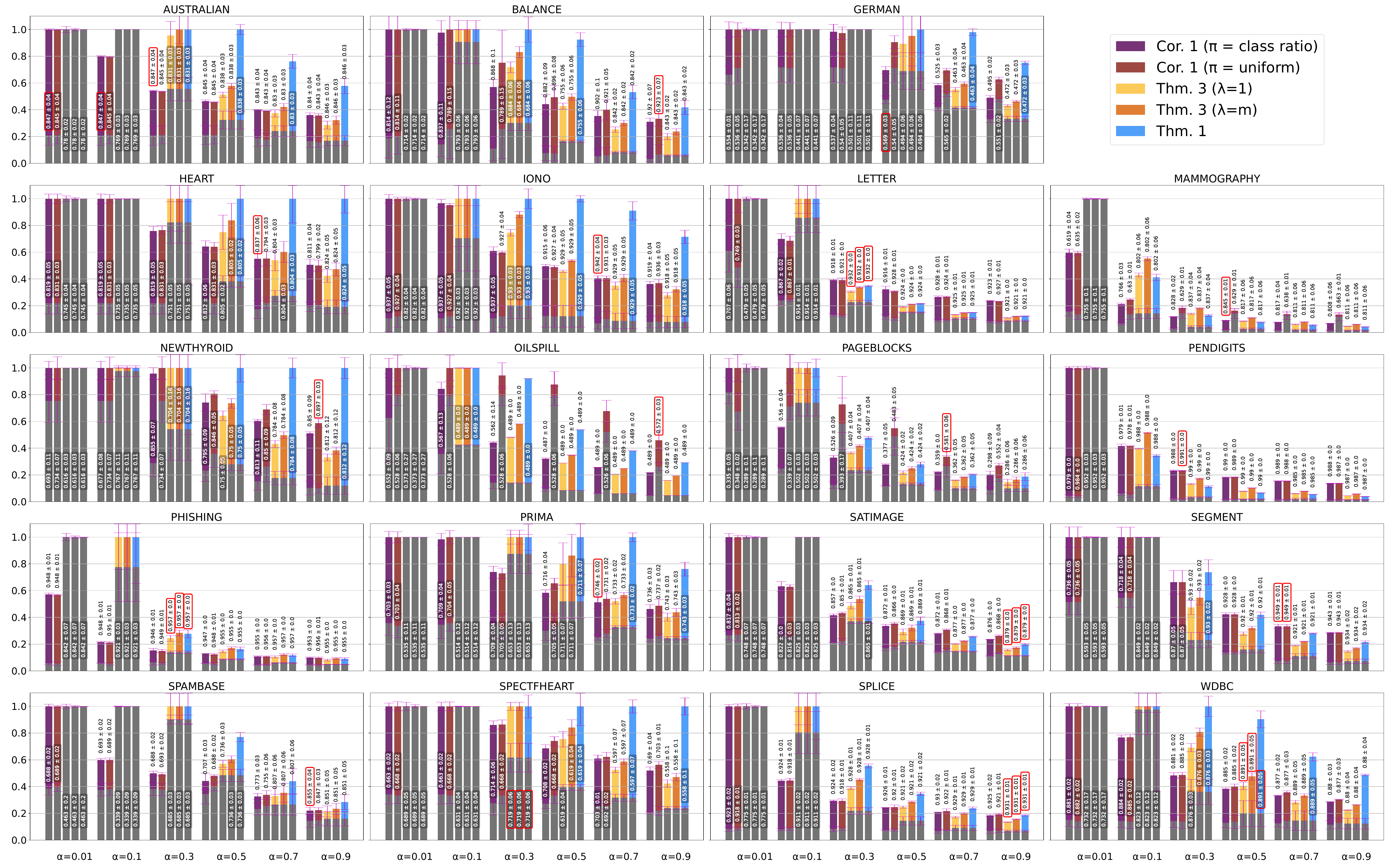}
 \caption{\textbf{2-hidden layer MLP with CVaR.} Bound values (in color), test risk $\risk_{\T}$ (in grey), and F-score value on $\T$ (with their standard deviations) for \Cref{thm:bound-for-one-example}, \Cref{cor:mca-dis}, and \Cref{thm:mhammedi}, as a function of $\alpha$ (on the $x$-axis).
    The $y$-axis corresponds to the value of the bounds and test risks.
    The highest F-score for each dataset is emphasized with a red frame.
    }
    \label{fig:appendix-cvar-neural}
\end{figure*}
\end{landscape}

\newpage 
\thispagestyle{empty}
\begin{landscape}
\begin{figure*}[p]
\includegraphics[width=\linewidth, keepaspectratio]{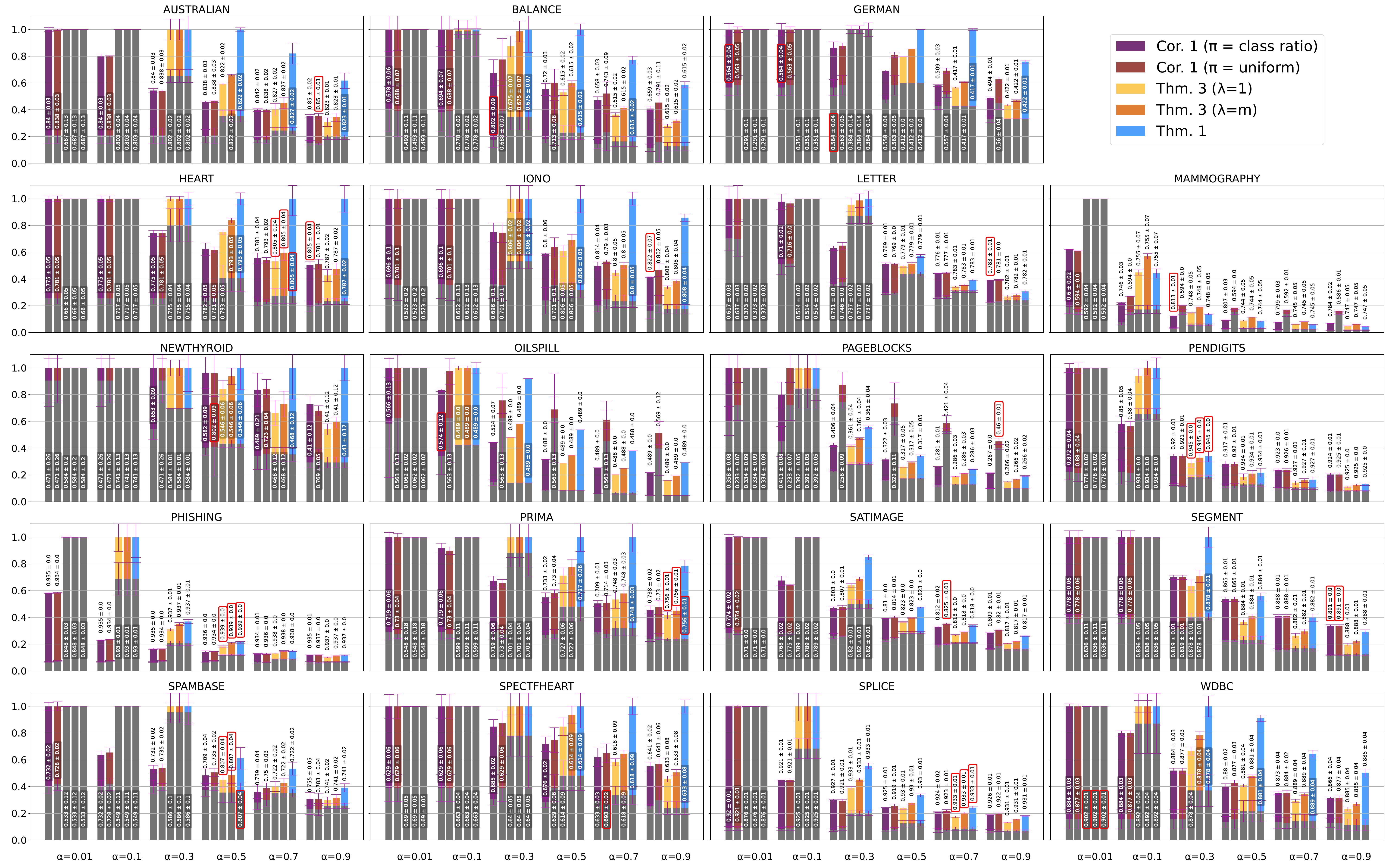}
\caption{\textbf{Perceptron with CVaR.} Bound values (in color), test risk $\risk_{\T}$ (in grey), and F-score value on $\T$ (with their standard deviations) for \Cref{thm:bound-for-one-example}, \Cref{cor:mca-dis}, and \Cref{thm:mhammedi}, as a function of $\alpha$ (on the $x$-axis).
    The $y$-axis corresponds to the value of the bounds and test risks.
    The highest F-score for each dataset is emphasized with a red frame.
    }
    \label{fig:appendix-cvar-linear}
\end{figure*}
\end{landscape}

\newpage 
\thispagestyle{empty}
\begin{landscape}
\begin{figure}[p]
  \centering
  \begin{subfigure}{0.48\linewidth} 
    \centering
    \includegraphics[width=\linewidth,height=0.9\textheight,keepaspectratio]{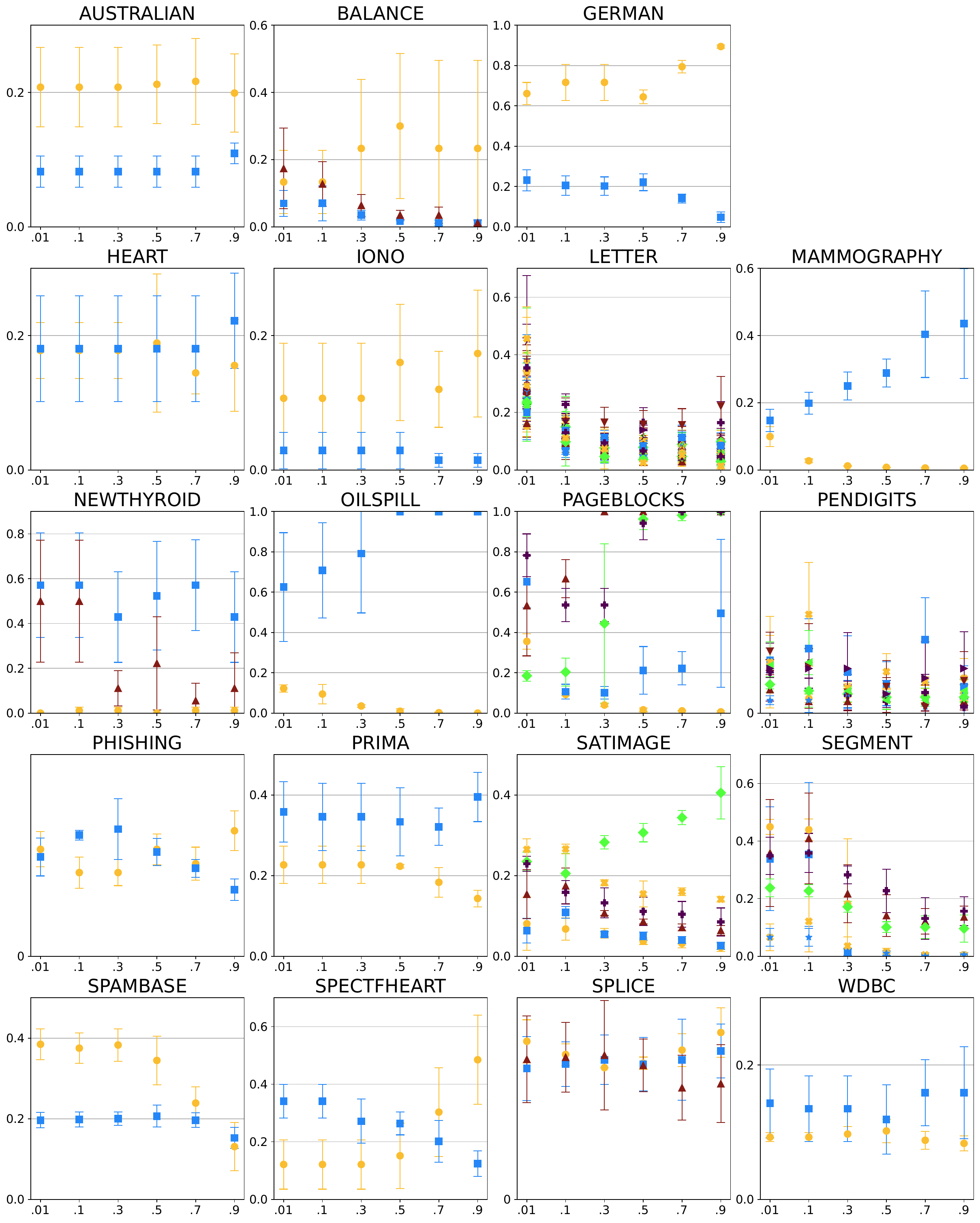}
    \caption{$\pi=$ Class ratio}
  \end{subfigure}\hfill
\begin{subfigure}{0.48\linewidth} 
    \centering
    \includegraphics[width=\linewidth,height=0.9\textheight,keepaspectratio]{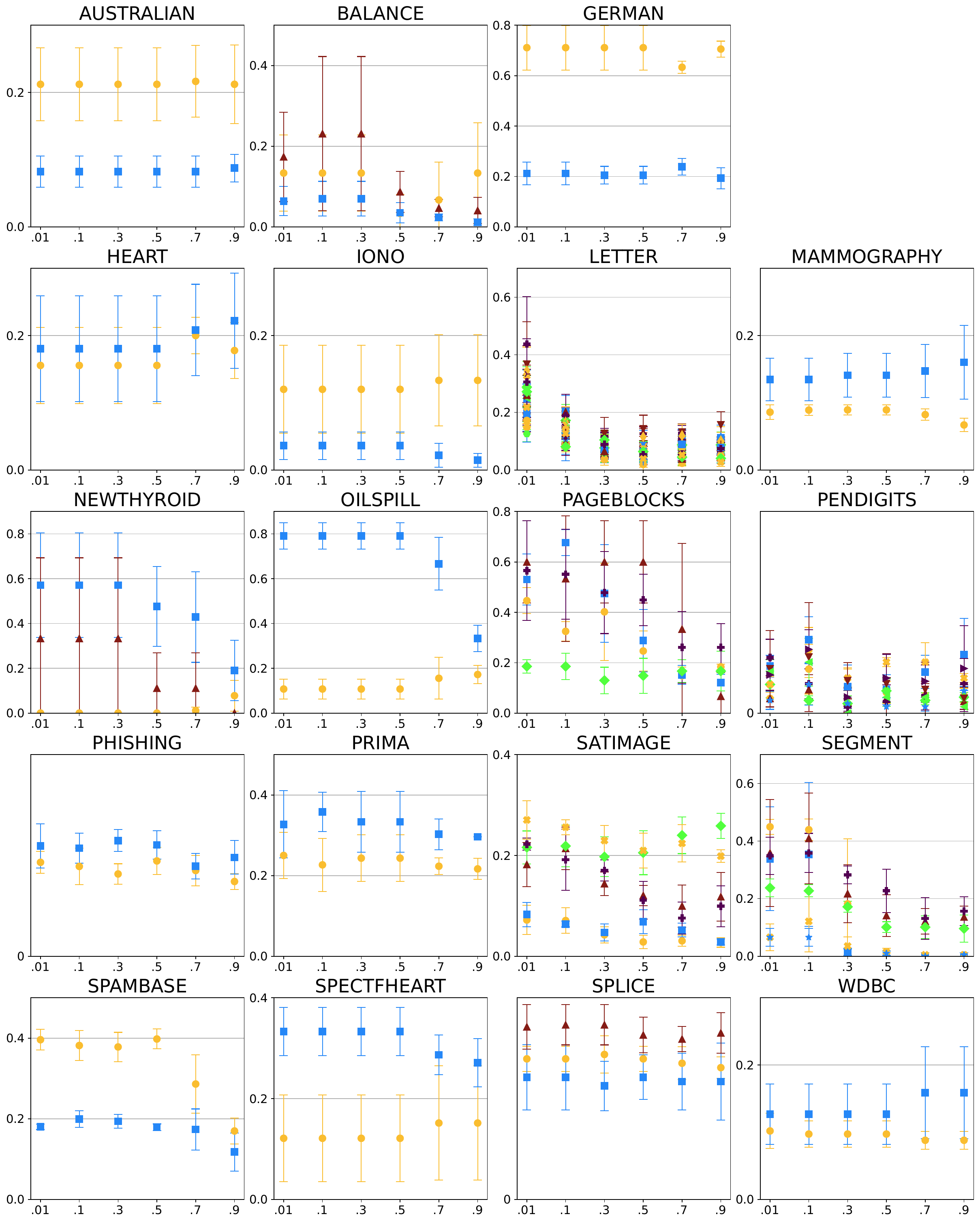}
    \caption{$\pi=$ Uniform}
  \end{subfigure}

  \caption{\textbf{2-hidden layer MLP with CVaR.} Evolution of the class-wise error rates and standard deviation on the set $\T$ ($y$-axis) as a function of the parameter $\alpha$ ($x$-axis) with \Cref{cor:mca-dis}. 
  Each class is represented by different markers and colors.
  }
  \label{fig:appendix-risk-cvar-neural}
\end{figure}
\end{landscape}

\newpage 
\thispagestyle{empty}
\begin{landscape}
\begin{figure}[p]
  \centering
   \begin{subfigure}{0.48\linewidth} 
    \centering
    \includegraphics[width=\linewidth,height=0.9\textheight,keepaspectratio]{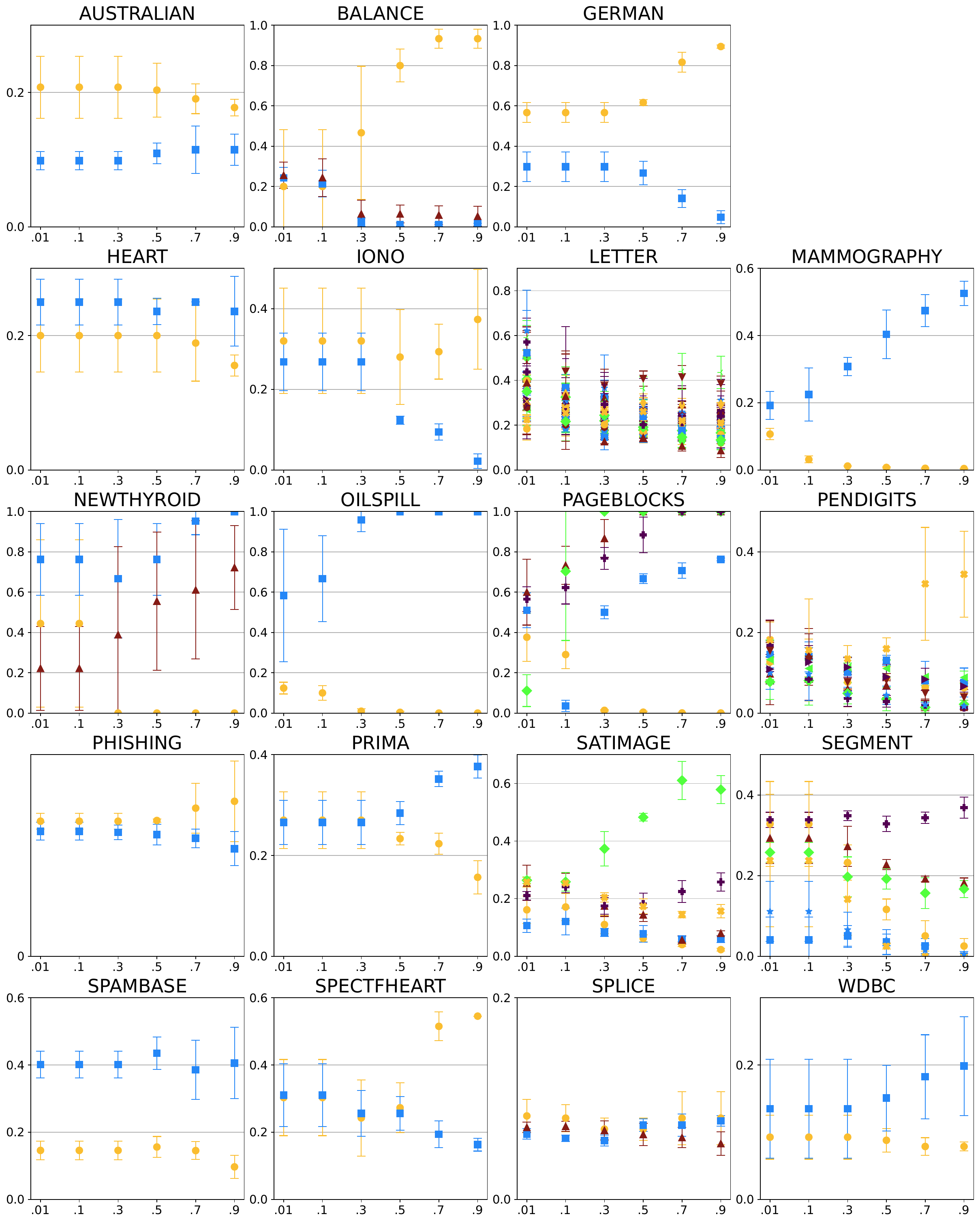}
    \caption{$\pi=$ Class ratio}
  \end{subfigure} \hfill
    \begin{subfigure}{0.48\linewidth} 
    \centering
    \includegraphics[width=\linewidth,height=0.9\textheight,keepaspectratio]{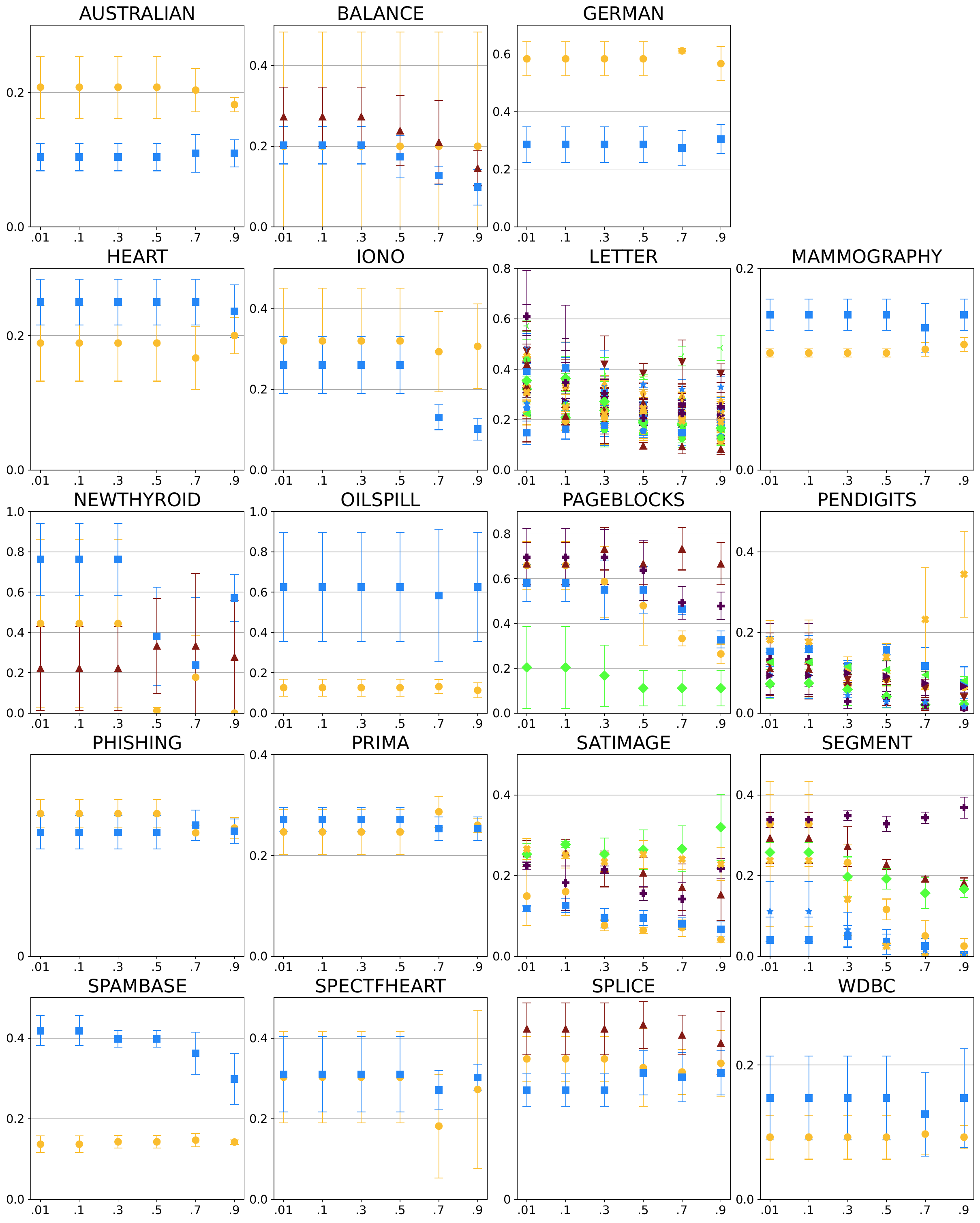}
    \caption{$\pi=$ Uniform}
  \end{subfigure}

  \caption{\textbf{Perceptron with CVaR.} Evolution of the class-wise error rates and standard deviation on the set $\T$ ($y$-axis) as a function of the parameter $\alpha$ ($x$-axis) with \Cref{cor:mca-dis}. 
  Each class is represented by different markers and colors.
  }
  \label{fig:appendix-risk-cvar-perceptron}
\end{figure}
\end{landscape}

\newpage 
\thispagestyle{empty}
\begin{landscape}
\begin{figure*}[p]
\includegraphics[width=\linewidth, keepaspectratio]{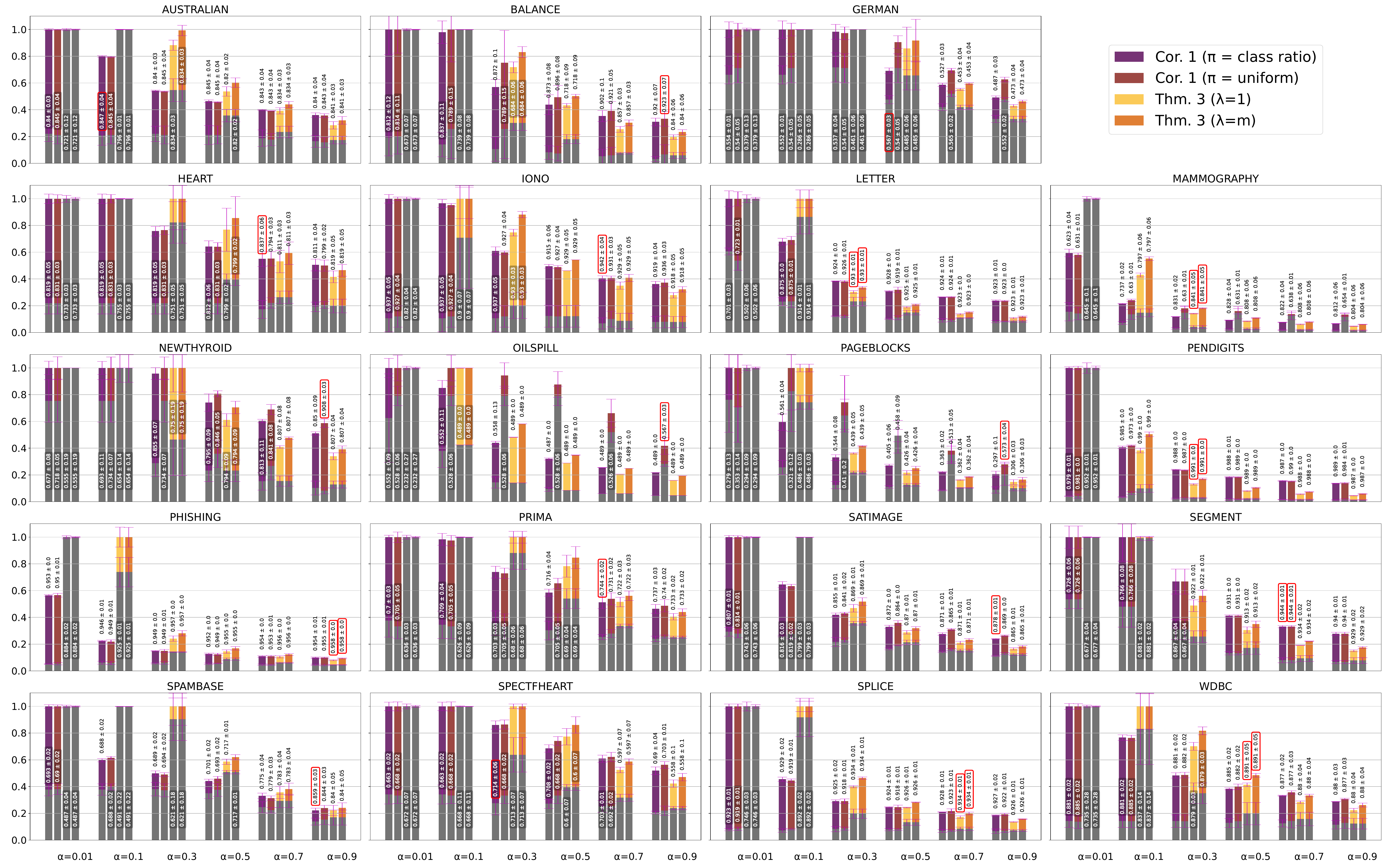}
\caption{\textbf{2-hidden layer MLP with EVaR.} Bound values (in color), test risk $\risk_{\T}$ (in grey), and F-score value on $\T$ (with their standard deviations) for \Cref{thm:bound-for-one-example}, \Cref{cor:mca-dis}, and \Cref{thm:mhammedi}, as a function of $\alpha$ (on the $x$-axis).
    The $y$-axis corresponds to the value of the bounds and test risks.
    The highest F-score for each dataset is emphasized with a red frame.
    }
    \label{fig:appendix-evar-neural}
\end{figure*}
\end{landscape}

\newpage 
\thispagestyle{empty}
\begin{landscape}
\begin{figure*}[p]
\includegraphics[width=\linewidth, keepaspectratio]{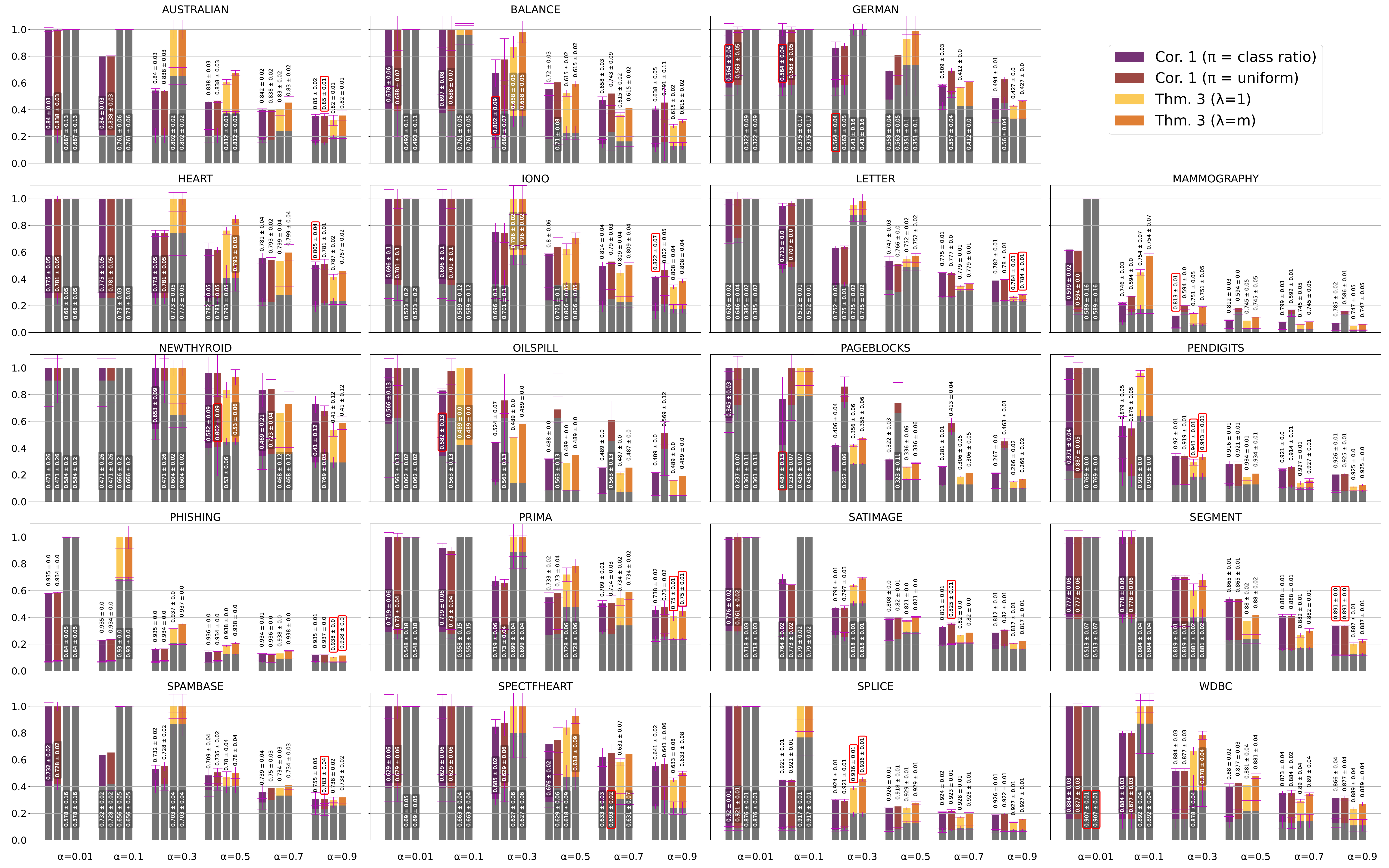}
\caption{\textbf{Perceptron with EVaR.} Bound values (in color), test risk $\risk_{\T}$ (in grey), and F-score value on $\T$ (with their standard deviations) for \Cref{thm:bound-for-one-example}, \Cref{cor:mca-dis}, and \Cref{thm:mhammedi}, as a function of $\alpha$ (on the $x$-axis).
    The $y$-axis corresponds to the value of the bounds and test risks.
    The highest F-score for each dataset is emphasized with a red frame.
    }
    \label{fig:appendix-evar-linear}
\end{figure*}
\end{landscape}

\newpage 
\thispagestyle{empty}
\begin{landscape}

\begin{figure}[p]
  \centering
  \begin{subfigure}{0.48\linewidth} 
    \centering
    \includegraphics[width=\linewidth,height=0.9\textheight,keepaspectratio]{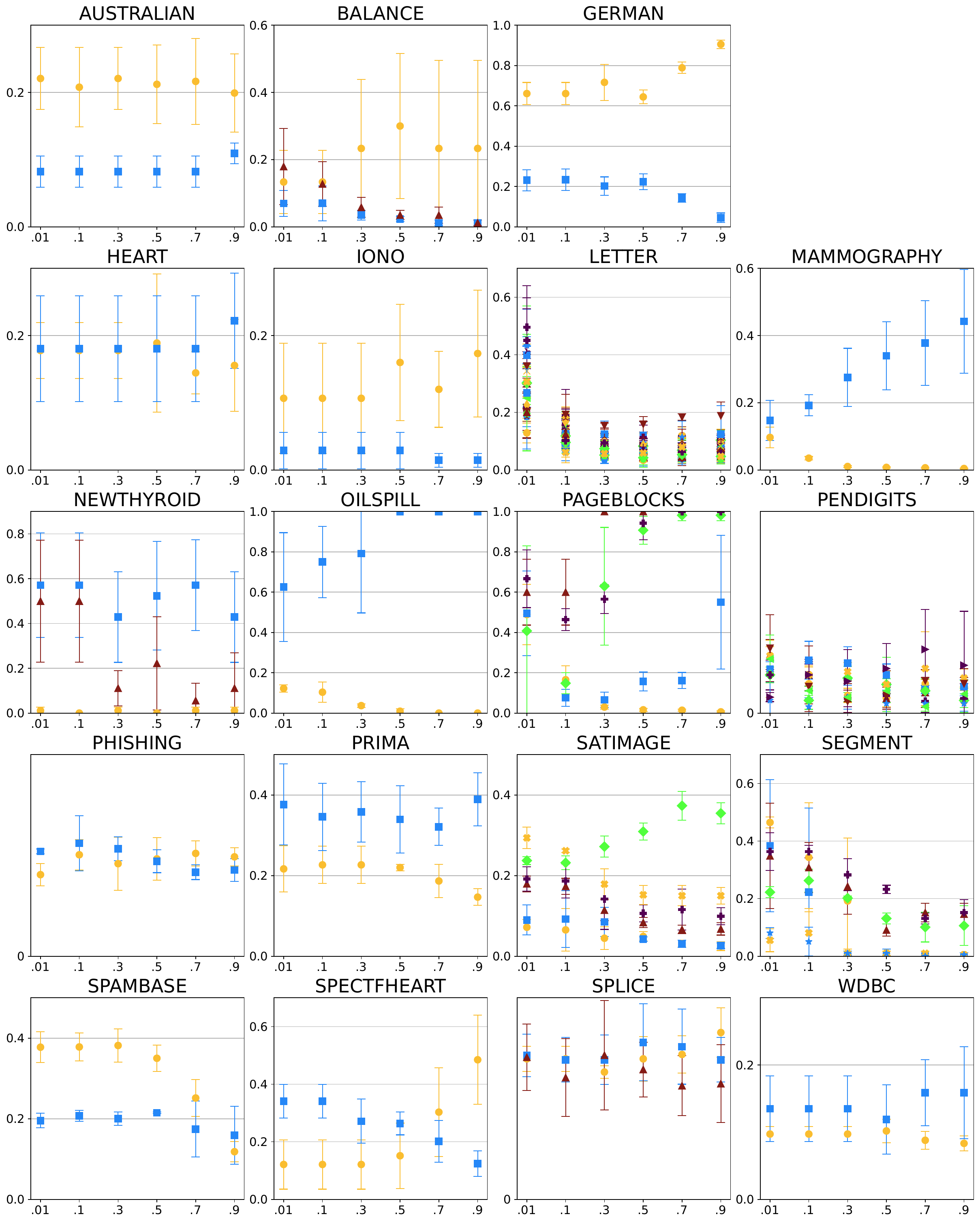}
    \caption{$\pi=$ class ratio}
  \end{subfigure}\hfill
  \begin{subfigure}{0.48\linewidth} 
    \centering
    \includegraphics[width=\linewidth,height=0.9\textheight,keepaspectratio]{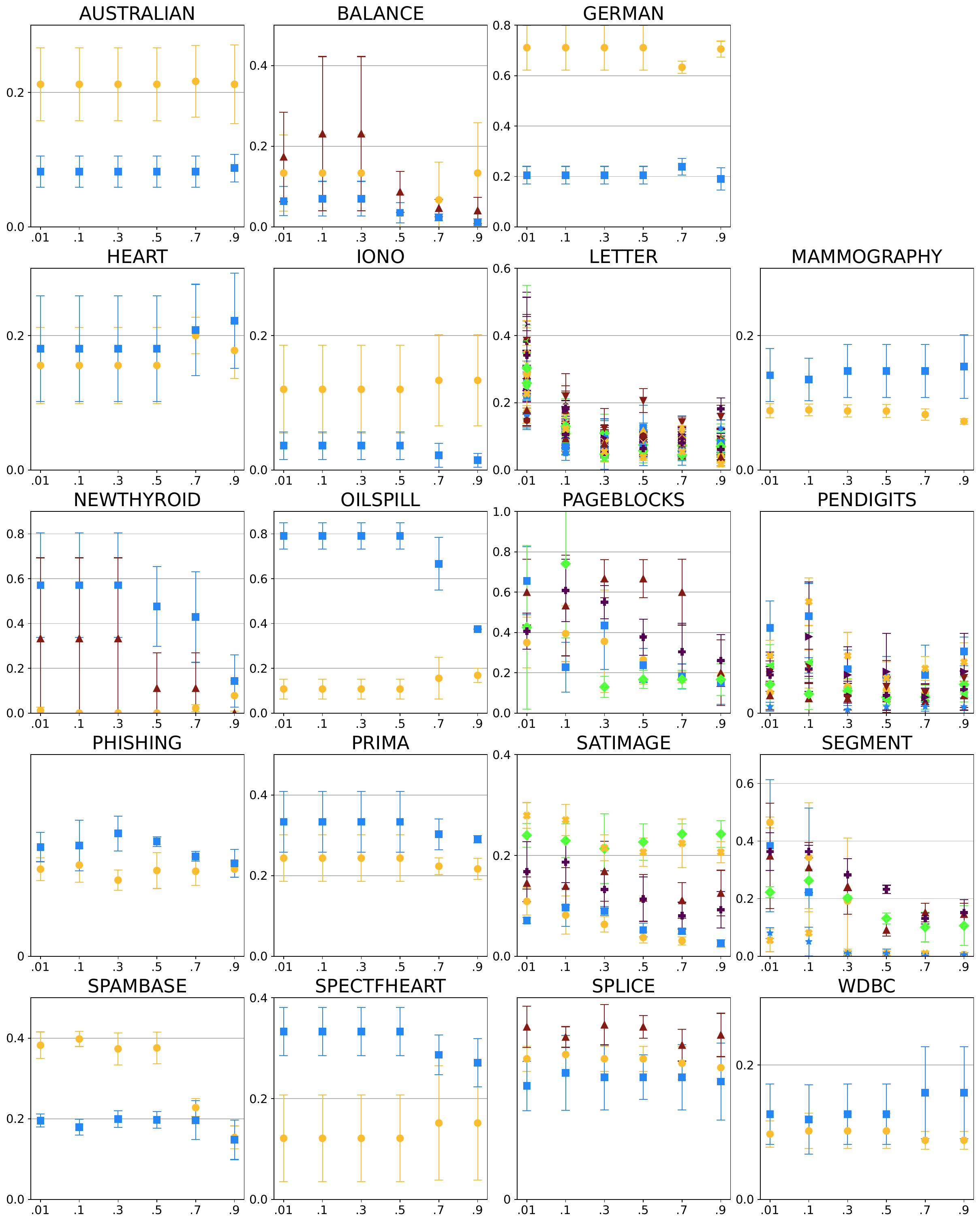}
    \caption{$\pi=$ uniform}
  \end{subfigure}
  \caption{\textbf{2-hidden layer MLP with EVaR.} Evolution of the class-wise error rates and standard deviation on the set $\T$ ($y$-axis) as a function of the parameter $\alpha$ ($x$-axis) with \Cref{cor:mca-dis}. 
  Each class is represented by different markers and colors.
  }
  \label{fig:appendix-risk-evar-neural}
\end{figure}
\end{landscape}

\newpage 
\thispagestyle{empty}
\begin{landscape}
\begin{figure}[p]
  \centering
  \begin{subfigure}{0.48\linewidth} 
    \centering
    \includegraphics[width=\linewidth,height=0.9\textheight,keepaspectratio]{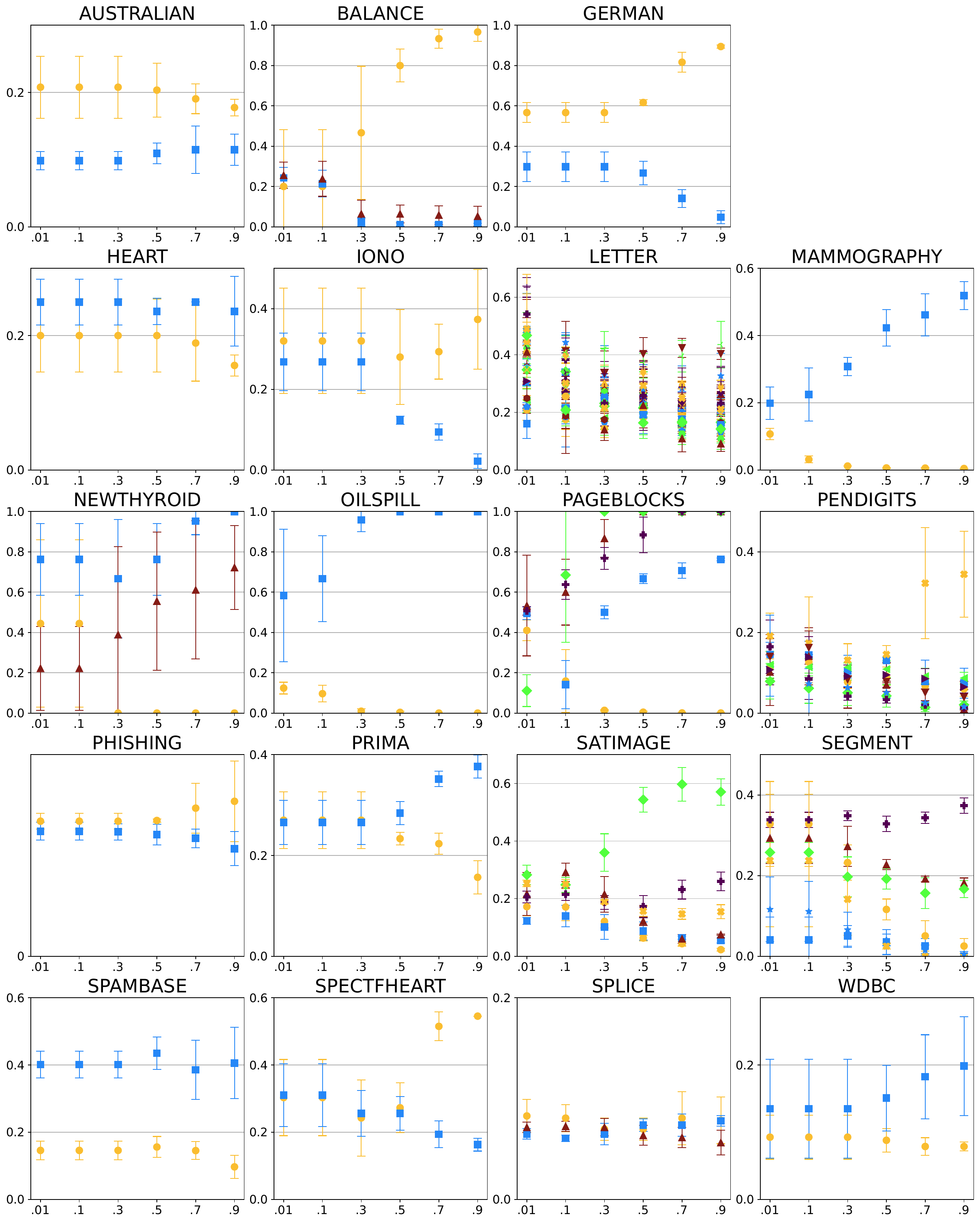}
    \caption{$\pi=$ class ratio}\hfill
  \end{subfigure}\hfill
  \begin{subfigure}{0.48\linewidth} 
    \centering
    \includegraphics[width=\linewidth,height=0.9\textheight,keepaspectratio]{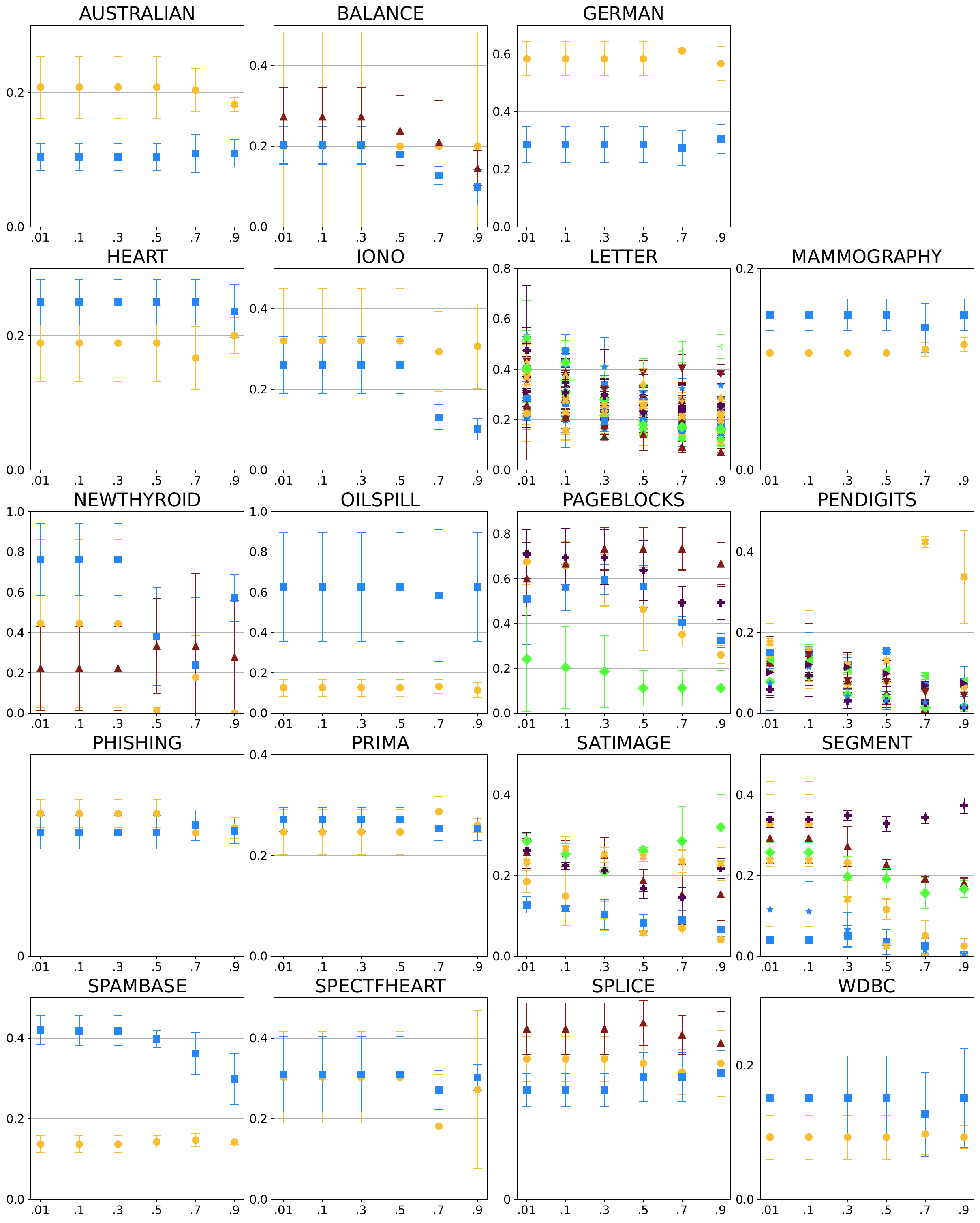}
    \caption{$\pi=$ uniform}
  \end{subfigure}
  \caption{\textbf{Perceptron MLP with EVaR.} Evolution of the class-wise error rates and standard deviation on the set $\T$ ($y$-axis) as a function of the parameter $\alpha$ ($x$-axis) with \Cref{cor:mca-dis}. 
  Each class is represented by different markers and colors.
  }
  \label{fig:appendix-risk-evar-linear}
\end{figure}
\end{landscape}

\end{document}